\newcommand{\ouralgo}{\textsc{RewardFairUCB}}
\DeclareMathOperator*{\argmax}{arg\,max}
\DeclareMathOperator*{\argmin}{arg\,min}
\definecolor{darkgreen}{rgb}{0,0.6,0}
\newcommand{\kibitz}[2]{\ifnum\Comments=1{\textcolor{#1}{{#2}}}\fi}
\newcommand{\gan}[1]{\kibitz{blue}{[GG:#1]}}
\newcommand{\himanshu}[1]{\kibitz{red}{[HIMANSHU:#1]}}
\newcommand{\piyushi}[1]{\kibitz{orange}{[Piyushi:#1]}}
	    \newcommand{\red}[1]{{\leavevmode\color{red}{#1}}}
\newcommand\tocite{\red{[CITE]}}
\newcommand\toref{\red{[REF]}}
\newtheorem{proposition}{Proposition}
\newenvironment{manualtheorem}[1]{%
  \manualtheoreminner
}{\endmanualtheoreminner}
\newtheorem{claim}{Claim}
\newtheorem{lemma}{Lemma}
\newtheorem{theorem}{Theorem}
\newtheorem{definition}{Definition}
\newcommand{\Amax}{\mathbf{A}^{\star}}
\renewcommand{\epsilon}{\varepsilon}
\gdef\@copyrightpermission{
  \begin{minipage}{0.2\columnwidth}
   \href{https://creativecommons.org/licenses/by/4.0/}{\includegraphics[width=0.90\textwidth]{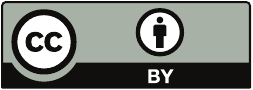}}
  \end{minipage}\hfill
  \begin{minipage}{0.8\columnwidth}
   \href{https://creativecommons.org/licenses/by/4.0/}{This work is licensed under a Creative Commons Attribution International 4.0 License.}
  \end{minipage}
  \vspace{5pt}
}
\title[Multi-agent Multi-armed Bandits with Minimum Reward Guarantee Fairness]{Multi-agent Multi-armed Bandits with Minimum Reward Guarantee Fairness}
\author{Piyushi Manupriya}
\affiliation{
  \institution{IIT Hyderabad}
  \city{Hyderabad}
  \country{India}}
\email{cs18m20p100002@iith.ac.in}
\author{Himanshu}
\affiliation{
\institution{IIT Hyderabad}
\city{Hyderabad}
\country{India}
}
\email{ai22mtech12008@iith.ac.in}
\author{SakethaNath Jagarlapudi}
\affiliation{
  \institution{IIT Hyderabad}
  \city{Hyderabad}
  \country{India}}
\email{saketha@cse.iith.ac.in}
\author{Ganesh Ghalme}
\affiliation{
  \institution{IIT Hyderabad}
  \city{Hyderabad}
  \country{India}}
\email{ganeshghalme@ai.iith.ac.in}
\begin{abstract}
We investigate the problem of maximizing social welfare while ensuring fairness in a multi-agent multi-armed bandit (MA-MAB) setting. In this problem, a centralized decision-maker takes actions over time, generating random rewards for various agents. Our goal is to maximize the sum of expected cumulative rewards,  a.k.a. social welfare, while ensuring that each agent receives an expected reward that is at least a constant fraction of the maximum possible expected reward.

Our proposed algorithm, {\sc RewardFairUCB}, leverages the Upper Confidence Bound (UCB) technique to achieve sublinear regret bounds for both fairness and social welfare. The fairness regret measures the positive difference between the minimum reward guarantee and the expected reward of a given policy, whereas the social welfare regret measures the difference between the social welfare of the optimal fair policy and that of the given policy.

We show that {\sc RewardFairUCB} algorithm achieves  instance-independent social welfare regret guarantees of $\tilde{O}(T^{1/2})$ and a fairness regret upper bound of $\tilde{O}(T^{3/4})$. We also give the  lower bound of  $\Omega(\sqrt{T})$ for both social welfare and fairness regret. We evaluate { \sc RewardFairUCB}'s performance against various baseline and heuristic algorithms using simulated data and real world data, highlighting trade-offs between fairness and social welfare regrets. 

\end{abstract}
\keywords{Multi-armed Bandits, Fairness, Regret Analysis, Multi-agent systems}
\newcommand{\BibTeX}{\rm B\kern-.05em{\sc i\kern-.025em b}\kern-.08em\TeX}
\begin{document}

%%% The following commands remove the headers in your paper. For final 
%%% papers, these will be inserted during the pagination process.

\pagestyle{fancy}
\fancyhead{}

%%% The next command prints the information defined in the preamble.

\maketitle 

%%%%%%%%%%%%%%%%%%%%%%%%%%%%%%%%%%%%%%%%%%%%%%%%%%%%%%%%%%%%%%%%%%%%%%%%

\section{Introduction}

% Two examples --- one used in simulation and another one (preferebly real worls example)
\gan{2 Pages}

\nocite*

In a classical stochastic Multi-armed Bandit (MAB) problem, a central decision maker takes an action or, equivalently,  selects an arm from a fixed set of arms at each of \( T \) time steps. Each arm pull yields a random reward from an unknown distribution. The objective is to develop a strategy for selecting arms that minimizes the regret; difference between the cumulative rewards of the best possible arm-pulling strategy in hindsight and the cumulative expected rewards achieved by the algorithm's policy.

\gan{PARA 2: MAMAB motivating examples}

% \himanshu{Sir, should we include "Capitalistic Policies with Communistic Elements for the Underprivileged" example? But this gives different direction than the current broadcast one.}
\gan{Lets avoid statements with philosophical innuendos}

 We study an interesting variant of stochastic MAB problem, first proposed by \citet{Hossain2020FairAF} and known as  Multi-agent Multi Armed Bandits (MA-MAB). In the MA-MAB setting  an arm pull generates a vector-valued reward whose each entry is independently sampled from a fixed but unknown distribution denoting reward obtained by corresponding agent. When there is a single agent, this setting reduces to a classical stochastic MAB setting. 
 % \piyushi{In Nisarg Shah's paper, stochastic MAB was motivated by citing papers that stress that winner-takes-all allocations can be considered unfair to the items in many applications and can lead to undesirable long-term dynamics. Should we similarly motivate why we work with stochastic MAB?}
 % \gan{added this later}

The  MA-MAB setting captures several interesting real-world applications. Consider, for instance,  the problem of distributing a fixed monthly/yearly budget, say one unit, among $k$ different projects. There are \( n \) beneficiaries (or agents) who each experience varying levels of benefit from the different projects.    % For instance, some agents may be relatively indifferent between the projects, whereas others have clear favourites. 
Each agent $i \in [n]$ receives a reward sampled independently from distribution $\mathcal{D}(\mu_{i,j})$ when the algorithm pulls an arm $j$ (or equivalently, selects project $j$) where $\mu_{i,j}$ denotes the mean reward for agent $i$ from arm $j$. The randomness in the reward received by agents may arise from uncertainty in the assessment of the value of the project by individual agents and randomness in the aggregation/reporting step. Given  a distribution  $\pi \in \Delta_{m}$ over the  set $[m]$ of arms,  the total expected reward to agent $i$ is given by $\sum_{j \in [m]} \mu_{i,j} \pi_j$.

Consider another example where a networked TV channel must decide which movie/program to telecast on  a given time slot. The different movie/program genres are the arms, whereas the  population group (based on age group, demographics, etc.) are the agents. The reward represents the preferences of the corresponding agent, a.k.a. age group.  The decision-maker's problem here is to telecast the most liked program that, at the same time, caters to the preferences of a diverse population.  We will return to this example in Section \ref{sec: simulation}.

%Each agent \( i \in [n] \) receives a reward sampled independently from the distribution \(\mathcal{D}(\mu_{i,j})\) when the algorithm selects project \( j \in [m] \). This reward variability can stem from uncertainties in how each agent values the project and variations in how rewards are aggregated or reported. Given a distribution \(\pi \in \Delta_{m}\), the total expected reward for agent \( i \) is given by \(\sum_{j \in [m]} \mu_{i,j} \pi_j\).

It is easy to see that when the goal is to maximize social welfare  \footnote{ Defined as the sum of cumulative expected reward.},  the resulting arm pull  strategy/allocation strategy  might become skewed. For example, consider a MA-MAB instance with \( n \) agents and \( m=2 \)  arms with  reward distributions such that \(\mu_{1,1} =1\) and \(\mu_{1,2} =0\), and \(\mu_{i,1} =0\) and \(\mu_{i,2} = 1/n\) for \(i >  1\). In this case, a social welfare maximizing  policy  would allocate the entire budget to the first project. Similarly,  select the movie genre most preferred by the entire population in the second example. However, this policy benefits only the first agent, leaving the vast majority of \(n-1\) agents without any reward.  Such winner-takes-all allocations can be considered unfair in many applications and can lead to undesirable long-term dynamics leading to mistrust towards the algorithm \cite{Hossain2020FairAF}.     %This approach can lead to situations where a certain section  of population does not benefit from the allocation. %The decision maker must  balance providing minimum opportunities for all stakeholders while maximizing overall profit. %For example, in the context of online recommendations, incentivizing small businesses by offering enough exposure to guarantee certain minimum rewards may not be enough to stimulate economic growth, as it could overlook the needs of larger businesses and corporations that contribute more to the social welfare significantly more than medium-scale corporations.
%The policymaker faces the  challenge of maximizing overall profit by  favouring the most rewarding arm, while also ensuring that individual agents are each guaranteed a certain minimum reward (i.e., receive guaranteed minimum opportunities).  
The  MA-MAB framework with fairness constraints facilitates the simultaneous optimization of both individual rewards and overall societal welfare.

    %For example, consider a scenario where agents represent businesses and policies represent tax incentive schemes. Each agent aims to maximize utility, such as profit and growth, while policymakers seek to ensure fairness and economic prosperity for all.     
%    By leveraging the MAMAB framework, policymakers can design policies that provide minimum support for small businesses while also incentivizing larger entrepreneurs, thus fostering a robust and inclusive economy.

    %Our proposed problem formulation aligns with this agenda by focusing on simultaneously providing minimum incentives for all agents and maximizing overall social welfare, ensuring a balanced and equitable approach to policy design in dynamic economic environments.

%\gan{Para 4: On minimum reward guarantee, why minimium guarantee makes sense in given examples }

\gan{para: introduce the fairness notion with intuitive explanation} 

 Consider a thought experiment where a single agent dictates the arm-pulling policy. \footnote{Alternatively, consider a scenario where there is only one agent, i.e., \( n = 1 \).}  This dictatorial agent would choose to pull her most rewarding arm at every time step. However, this policy completely disregards the preferences of other agents and thus fails to ensure any minimum reward for them. 
In this paper, we address the problem of ensuring a minimum reward guarantee for each agent as an explicit constraint. Specifically, each agent \( i \) is guaranteed at least a certain fraction \( C_i \in [0,1] \) of the maximum possible reward they could receive. %The goal of this paper is to develop algorithms that effectively balance the trade-off between maximizing social welfare and minimizing violations of the minimum reward guarantees in a Multi-armed Bandit (MAB) setting with unknown reward distributions. %For example, in the budget allocation scenario mentioned, the policies \( \pi_1 = (1/n^2, (n+1)/n^2, \cdots (n+1)/n^2) \) and \( \pi_2 = (1 - (n-1)/n^2, 1/n^2, 1/n^2, \cdots 1/n^2) \) both ensure that each agent receives at least \( 1/n^2 \) of their maximum possible reward. However, \( \pi_2 \) provides a significantly large  overall social welfare compared to \( \pi_1 \). In particular, $SW(\pi_1)/ SW(\pi_2) \rightarrow \infty$ as $n \rightarrow \infty$.  %For a large number of agents, the expected reward under policy \( \pi_1 \) is approximately \( 5/4 \), while for policy \( \pi_2 \), it is approximately \( 3n_1/4 \). 
%}

%When all the mean rewards are known, maximizing social welfare while satisfying minimum reward guarantee constraints can be formulated as a linear program (refer to Section \toref for details). 

% \gan{Para: exploration exploitation dilemma--- best arm discovery in the context of fairness}
%A classical stochastic MAB setting faces .   

\subsection{Related Work}
\label{sec:relWork}
\gan{@Piyushi: can you give a try at this?}
The stochastic multi-armed bandits (MAB) problem has been extensively studied with the goal of designing algorithms that optimally trade-off exploration and exploitation for maximizing the expected cumulative reward \cite{Lattimore2020BanditA, slivkins2024introductionmultiarmedbandits, bubeck2012regretanalysisstochasticnonstochastic}. The multi-agent multi-armed bandits (MA-MAB) variant \cite{Liu2009DistributedLI} involves multiple agents simultaneously solving a given instance of the MAB problem. Such a setting often demands providing reward fairness guarantees to each agent, besides maximizing the sum of expected cumulative rewards obtained by the agents. Several works have emerged focusing on fairness for the MAB problem \cite{JMLR:v22:20-704,wang2021fairness,sinha2023textttbanditq,baudry2024, porat21, liu2017calibrated, patil2022mitigating,krishna25pmean}. However, these approaches do not generalize to provide reward guarantees for different agents involved in the MA-MAB setup. Moreover, these formulations either focus on guaranteeing a certain fraction of arm pulls \cite{JMLR:v22:20-704,sinha2023textttbanditq, porat21},  constrain the deviation of the policy to a specific closed-form optimal policy \cite{wang2021fairness,baudry2024} or focus on meritocratic criteria in online resource allocation setting \cite{patil2022mitigating, liu2017calibrated}. 
% Recently, \citep{Hossain2020FairAF} studied algorithms for Nash Social Welfare based fairness notion in the MA-MAB setting. 

%We consider a minimum reward guarantee as a fairness constraint to be satisfied simultaneously for each agent. The minimum reward guarantee is specified in terms of the fraction of the maximum expected   reward  for a particular agent. Given a vector of fairness requirement, the objective is to maximize the social welfare  subject to minimum rewards of at least a given value for   every agent. 

The closest work to ours is by     \citet{Hossain2020FairAF} who proposed learning a policy over the $m$-arms that maximizes the Nash Social Welfare (NSW) involving the $n$ agents. \citet{Jones_Nguyen_Nguyen_2023} proposed a more efficient algorithm for the NSW-based MA-MAB problem and recently \citet{Zhang2024NoRegretLF} tried improving the corresponding regret bounds. While NSW objective is known to satisfy desirable fairness and welfare properties (see \cite{NSW, caragiannis2019unreasonable} for details), the fairness guarantees in NSW are implicit and cannot be specified externally. This may not always be desired. Consider a case with 2 agents with $\mu_{1, 1}=\mu_{2,2}=1$ and $\mu_{1, 2}=\mu_{2, 1}=0$ where the agents demand at least one-third and two-thirds of their maximum possible reward. Maximizing the NSW results in the policy $(1/2, 1/2)$ which does not satisfy the specified reward requirement for the   second agent. In contrast, our proposed MA-MAB formulation finds a policy that respects the reward allocations demanded by each agent, whenever it is possible to do so.

%\gan{Para 5: On social welfare maximization objective}

\gan{Para 6: Main results of the paper}
\subsection{Main Results and Organization of the Paper} We propose a novel formulation for the multi-agent multi-armed bandits (MA-MAB) problem to maximize social welfare obtained from the rewards while also guaranteeing each agent a specified fraction of their maximum possible reward. In Section~\ref{sec:settings_prelim}, we formally define the problem and provide sufficient conditions under which a fair MA-MAB instance is guaranteed to have a feasible solution. Then, in Section~\ref{sec:warmup}, we consider an MA-MAB instance with 2 arms and $n > 1$ agents and show that a simple {\sc Explore-First} algorithm achieves a simultaneous regret bound of $\tilde{O}(T^{2/3})$ for both fairness and social welfare. 

In Section~\ref{sec:UCB}, we propose the main algorithm of this paper, \ouralgo\ ,  and show that it achieves the regret guarantee of $\tilde{O}(\sqrt{T})$ for the social welfare regret and $\tilde{O}(T^{3/4})$ for the fairness regret. In Section~\ref{subsec:lowerBound}, we prove lower bounds of $\Omega (\sqrt{T})$ for both social welfare regret and fairness regret. These lower bounds hold independently for the regret notions. We then provide a dual formulation based heuristic algorithm in Section~\ref{sec:dual} that achieves a better regret performance on the simulated data and real-world datasets (Section~\ref{sec: simulation}).   The main results of the paper are summarized in  Table~\ref{tab:your_label}. 

%We derive sufficient conditions for the feasibility of this problem. We define notions of social welfare regret and fairness regret 
%and derive an instant-independent $\Omega(\sqrt{T})$ lower bounds for the regrets
%propose algorithms that achieve sublinear regrets. We begin with the Explore-First algorithm for the case of 2-arms. Our analysis characterizes the optimal policy and leads us to $\tilde{O}(T^{2/3})$ regret bounds for both social welfare and fairness. With the proposed \ouralgo, we obtain the social welfare regret of $\tilde{O}(\sqrt{T})$ and the fairness regret of $\tilde{O}(T^{3/4})$ The main results of the paper are summarized in  Table~\ref{tab:your_label}. 
% \begin{table*}[ht!]
%   \centering
%   \begin{tabular}{|c|c|c| p{ 60 mm}|}
%     \hline
%       & Social Welfare Regret  & Fairness  Regret & Remark\\ \hline
%      Lower Bound   & $\Omega(\sqrt{T})$  & $\Omega(\sqrt{T})$ &  The lower bounds hold individually for SW regret and fairness regret.     \\ \hline
%     \textsc{ExploreFirst}   & $\tilde{O} \big (\frac{n}{ \sqrt{a}} T^{2/3} \sqrt{\log(T)} \big )$     & $\tilde{O} \big (\frac{n}{ \sqrt{a}} T^{2/3} \sqrt{\log(T)} \big ) $ & Time Horizon aware, Only for 2 arms, only for $c_i \leq 1/2$.    \\ \hline
%     \ouralgo   & $\tilde{O}(n\sqrt{mT\log(mn/\delta)})$  &   $\tilde{O}(T^{3/4})$  & Time Horizon aware, any number of finite arms. Optimal (upto logaritmic factor) social welfare regret    \\ \hline
%   \end{tabular}
%   \caption{Key findings of the paper. }
%   \label{tab:your_label}
% \end{table*}

\begin{table*}[ht!]
  \centering
  \begin{tabular}{@{}lcc  p{72mm}@{}}
    \toprule
      & Social Welfare Regret  & Fairness  Regret & \multicolumn{1}{c}{Remark}\\ \toprule
     Lower Bound (Sec~\ref{subsec:lowerBound})  & $\Omega(\sqrt{T})$  & $\Omega(\sqrt{T})$ &  The lower bounds hold individually for social welfare regret and fairness regret.     \\ \midrule
    \textsc{Explore-First}   (Sec~\ref{sec:warmup}) & $\tilde{O}(T^{2/3})$     & $\tilde{O}(T^{2/3})$ & For two arms, $C_i=c\leq  0.5\ \forall i\in [n]$.\\ \midrule
    \ouralgo \ (Sec~\ref{sec:UCB})  & $\tilde{O}(\sqrt{T})$  &   $\tilde{O}(T^{3/4})$  & For any finite number of arms. Optimal (up to logarithmic factor) social welfare regret.  \\ \bottomrule
  \end{tabular}
  \caption{Key findings of the paper. \piyushi{May be we should remove time horizon aware and in discussion write about doubling trick.}}
  \label{tab:your_label}
\end{table*}

\section{Setting and Preliminaries}\label{sec:settings_prelim}

\gan{Para: Notation and basic setting }
We write $[n]$ to denote the set $\{1, 2, \cdots , n \}$. Further, we will assume that the set of agents ($[n]$) and the set of arms  ($[m]$) are both finite sets. 
%\himanshu{ For reference, we use the following notation conventions throughout the paper: \\
%- The notation \(\{..\}^t\) denotes the time step. \\
%- The notation \(\{..\}^\star\) is reserved for indicating the maximum operation, while \(*\) represents the optimal policy. \\
%- Additionally, we predominantly employ capital letters for notation.}
Let $\mathcal{D}(\mu_{i,j})$ denote a  probability distribution with finite  mean $\mu_{i,j}$. Further, let the random variable  $X_{i,j}$ denote a random reward obtained by agent $i$ from arm $j$, that is  $X_{i,j} \sim \mathcal{D}(\mu_{i,j})$. Finally, we  $X_i$  to denote a vector of size $m$ with $i^{\rm{th}}$ entry  $ X_{i,j}$.

               A fair Multi-Agent Multi-Armed Bandit (MA-MAB)  instance $\mathcal{I}$ is denoted by a tuple $ \langle A, C, T \rangle $ where,  
\begin{itemize}
    \item $A$ denotes an $n \times m$ non-negative matrix with each entry $A_{i,j} := \mu_{i,j}$. Note that $A$ is fixed but unknown to the algorithm. We will assume, without loss of generality \footnote{ It is easy to see that if all the  entries  are divided by the largest row entry, the optimal strategy does not change.}  that $A_{i,j} \in [0,1]$ for all $i \in [n]$ and $j \in [m]$.  Further, define   $\Amax$ to be a matrix in $[0,1]^n$ whose  $i^{\rm{th}}$ entry represents the maximum possible expected reward  to agent $i$. $\Amax_i = \max_{j \in [m]} A_{i,j}$.  
    \item $C $ denotes $n \times n$ non-negative diagonal matrix. The entry $C_i : = C_{i,i} \in [0,1]$ specifies the fraction of maximum possible rewards to be guaranteed to agent $i$. Note that $C$ is a pre-defined constant and does not change with time. 
    \item $T$ denotes the stopping time of the algorithm. We assume that $T$ is known a priori. However,  all our results can be extended to unknown time horizon setting using a doubling trick (see \cite{besson2018doubling}) with an additional constant multiple factor increase in the regret.  
\end{itemize}

We now formally define the notion of minimum-reward guarantee fairness regret for a given MA-MAB instance $\mathcal{I}$. We begin by first defining the notion of minimum-reward guarantee.   
%  by learning the mean rewards corresponding to a single arm elads to asymptotically highest reward to a particular arm. This leads to logarithmic regret in the fairness. 

\begin{definition}[Minimum Reward Fairness Guarantee]
Let $\mathcal{I} = \langle A, C, T \rangle$ be  a MA-MAB instance and let  $\Amax$ be the vector of maximum values from the corresponding row of $A$. We say  that a policy $\pi$ satisfies minimum reward fairness guarantee for    $\mathcal{I}$,  if 
\begin{equation}
A \pi \geq C   \Amax .
\end{equation}  
\end{definition}

Throughout the paper, we  will call a policy $\pi$ fair if it satisfies minimum fairness guarantees for any fair MA-MAB instance $\mathcal{I}$.

We observe that   there may not always exist a fair policy. Consider, an example with 2 agents and 2 arms with $A_{1,1} = A_{2,2} = 1$ and $A_{1,2} = A_{2,2} = 0$ and $C_{1} = C_{2}  = c\in [0, 1] $. It is straightforward to see that  no policy $\pi$ satisfies the minimum reward fairness guarantee for   instances with $c>0.5$.  However, a fair policy always exists for $c \leq 0.5$. In particular, $\pi = [0.5, 0.5]$ is one such  fair policy. 

In our first result of the paper, we provide sufficient conditions that guarantee the existence of fair policy for a given instance $\mathcal{I}$

% \piyushi{Theorem 1 should come after Definition 1.}
\begin{restatable}{theorem}{characterization}
\label{thm:characterization}
A fair MA-MAB instance $\mathcal{I}$ admits a fair policy if at least one of the below conditions is satisfied 
\begin{enumerate}
    \item $\sum_{i \in [n]} C_i \leq 1$, 
    \item $ C_{\max} := \max_{i\in [n]} C_i \leq \frac{1}{\min(n,m)}$. 
\end{enumerate}
\end{restatable}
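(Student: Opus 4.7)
My plan is to treat the two sufficient conditions as two separate constructive existence arguments, giving an explicit fair policy $\pi \in \Delta_m$ in each case. For each agent $i$, let $j_i^\star \in \arg\max_{j \in [m]} A_{i,j}$ denote one of $i$'s favourite arms, so that $A_{i, j_i^\star} = \Amax_i$. Throughout, I will only need to verify componentwise the inequality $A\pi \geq C \Amax$, and to observe that $\sum_j \pi_j \leq 1$ (any leftover mass can be dumped on an arbitrary arm without hurting the lower bound, since $A \geq 0$).

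\emph{Condition 1: $\sum_i C_i \leq 1$.} The natural construction is to let each agent ``reserve'' mass $C_i$ on its own favourite arm. For each arm $j$ that is favourite for some agent, set $\pi_j = \max_{i : j_i^\star = j} C_i$; set $\pi_j = 0$ on the remaining arms. Then
\[
\sum_{j} \pi_j \;=\; \sum_{j \in \{j_i^\star\}} \max_{i : j_i^\star = j} C_i \;\leq\; \sum_{i} C_i \;\leq\; 1,
\]
so the remaining mass can be assigned to any arm to make $\pi$ a valid distribution. For any agent $i$, using only the contribution of its own favourite arm,
\[
(A\pi)_i \;\geq\; A_{i, j_i^\star}\, \pi_{j_i^\star} \;\geq\; \Amax_i \cdot C_i,
\]
which is exactly the minimum-reward guarantee.

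\emph{Condition 2: $C_{\max} \leq 1/\min(n,m)$.} I will split on whether $n \leq m$ or $m \leq n$. If $n \leq m$ then $\min(n,m) = n$, so $\sum_i C_i \leq n\, C_{\max} \leq 1$ and we are back in the regime of Condition 1. If $m \leq n$ then $\min(n,m) = m$, and I will take the uniform policy $\pi_j = 1/m$ for all $j \in [m]$. For each agent $i$,
\[
(A\pi)_i \;=\; \tfrac{1}{m}\sum_{j \in [m]} A_{i,j} \;\geq\; \tfrac{1}{m}\, A_{i, j_i^\star} \;=\; \tfrac{\Amax_i}{m} \;\geq\; C_i\, \Amax_i,
\]
where the last step uses $C_i \leq C_{\max} \leq 1/m$. (If $\Amax_i = 0$ the inequality is trivial, so there is no issue with multiplying by $\Amax_i$.)

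The whole argument is elementary once one picks the right policy per case, so I do not anticipate a serious obstacle. The one mildly subtle point is the bookkeeping in Condition 1: several agents may share the same favourite arm, and a naive assignment $\pi_{j_i^\star} \mathrel{+}= C_i$ would double-count and could exceed $1$. The $\max$-based construction above sidesteps that. Everything else is a direct verification.
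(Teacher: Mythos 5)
Your proposal is correct and follows essentially the same route as the paper: both arguments are constructive, placing probability mass at least $C_i$ on each agent's favourite arm under Condition 1, and both handle Condition 2 identically (reduce $n\leq m$ to Condition 1, use the uniform policy when $m\leq n$). The only difference is cosmetic — the paper normalizes the reserved weights by $\sum_i C_i$ whereas you take a per-arm maximum and dump the leftover mass, which incidentally handles the degenerate case $\sum_i C_i = 0$ without the paper's separate footnote.
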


For the first condition,  observe that the policy  $   \pi_j = \frac{\sum_{i=1}^n \mathds{1}(j = j_i) C_i}{\sum_{i=1}^n C_i}$ where $j_i$ being the arm  with largest reward to agent $i$ is a fair policy \footnote{If $\sum_{i} C_i = 0$, then every policy $\pi \in \Delta_m$ is feasible. }. Under the second condition, uniform arm pull policy i.e., $ \pi_j = [1/m, \cdots, 1/m]$ is feasible.    A formal proof of Theorem~\ref{thm:characterization} is given in the Appendix. 

\himanshu{ Sir, should we talk about the geometric interpretation of the feasibility region of $\pi$ we discussed when we were thinking from policy perspective instead of social welfare perspective}

%The problem, becomes challenging when multiple agents obtain stochastically generated rewards and each agent is guaranteed a certain minimum expected reward as a fraction of maximum possible reward. 

%We call the arm that gives maximum reward to agent $i$ as the most preferred arm of $i$. 

    Under a learning setting where  the algorithm is not privy to $A$,  the algorithm must learn the policy ($\pi^t$) from the history of past pulls  and observed rewards, denoted by $\mathcal{H}^t$. More specifically,  for a given time instance $t$,  an arm pull strategy $\pi^t$ is a mapping,  $\pi^t: \mathcal{H}^t \rightarrow \Delta_{m}$.  The minimum-reward fairness regret of the policy $\pi:= (\pi^t)_{t\geq 1}$ is defined  the cumulative \emph{positive} difference between promised expected reward and the expected reward under the policy $\pi^t$.   

\iffalse 
The objective is to maximize the expected social welfare subject to fairness guarantee. \begin{align}
    \max_{\pi \in \Delta([k])}  & \sum_{ i\in [n]} \langle A_i, \pi \rangle \nonumber \\ 
    \text{subject to} & \nonumber \\ 
    A\pi &\geq C \cdot  \Amax \label{opt_problem}
\end{align}
Here, $C$ is a diagonal matrix with $C_{i,i} = C_i \leq \frac{1}{\min(n,k)}$. 
%\textcolor{red}{We can think of $C_{i,i} = C_i$ (each agent is guaranteed different fraction of max possible reward) in a more general setting but that's for later...}
\fi

\begin{definition}[Minimum-reward Fairness Regret]
Given a MA-MAB instance $\mathcal{I}$ and a policy $\pi$, the minimum-reward fairness regret of $\pi$ on instance $\mathcal{I}$ over $T$ time instances is given as  
\begin{equation}
\mathcal{R}_{\textsc{fr}}^\pi(T) = \sum_{t=1}^T \sum_{i=1}^n | 
\underbrace{C_i \Amax_i}_{\text{I}} - \underbrace{\mathbb{E}_{\pi^t} [X_i^t]}_{\text{II}} |_+ \ ,
\end{equation}
where \( |\cdot|_+ \equiv \max\{\cdot, 0\} \).
\end{definition}

The term labelled \textbf{I} indicates the minimum rewards as a fraction of the maximum possible expected reward that agent \(i\) is guaranteed, while the term labelled \textbf{II} represents the expected reward that agent \(i\) receives under the policy \(\pi^t\) at time \(t\). The use of the expression \( |\cdot|_+ \) allows us to capture the scenario where, if the reward received by the agent exceeds the minimum required to satisfy the fairness constraints, the fairness regret incurred is zero. Therefore, the total fairness regret is accumulated across all agents up to time \(T\), reflecting the extent to which the agents' reward under policy $\pi^t$ deviates from their minimum guarantees.

\subsection{Social Welfare Maximization with Minimum-reward Fairness Guarantee}
\label{sec:regret}

Let $\textsc{SW}_{\pi}(T) := \sum_{t=1}^T \sum_{ i\in [n]} \langle A_i, \pi^t \rangle$ represent the total expected social welfare achieved by the policy $\pi = (\pi^t)_{t \geq 1}$ over the time horizon,  $T$. %For a given strategy $\pi^t \in \Delta_m$, the expected reward for arm $i$, denoted as $r_i(t)$, is given by $r_i(t) = \sum_{j=1}^m \pi_j^t \mu_{i,j}$. %Also, note that $ \textsc{SW}_{\pi}(T) = \sum_{t = 1}^T \sum_{i=1}^n r_i(t)$.
\begin{figure}[H]
    \centering
% \begin{tcolorbox}[colback=white,colframe=black, width=\columnwidth]
\textbf{P1}
\begin{align}
\text{Maximize}_{ \pi = (\pi_1, \pi_2, \cdots, \pi_T) } \quad & \textsc{SW}_\pi(T)  \label{eq:POne} \\
\text{subject to} \quad & A \pi^t  \geq C \cdot  \Amax \ \ \  \forall t \in [T] \nonumber 
\end{align}
% \end{tcolorbox}
\Description{The original optimization problem.}
\end{figure}
It is easy to see that the optimal fair policy $\pi^* $ pulls each arm with the same probability in each round, i.e., $\pi^*_i = \pi^t_i$ for all $t$ since  matrices $A$ and $C$ are fixed.  

We further assume that the conditions in Theorem~\ref{thm:characterization} are satisfied, i.e., $\pi^*$ is well defined. The reward regret is defined as the cumulative loss in social welfare by not following the policy  $\pi^*$ at each time instant.

\paragraph{Connection with Nash Social Welfare:} The Nash Social Welfare (NSW) objective is known to satisfy fairness guarantees in resource allocation scenarios (see \cite{Hossain2020FairAF, caragiannis2019unreasonable} and references therein). However, it falls short in accommodating user-defined fairness requirements. Additionally, the primary aim of NSW is not to maximize social welfare, which is the central objective of our work. Interestingly, our proposed formulation reveals an equivalence with the Nash product. Specifically, 

\begin{align*}
P1 \equiv \argmax_{\pi\in\Delta_m}\ \underbrace{\underbrace{\Pi_{i=1}^n\left( e^{\langle A_i, \pi\rangle} \right)}_{\textup{Nash product for rewards}} \underbrace{\Pi_{i=1}^n\left( \mathbbm{1}_{\langle A_i,\pi\rangle-C_i\Amax_i\geq 0} \right) }_{\textup{Nash product for fairness}}}_{\textup{Nash product for rewards and fairness}}.
\end{align*}

In this formulation, the Nash product for fairness reaches its maximum value of one only when the fairness guarantees are met for all agents. Provided that a feasible policy exists, the Nash product for rewards in our formulation can be interpreted as an NSW configuration, where the agents' rewards are exponentiated. 

Next,  we define the social welfare regret as the additional loss incurred by the algorithm as compared to the optimal fair policy in hindsight.

\begin{definition}[Social Welfare Regret]
Let  $\pi^*$ be an optimal policy (solution of problem P1) for a given MA-MAB instance $\mathcal{I}$.  Further, let  $\pi = (\pi^t)_{t\geq 1}$ be an arm pull strategy. The social welfare regret of $\pi$ on instance $\mathcal{I}$ over time horizon $T$ is defined as  
\begin{equation}
\mathcal{R}_\textsc{SW}^\pi (T) =  T \cdot  \textsc{SW}(\pi^*) - \sum_{t=1}^T \textsc{SW}(\pi^t)  
\end{equation}
\end{definition}

We drop the superscript in the notation of fairness regret and social welfare regret  whenever the arm-pull strategy  $\pi$ is clear from the context. Note that the expected cumulative SW regret could well  be negative, in which case the policy $\pi$ generates more social welfare than an optimal fair policy at the cost of fairness regret. %Next, we provide sufficient conditions for a feasible  solution to exist for optimization problem P1. 

%\section{Results}
%\label{sec:preliminaries}
\iffalse 
\begin{definition}[Proportionality]
$A \pi \geq  \frac{1}{\min(n,k)} A \mathds{1}$. 

Here $\mathds{1}$ is all-ones vector.
\end{definition}
\fi

Throughout the paper, we will consider that the reward functions $X_{i,j}$'s are  sub-gaussian  random variables with finite and positive mean. 

\begin{definition}[Sub-gaussian Rewards]
 We call $X$  a sub-gaussian random variable if  there is a positive constant $\sigma$ such that for every $\lambda \in \mathbb{R}$,  we have 
\begin{equation}
    \mathbb{E}\left[\exp{\left(\lambda(X-\mathbb{E}[X])\right)}\right]\leq \exp(\lambda^2\sigma^2/2).
\end{equation}
\end{definition}

Sub-gaussian random variables encompass a diverse range of distributions, including Bernoulli random variables. More generally, any random variable bounded in $[a, b]$ is $\sigma$-sub-gaussian with $\sigma=\frac{(b-a)}{2}$. The sub-gaussian property ensures that the probability of extreme reward values is minimized, which contributes to better reward guarantees. This characteristic is particularly advantageous for designing and analyzing learning algorithms, allowing us to consider a more general class of reward distributions.

\section{Warmup:  Two Arms Case}\label{sec:warmup}
%Our goal is to design a policy $(\pi^t)_{t\geq 1}$ such that it simultaneously achieves optimal welfare regret guarantee and zero asymptotic fairness regret guarantee. That is $\lim_{T \rightarrow \infty} \mathcal{R}_{fr}(T) =0 $. 

In this section, we consider a simple MA-MAB setup with 2 arms and $n$ agents. This setup allows us to write the optimal fair policy in tractable mathematical form. We also provide our first algorithm, {\sc Explore-First}.

Consider the MA-MAB instance with $2$ arms and $n $ agents.
Index the agents such that the first $ n_1$ agents prefer arm 1 and the next $n - n_1$ agents prefer arm 2. Note that when $n_1 = n$ (when $n_1 = 0$), we have that all the agents prefer arm 1 (arm 2) and the optimal fair policy, in this case, is straightforward: pull arm $1$ (or arm 2 respectively) with probability 1. Hence, without loss of generality, let $0 <   n_1 < n$. That is,  $ \Amax_i  = A_{i,1} (\geq A_{i,2})$ for all $i \in [n_1]$, $ \Amax_i  = A_{i,2} (> A_{i,1})$ for all $i \notin [n_1]$.  Further assume without loss of generality that arm $1$ is an optimal arm i.e. $\sum_{i \in [n]} A_{i,1} \geq \sum_{i \in [n]} A_{i,2} $ and let $[x^*, 1-x^*]$ be the optimal arm pulling policy. 

To characterize the optimal arm pulling strategy in the two-arms case,  first observe the following property of the optimal fair policy. An optimal fair policy pulls a sub-optimal arm (arm 2) with a nonzero  probability i.e. $1-x^* > 0$ only  when the minimum reward  fairness guarantee  is violated for some agent.   
With this intuitive understanding,  we now characterize the optimal policy $[x^*, 1-x^*]$. 

Let $\Delta:=\sum_{i=1}^n (A_{i,1} - A_{i,2}) >0$. The regret of the policy $\pi$  can be written in terms of $\Delta$ as follows  
\begin{equation}
    \mathcal{R}_\textsc{SW}(T) = \sum_{t=1}^T [x^* - x^t] \Delta. 
    \label{eq:eqnClosedFormSW}
\end{equation}
Here, $x^t$ is the probability of pulling arm $1$ at time $t$.

\begin{restatable}{lemma}{propOne} The optimal feasible policy of a  fair MA-MAB instance  $\mathcal{I}$ with two arms is given by 
\label{prop:One}
    %Let $A$ be the reward matrix and $c := (c_1, c_2, \cdots c_n) \geq  \boldsymbol{0}$ be the fairness vector such that $c_i \leq 1/2$ for all $i \in [n]$. Then the optimal policy is given by 
    \begin{equation}
    x^* = \min  \Bigg ( 1, \min_{i \in [n] \setminus [n_1]}\frac{ 1- C_i}{ 1- \frac{A_{i,1}}{A_{i,2}}} \Bigg ).
    \end{equation}
\end{restatable}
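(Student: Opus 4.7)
The plan is to reduce problem P1 in the two-arm case to a one-dimensional linear program over the scalar $x \in [0,1]$ and read off the optimum from the constraint geometry. Any two-arm policy $\pi = [x, 1-x]$ is parameterized by a single number, and the instantaneous social welfare equals $x\sum_i A_{i,1} + (1-x)\sum_i A_{i,2}$, a linear function of $x$ whose slope is exactly the quantity $\Delta > 0$ appearing in \eqref{eq:eqnClosedFormSW}. Hence maximizing social welfare amounts to making $x$ as large as possible subject to the fairness constraints and $x \leq 1$.

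Next I would translate each per-agent constraint $A_{i,1}x + A_{i,2}(1-x) \geq C_i \Amax_i$ into a one-sided bound on $x$, splitting by which arm the agent prefers. For $i \in [n_1]$ we have $\Amax_i = A_{i,1} \geq A_{i,2}$, so the constraint rearranges to the lower bound $x \geq (C_i A_{i,1} - A_{i,2})/(A_{i,1} - A_{i,2})$ when $A_{i,1} > A_{i,2}$ (and is vacuous when $A_{i,1} = A_{i,2}$); this lower bound is automatically satisfied at $x = 1$ because $C_i \leq 1$. For $i \notin [n_1]$ we have $\Amax_i = A_{i,2} > A_{i,1}$, so the same constraint algebraically becomes
\[
x \;\leq\; \frac{A_{i,2}(1 - C_i)}{A_{i,2} - A_{i,1}} \;=\; \frac{1 - C_i}{1 - A_{i,1}/A_{i,2}}.
\]
Intersecting these upper bounds with the trivial $x \leq 1$ gives precisely the right-hand side of the claimed expression for $x^*$.

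To finish, I would combine monotonicity with feasibility. Strict monotonicity of the objective in $x$, guaranteed by $\Delta > 0$, implies the optimum lies at the largest feasible value of $x$, which by the previous step equals the minimum of all the upper bounds. The main subtlety, and the step I expect to need the most care, is checking that this $x^*$ simultaneously meets the lower bounds contributed by agents in $[n_1]$. Here I would invoke the standing assumption that the instance admits a fair policy (Theorem~\ref{thm:characterization}), which ensures that the maximum lower bound does not exceed the minimum upper bound; hence the feasible interval is nonempty and contains $x^*$. Thus $x^*$ as stated is both feasible and welfare-optimal, completing the proof.
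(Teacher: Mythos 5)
Your proposal is correct and follows essentially the same route as the paper's proof: rewrite each agent's fairness constraint as a one-sided bound on $x$ (lower bounds for $i\in[n_1]$, upper bounds for $i\notin[n_1]$), observe that the feasible set is the resulting interval intersected with $[0,1]$, and use monotonicity of social welfare in $x$ (since arm~1 is optimal, $\Delta>0$) to place the optimum at the upper endpoint. Your explicit appeal to the standing feasibility assumption to guarantee the lower bounds are met at $x^*$ is a point the paper leaves implicit, but it is the same argument.
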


%It is clear that the policy $x = 1/2 $ is a feasible policy, however, it may not be an optimal policy. To see this, note that the social welfare of the policy $x=1/2$ is given by $\frac{1}{2} \sum_{i=1}^n a_{i,1} + \frac{1}{2} \sum_{i=1}^n a_{i,2} $ whereas the social welfare of the optimal policy is given by $x^* \sum_{i=1}^n a_{i,1} + (1-x^*)  \sum_{i=1}^n a_{i,2}$. The difference in social welfare is given as $(x^* -1/2) (\sum_{i=1}^n a_{i,1} - \sum_{i=1}^n a_{i,2} )$. This is a non-negative value since by assumption we have that arm 1 is optimal $i.e.$,  $\sum_{i=1}^n a_{i,1} > \sum_{i=1}^n a_{i,2} $ and  $x^* \geq  1/2$. 

The proof of Lemma~\ref{prop:One} is given in the Appendix. We are now ready to present our first algorithm that achieves a  sublinear regret guarantee. 

\begin{algorithm}
\caption{\textsc{Explore-First}}
\label{algOne-EF}
\begin{algorithmic}[1]
\STATE \textbf{Require:} $T, C$.
\FOR{$t = 1,2, \cdots, \lfloor T^{\alpha} \rfloor $}  
%- $M_{\ell} = |\{ i: b_{i\ell} =1 \}| $ %Count the number of $1$'s in column $\ell$ \newline 
\STATE Pull arm $i = t\mod(2)+1$. % +1 AS WE HAVE INDEXED ARMS AS 1, 2.
\ENDFOR 
 \STATE Compute the estimated reward matrix $\widehat{A}$ of the rewards observed so far.
\STATE Compute $x'=\min \Bigg ( 1, \min_{ i: \widehat{A}_{i,2} > \widehat{A}_{i,1} }\frac{1 - C_i}{1 - \frac{\widehat{A}_{i,1}}{\widehat{A}_{i,2}} }\Bigg ).$
\FOR{$t = \lfloor T^{\alpha} \rfloor + 1, \lfloor T^{\alpha} \rfloor + 2, \cdots,  T $}  
%- $M_{\ell} = |\{ i: b_{i\ell} =1 \}| $ %Count the number of $1$'s in column $\ell$ \newline \\  
\STATE  Pull arm $1$ with probability $x'$   and   arm $2$ 
with probability $1-x'$.     
\ENDFOR 
 \end{algorithmic}
\end{algorithm}
%\subsection{\textsc{ExploreFirst} algorithm for general case}
%\gan{I dont think we have a proof for this. Atleast we can show the empirical results for this. }

\iffalse 
\begin{algorithm}[H]
\SetAlgoLined
  \KwIn{ $T, c$ }
\caption{\textsc{Explore-First}}
\label{algOne-EF}
%{\bf Initialize:} $ \Amax ,    $\;
\For{$t = 1,2, \cdots \lfloor T^{\alpha} \rfloor $}{ 
%- $M_{\ell} = |\{ i: b_{i\ell} =1 \}| $ %Count the number of $1$'s in column $\ell$ \newline 
- Pull arm $1$ with probability $1/2$  
}
%- Let $U = (u_{i,j})_{i \in [n], j\in [2]}$ and $L = (u_{i,j})_{i \in [n], j\in [2]}$ \\
Compute the estimated reward matrix $\widehat{A}$ of the rewards observed so far \\
Compute the probability$(x^t)=\min_{ i: \widehat{a}_{i,2} > \widehat{a}_{i,1} } \Bigg ( 1, \frac{1 - c_i}{1 - \frac{\widehat{a}_{i,1}}{\widehat{a}_{i,2}} }\Bigg )$\\
\For{$t = \lfloor T^{\alpha} \rfloor + 1, \lfloor T^{\alpha} \rfloor + 2, \cdots  T $}{ 
%- $M_{\ell} = |\{ i: b_{i\ell} =1 \}| $ %Count the number of $1$'s in column $\ell$ \newline \\  
- Pull arm $1$ with probability $x^t$  
} 
 \end{algorithm}
 \fi 
 
 \iffalse 
\textcolor{red}{TODO: }
 \begin{itemize}
     \item Simulate with different values of $\alpha$ and plot the following
     \begin{itemize}
         \item SW regret for different values of $\alpha$, $T$, and $c$
         \item Fairness regret regret for different values of $\alpha$, $T$, and $c$
         \item The tradeoff and asymptotic optimality 
     \end{itemize}
 \end{itemize}
 \fi 
 
 \subsection{ Regret Analysis of Explore-First Algorithm }

\iffalse 
\begin{lemma}
    Under the conditions of Claim \ref{lem:Two} above, we have
    \begin{equation}
    \mathbb{P}( \{ \widehat{a}_{i,2} \geq (1+ \delta) a_{i,2}\} \cap \{ \widehat{a}_{i,1} <  (1+ \delta) a_{i,1}\}) \leq 2 \exp(-n\delta^2 a_{i,1}/2). \end{equation}    \label{lem:Three}
\end{lemma}

\begin{proof}
\textcolor{red}{@Himanshu: complete this proof. }
\end{proof}
\fi

The \textsc{Explore-First} algorithm addresses the exploration-exploitation tradeoff effectively by delineating the exploration phase from the exploitation phase. During the exploration phase, the algorithm employs a round-robin strategy to pull each arm for $\lfloor T^{\alpha} \rfloor$ rounds. This approach ensures that each arm is sampled sufficiently, yielding more accurate estimates of each arm's reward. However, this phase does not prioritize the arm with the highest reward and does not guarantee immediate rewards for the agents.

In the subsequent exploitation phase, the algorithm utilizes these reward estimates to solve an optimization problem P1. For the specific case of two arms, a closed-form solution is given in Line 6. The optimal fair policy derived from this solution is then used to determine the arm pulls for the remaining rounds.

It is important to note that the regrets associated with both social welfare and fairness are influenced by the choice of the parameter $\alpha$. Specifically, the regret incurred during the exploration phase is proportional to $T^{\alpha}$ in both cases. Thus, a larger value of $\alpha$ results in higher regret due to the increased duration of the exploration phase. Conversely, if $\alpha$ is too small, the estimates of the arm rewards may not be sufficiently accurate, leading to suboptimal decisions in the exploitation phase and, consequently, higher regret. This tradeoff highlights the importance of carefully choosing $\alpha$ to obtain a  balance between accurate reward estimation and minimizing regret.

\noindent

%We now form the lower bound  so we need to find the lower bound on this ratio (using Chernoff's tail bounds above); replace the lower bound to get appropriate result. 

\iffalse 
\begin{restatable}{theorem}{ExploreFirstRegret}
\label{thm:ExploreFirstRegret}
The  \textsc{Explore-First} algorithm achieves expected social welfare regret of $\tilde{O} \big (\frac{n}{ \sqrt{a}} T^{2/3} \sqrt{\log(T)} \big )  $. 
\end{restatable}\fi 

\begin{restatable}{theorem}{ExploreFirstFairnessRegret} [Informal]
\label{thm:ExploreFirstRegret}
The  \textsc{Explore-First} algorithm achieves, 

\begin{enumerate}
    \item expected social welfare  regret of $O \Big(\frac{n}{a_{\min}} T^{2/3} \sqrt{\log(T)} \Big)  $,  and 
    \item expected fairness regret of $O \Big(\frac{n}{a_{\min}} T^{2/3} \sqrt{\log(T)} \Big)  $,\\
    where $a_{\min}=\min_{i,j}A_{i,j}>0$. 
\end{enumerate} 
\end{restatable}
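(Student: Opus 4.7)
I would split the horizon into the exploration window of length $T_0 := \lfloor T^{\alpha} \rfloor$ and the exploitation window of length $T - T_0$, and bound the contribution of each to both regret notions. In the exploration window the algorithm plays the fixed policy $(1/2,1/2)$, so using $\Delta = \sum_i (A_{i,1}-A_{i,2}) \le n$ and Equation~\eqref{eq:eqnClosedFormSW}, the per-round social welfare regret is at most $\Delta/2 \le n/2$, contributing $O(n T^{\alpha})$; an analogous per-round bound (at most $\sum_i C_i\Amax_i \le n$) gives the same $O(n T^{\alpha})$ contribution to the fairness regret. All remaining work therefore concerns the exploitation window.

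For the exploitation window the plan is to first obtain a high-probability uniform bound on the estimation error. Each arm is sampled $\Theta(T^{\alpha})$ times, so by a sub-Gaussian concentration (Hoeffding) inequality together with a union bound over the $nm$ entries of $\widehat{A}$, the event
\begin{equation}
\mathcal{E} \;:=\; \Big\{ |\widehat{A}_{i,j}-A_{i,j}| \le \varepsilon \ \ \forall i,j \Big\},\qquad \varepsilon = O\!\left(\sqrt{\log(nmT)/T^{\alpha}}\right),
\end{equation}
holds with probability at least $1-1/T$. The failure event contributes $O(1)$ to the expected regret and can be absorbed, so I will work on $\mathcal{E}$.

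The core technical step is to translate the entrywise error $\varepsilon$ into a bound on $|x'-x^{*}|$ using the closed form in Lemma~\ref{prop:One}. Writing $f_i(u,v) := (1-C_i)/(1-u/v)$, the argument of the outer $\min$ in $x^{*}$ and in $x'$ is $f_i(A_{i,1},A_{i,2})$ and $f_i(\widehat{A}_{i,1},\widehat{A}_{i,2})$ respectively (restricted to the set of agents with $A_{i,2}>A_{i,1}$). Since $1-x \mapsto \min(1,x)$ is $1$-Lipschitz, it suffices to bound $|f_i(\widehat{A}_{i,1},\widehat{A}_{i,2})-f_i(A_{i,1},A_{i,2})|$. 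A direct computation of the gradient of $f_i$ and the use of $A_{i,j} \ge a_{\min}$ (so denominators and $A_{i,2}-A_{i,1}$ related quantities are bounded away from zero up to constants depending on $1/a_{\min}$) yields a Lipschitz constant of order $1/a_{\min}$, hence $|x'-x^{*}| = O(\varepsilon/a_{\min})$ on $\mathcal{E}$. Plugging this into Equation~\eqref{eq:eqnClosedFormSW} gives a per-round social welfare regret of $O(n \varepsilon/a_{\min})$, so the exploitation contribution is $O(\tfrac{n}{a_{\min}} T \sqrt{\log T / T^{\alpha}})$.

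For the fairness regret I would argue analogously: on $\mathcal{E}$ the perturbation $|x'-x^{*}|$ moves the vector $A\pi'$ by at most $O(\varepsilon/a_{\min})$ coordinatewise, so $|C_i\Amax_i - (A\pi')_i|_{+}$ is bounded by the same quantity, and summing over $i$ and over rounds gives an identical $O(\tfrac{n}{a_{\min}} T \sqrt{\log T / T^{\alpha}})$ term. Adding the exploration and exploitation contributions and balancing $T^{\alpha} = T/\sqrt{T^{\alpha}}$ yields $\alpha = 2/3$ and the claimed $O(\tfrac{n}{a_{\min}} T^{2/3}\sqrt{\log T})$ bounds for both regrets. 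I expect the main obstacle to be the Lipschitz analysis of the closed form $f_i$: the ratio $\widehat{A}_{i,1}/\widehat{A}_{i,2}$ and the difference $1-\widehat{A}_{i,1}/\widehat{A}_{i,2}$ can be nearly zero, and it is precisely here that the $1/a_{\min}$ factor materialises; care is also needed when the identity of the argmin agent differs between $x'$ and $x^{*}$, but this is handled by a standard case analysis using that the minimum is $1$-Lipschitz.
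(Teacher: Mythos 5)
Your decomposition (exploration block of length $T^{\alpha}$ costing $O(nT^{\alpha})$, exploitation block controlled by the estimation error, balance at $\alpha=2/3$) matches the paper's, but your core step is genuinely different. The paper does not use an additive confidence band plus a Lipschitz analysis of the closed form; it proves a one-sided multiplicative concentration statement (Lemma~\ref{lem:lemOne}, via Chernoff upper/lower tail bounds applied to the ratios $\widehat{A}_{i,1}/\widehat{A}_{i,2}$) showing $x^{*}-x^{t}\leq \frac{2c\delta}{(1+\delta)(1-c)}$ with probability $1-2\exp(-t'\delta^{2}a_{\min}^{2}/(2\sigma^{2}))$, and then, for the fairness regret, separately bounds $x^{t}-x^{*}$ through additional events ($F_\delta$, and the event that the empirical set $\{i:\widehat{A}_{i,2}>\widehat{A}_{i,1}\}$ is nonempty), combined with Lemma~\ref{fairness-reform}, which reduces the fairness regret to $\sum_{t,i}|(A_{i,1}-A_{i,2})(x^{*}-x^{t})|_{+}$. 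Your two-sided bound $|x'-x^{*}|=O(\varepsilon/a_{\min})$ would subsume both one-sided bounds at once, and your feasibility-of-$\pi^{*}$ argument for the fairness term is a clean substitute for Lemma~\ref{fairness-reform}; the paper's route, in exchange, never has to control the clipped closed form directly and instead tracks which agent attains the minimum via an explicit case analysis.

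The one place your plan does not yet close is exactly the step you flag: the claimed Lipschitz constant $O(1/a_{\min})$ for $f_i(u,v)=(1-C_i)/(1-u/v)$ is not justified by ``$A_{i,j}\geq a_{\min}$,'' because the gap $A_{i,2}-A_{i,1}$ (equivalently $1-A_{i,1}/A_{i,2}$) can be arbitrarily small regardless of $a_{\min}$, and $f_i$ has unbounded gradient there. The bound is nevertheless true, but for a reason you must make explicit: after clipping at $1$, the only regime in which $f_i$ can influence $x^{*}$ or $x'$ is $f_i\leq 1$, which forces $1-A_{i,1}/A_{i,2}\geq 1-C_i\geq 1/2$ under the standing assumption $C_i=c\leq 1/2$, so the dangerous denominator is bounded below whenever it matters; $a_{\min}$ enters only through the perturbation of the ratio, $|\widehat{A}_{i,1}/\widehat{A}_{i,2}-A_{i,1}/A_{i,2}|=O(\varepsilon/a_{\min})$. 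The same observation disposes of the misclassification cases (an agent lying in one of the sets $\{A_{i,2}>A_{i,1}\}$ and $\{\widehat{A}_{i,2}>\widehat{A}_{i,1}\}$ but not the other): on your event $\mathcal{E}$ such an agent either contributes a clipped value of $1$ on both sides or cannot exist once $\varepsilon$ is small relative to $a_{\min}$. Without spelling this out --- and without noting that the whole argument needs $c$ bounded away from $1$, which is where the paper's $1/(1-c)$ factors come from --- the central inequality $|x'-x^{*}|=O(\varepsilon/a_{\min})$ remains an assertion rather than a proof.
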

Detailed proof of Theorem~\ref{thm:ExploreFirstRegret} is given in the Appendix. It is easy to see that the \textsc{Explore-First} algorithm is inadequate for both fairness and social welfare. Firstly, observe that the algorithm fails to collect information gathered during the exploit phase and, thus, ceases learning after the exploration phase. This impacts both social welfare and fairness regret guarantees, as inaccurate estimates can result in suboptimal policies. Next, we propose a UCB-based policy that provides a better tradeoff in terms of social welfare regret and fairness regret. 

\iffalse  
 This implies that for any agent $i$ we have 
\begin{align*}
\frac{1-c}{1- \frac{a_{i,1} -  \varepsilon }{a_{i,2}  +  \varepsilon} } \leq \frac{1- c} { 1 - \frac{\widehat{a}_{i,1}}{\widehat{a}_{i,2}} } \leq \frac{1-c}{ 1 - \frac{a_{i,1} + \varepsilon }{a_{i,2}  -  \varepsilon}}
\end{align*}
 
 \section{$\epsilon$-Greedy algorithms}
\begin{algorithm}[H]
\SetAlgoLined

  \KwIn{ $c$ }
\caption{\textsc{EpsilonGreedy}}
\label{algOne-EG}
%{\bf Initialize:} $ \Amax ,    $\;
\For{$t \leq t' $ where $t'$ is the first time such that  reward is non-zero for some arm }{ 
%- $M_{\ell} = |\{ i: b_{i\ell} =1 \}| $ %Count the number of $1$'s in column $\ell$ \newline \\  
- Pull  arm  with  probability $1/2$\;
}
- Let $\widehat{a}_1$ and $\widehat{a}_2$ be empirical reward vectors of arm $1$ and 2 respectively \;

%- Let $U = (u_{i,j})_{i \in [n], j\in [2]}$ and $L = (u_{i,j})_{i \in [n], j\in [2]}$
\For{$t > t'$}{ 
- Let  $\widehat{\pi}$ be the solution of LP (equation \ref{opt_problem}) with $\widehat{A} = (\widehat{a}_1, \widehat{a}_2)$\; 
%- $M_{\ell} = |\{ i: b_{i\ell} =1 \}| $ %Count the number of $1$'s in column $\ell$ \newline \\  
- Pull arm $1$ with probability $ \varepsilon^t \cdot  \widehat{\pi} + \frac{(1- \varepsilon^t)}{2} $\;
- Let $i_t$ be the arm pulled at time $t$\; 
- Update $\widehat{a}_{j,i} = \frac{S_{j,i} +   \mathds{1}(i=i_t) \cdot X_{j,i}  }{N_{i_t} }  $ and $N_{j,i} = N_{j,i} + \mathds{1}(i =i_t)$  for all $j \in [n]$ for all $ i \in [m]$. 
} 
 \end{algorithm}

\begin{itemize}
    \item First consider a simple case $\varepsilon^t = 1/t$.
    \item Also run this with $\frac{1}{\sqrt{t}}$ and $\frac{1}{\sqrt[3]{t}}$. 
\end{itemize}
\fi 
\section{ The Proposed Algorithm and Analysis}\label{sec:UCB}
\gan{Write the algorithm details}
At each time step $t$, our proposed algorithm \ouralgo\  (refer to Algorithm~\ref{algUCB}) keeps an Upper Confidence Bound (UCB) estimate and a Lower Confidence Bound (LCB) for every arm-agent pair  $(i,j)$. During the initial $t'$ rounds (Lines 2-7), the \ouralgo\  performs exploration, i.e. pulls the arms in a round-robin manner. In the following exploitation phase (i.e., $t \geq t'$), the algorithm keeps UCB and LCB estimates for each arm-agent combination. The UCB index is utilized to provide an optimistic estimate of social welfare, while both UCB and LCB indices are used to assess the fairness requirements to determine the arm-pulling strategy as given in problem P2 below.
\begin{figure}[ht!]
    \centering
% \begin{tcolorbox}[colback=white,colframe=black, width=\columnwidth]
\centering 
\textbf{P2}
\begin{align}\label{P2}
\text{Maximize}_{ \pi \in \Delta_{m} } \quad & \sum_{i=1}^n \langle \overline{A}_i, \pi\rangle \\
\text{subject to} \quad & \overline{A} \pi  \geq C \cdot \underline{\Amax} \nonumber
\end{align}
% \end{tcolorbox}
\Description{Our reformulated UCB-LCB-based optimization problem.}
\end{figure}

\piyushi{May be we should (re)-write what $\overline{A}, \underline{A}$ means.} While using the UCB index to estimate rewards is common in literature and used in virtually all UCB-based algorithms, the use of LCB to estimate fairness constraints is not common.  We employ the LCB estimate to ease the fairness constraints in P2, ensuring that the below two properties hold with high probability,
\begin{enumerate}
    \item   the social welfare guarantees remain intact,  and   
    \item  the fairness constraints are met.
\end{enumerate}  
Our proof crucially uses the above two properties of the solution obtained by solving the linear program P2. In particular,  we show the optimal solutions of P2 exhibit similar  social welfare  with a  small loss in fairness guarantee   in comparison with the solution of P1. 

We  begin our analysis with a standard result in probability theory. 
\iffalse
\begin{lemma}[Hoeffding's Inequality~\tocite ]
Let $X_1, X_2, \cdots, X_n$ be independent bounded
random variables with $X_i  \in  [a, b]$  for all $i$, where $ - \infty <  a  \leq  b  <  \infty $. Then
\begin{equation}
    \mathbb{P}\Big( \frac{1}{n} \sum_{i=1}^n ( X_i - \mathbb{E}(X_i) ) \geq \varepsilon \Big ) \leq   \exp^{- \frac{2 n \varepsilon^2}{(b-a)^2} } \ \ \ \text{ for all }  \varepsilon > 0.
    \end{equation}
    Similarly, 
    \begin{equation}
    \mathbb{P}\Big( \frac{1}{n} \sum_{i=1}^n ( X_i - \mathbb{E}(X_i) ) \leq   - \varepsilon \Big) \leq   \exp^{- \frac{2 n \varepsilon^2}{(b-a)^2} } \ \ \ \text{ for all }  \varepsilon > 0.
    \end{equation}
\end{lemma}
\fi
\begin{lemma}[Hoeffding’s inequality for sub-Gaussian random variables]\label{lem:hoeffding}
   Let \( Z_1, Z_2, \dots, Z_k \) be independent sub-Gaussian random variables, each with sub-Gaussian parameter \( \sigma \) and let $S_k = \frac{1}{k} \sum_{s=1}^k Z_s$.  Then  for all \( \varepsilon > 0 \), we have 
\[
P\left( \left| S_k - \mathbb{E}[S_k] \right| > \varepsilon \right) \leq 2 \exp \left( - \frac{ k \varepsilon^2}{2  \sigma^2} \right).
\]
Alternatively, for any \( \delta \in (0, 1] \), with probability at least \( 1 - \delta \),
\[
\left| S_k - \mathbb{E}[S_k] \right| \leq \sigma \sqrt{\frac{2 \log\left( \frac{2}{\delta} \right)}{k}}. 
\]
 
\end{lemma}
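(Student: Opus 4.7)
The plan is to follow the standard Chernoff-bound route, using the sub-Gaussian MGF assumption directly on the centered sum.

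First I would center the random variables. Let $Y_s := Z_s - \mathbb{E}[Z_s]$, so that $S_k - \mathbb{E}[S_k] = \tfrac{1}{k}\sum_{s=1}^k Y_s$. Each $Y_s$ inherits the sub-Gaussian property from $Z_s$, i.e.\ $\mathbb{E}[\exp(\lambda Y_s)] \leq \exp(\lambda^2 \sigma^2/2)$ for all $\lambda \in \mathbb{R}$, by the definition stated earlier in the paper.

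Next I would control the MGF of the centered average. By independence of the $Y_s$'s, for any $\lambda \in \mathbb{R}$,
\[
\mathbb{E}\!\left[\exp\!\left(\lambda(S_k-\mathbb{E}[S_k])\right)\right]
= \prod_{s=1}^k \mathbb{E}\!\left[\exp\!\left(\tfrac{\lambda}{k} Y_s\right)\right]
\leq \prod_{s=1}^k \exp\!\left(\tfrac{\lambda^2 \sigma^2}{2 k^2}\right)
= \exp\!\left(\tfrac{\lambda^2 \sigma^2}{2k}\right),
\]
so $S_k - \mathbb{E}[S_k]$ is $(\sigma/\sqrt{k})$-sub-Gaussian.

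Then I would apply the standard Chernoff/Markov step for any $\lambda > 0$:
\[
P\!\left(S_k - \mathbb{E}[S_k] > \varepsilon\right)
\leq e^{-\lambda \varepsilon}\,\mathbb{E}\!\left[\exp\!\left(\lambda(S_k-\mathbb{E}[S_k])\right)\right]
\leq \exp\!\left(-\lambda \varepsilon + \tfrac{\lambda^2 \sigma^2}{2k}\right).
\]
Minimizing the right-hand side in $\lambda$ at $\lambda^{\star} = k\varepsilon/\sigma^{2}$ gives $\exp(-k\varepsilon^2/(2\sigma^2))$. The same argument applied to $-(S_k - \mathbb{E}[S_k])$, together with a union bound, yields the two-sided inequality with the factor of $2$. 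Finally, the high-probability form follows by setting $\delta = 2\exp(-k\varepsilon^2/(2\sigma^2))$ and solving for $\varepsilon$.

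I do not expect a real obstacle: the result is a textbook consequence of the sub-Gaussian MGF bound and Chernoff's method. The only care needed is the rescaling $\lambda \mapsto \lambda/k$ when moving from the sum to the average, and noting that the sub-Gaussian parameter $\sigma$ is preserved by centering (since the definition in the paper already centers by $\mathbb{E}[X]$), so the MGF inequality applies to $Y_s$ with the same constant $\sigma$.
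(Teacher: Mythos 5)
Your proof is correct and complete: the centering step is consistent with the paper's definition of sub-Gaussianity (which is already stated for $X-\mathbb{E}[X]$), the MGF bound $\exp(\lambda^2\sigma^2/(2k))$ for the average is right, and the optimization at $\lambda^\star = k\varepsilon/\sigma^2$ yields exactly the claimed exponent. The paper itself states Lemma~\ref{lem:hoeffding} without proof as a standard result, but your Markov-inequality-on-the-MGF argument is precisely the technique the paper uses for its own Chernoff tail bounds in the appendix (Lemmas~\ref{lemma:chernoff-ltb} and~\ref{lemma:chernoff-utb}), so there is no substantive divergence in approach.
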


\noindent We now prove an important technical lemma.

\begin{restatable}{lemma}{ImpLem} Let $\pi^*$ be an optimal feasible solution of P1 and for any $t \geq t'$,   $\pi^t$ be an optimal solution of  P2 with $\overline{A}:= \overline{A}^t$. Then with probability at-least $1-  1/\sqrt{T}$, we have  \[  SW_{\pi^t} (\overline{A}) \geq SW_{\pi^*}( A ).  \]
\label{lem:equivalence}
\end{restatable}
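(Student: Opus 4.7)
My plan is to establish the lemma via the standard "optimism implies domination" argument, but applied carefully to a constrained program: show that (i) the true optimum $\pi^{\star}$ is feasible for P2 with high probability, and (ii) the UCB objective evaluated at any feasible policy upper-bounds the true objective. Combining these with the optimality of $\pi^t$ for P2 then yields the inequality.

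\textbf{Step 1: Define a good concentration event.} For each arm-agent pair $(i,j)$, let $\overline{A}^t_{i,j} = \widehat{A}^t_{i,j} + w^t_{i,j}$ and $\underline{A}^t_{i,j} = \widehat{A}^t_{i,j} - w^t_{i,j}$ where $w^t_{i,j}$ is the confidence radius from Lemma~\ref{lem:hoeffding}. I would pick the per-pull confidence level $\delta$ so that, after a union bound over all $n\cdot m$ pairs, over the $T$ rounds, and over both the UCB and LCB sides, the failure probability is at most $1/\sqrt{T}$; concretely, $\delta = \Theta(1/(nmT\sqrt{T}))$, which yields radii $w^t_{i,j} = \sigma\sqrt{2\log(2/\delta)/N^t_{i,j}}$ (with $N^t_{i,j}$ the number of pulls of arm $j$ observed by agent $i$ up to time $t$; round-robin exploration ensures $N^{t'}_{i,j}\geq 1$ for all $(i,j)$ once $t\geq t'$). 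Let $\mathcal{E}$ denote the event that, for every $t\geq t'$, every $(i,j)$, and both sides,
\[
\underline{A}^t_{i,j}\;\leq\;A_{i,j}\;\leq\;\overline{A}^t_{i,j},
\]
and hence componentwise $\underline{\Amax}^t\leq \Amax\leq \overline{\Amax}^t$. By Lemma~\ref{lem:hoeffding} and the union bound, $\Pr(\mathcal{E})\geq 1-1/\sqrt{T}$.

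\textbf{Step 2: Feasibility of $\pi^{\star}$ for P2 on $\mathcal{E}$.} Since $\pi^{\star}$ is feasible for P1, $A\pi^{\star} \geq C\Amax$. On $\mathcal{E}$ we have $\overline{A}^t \geq A$ entrywise, so $\overline{A}^t\pi^{\star}\geq A\pi^{\star}$, and also $C\Amax\geq C\underline{\Amax}^t$ because $C$ is a nonnegative diagonal matrix. Chaining these yields $\overline{A}^t\pi^{\star}\geq C\underline{\Amax}^t$, which is exactly the constraint of P2. Hence $\pi^{\star}$ is a feasible point of P2.

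\textbf{Step 3: Optimality chain.} Since $\pi^t$ is an optimal solution of P2 and $\pi^{\star}$ is feasible for P2,
\[
\mathrm{SW}_{\pi^t}(\overline{A}^t)\;=\;\sum_{i=1}^{n}\langle \overline{A}^t_i,\pi^t\rangle\;\geq\;\sum_{i=1}^{n}\langle \overline{A}^t_i,\pi^{\star}\rangle\;\geq\;\sum_{i=1}^{n}\langle A_i,\pi^{\star}\rangle\;=\;\mathrm{SW}_{\pi^{\star}}(A),
\]
where the second inequality again uses $\overline{A}^t\geq A$ entrywise on $\mathcal{E}$ together with $\pi^{\star}\geq 0$. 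Since all of this holds on $\mathcal{E}$, the stated inequality $\mathrm{SW}_{\pi^t}(\overline{A}^t)\geq \mathrm{SW}_{\pi^{\star}}(A)$ holds with probability at least $1-1/\sqrt{T}$.

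\textbf{Main obstacle.} The only delicate part is the probability bookkeeping: tuning $\delta$ (hence the confidence radii $w^t_{i,j}$) so that a union bound over the $O(nmT)$ concentration events gives exactly the $1/\sqrt{T}$ failure budget, and verifying that these are precisely the radii used inside the algorithm so that $\pi^t$ is truly optimal for the P2 instance whose constraints we have just shown $\pi^{\star}$ satisfies. Once the event $\mathcal{E}$ is properly defined, the feasibility-plus-optimism argument above is essentially mechanical.
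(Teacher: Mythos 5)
Your proposal is correct and follows essentially the same route as the paper's proof: establish a high-probability concentration event under which $\overline{A}^t \geq A \geq \underline{A}^t$ entrywise, deduce that $\pi^*$ is feasible for P2 via the chain $\overline{A}^t\pi^* \geq A\pi^* \geq C\Amax \geq C\underline{\Amax}^t$, and conclude by optimality of $\pi^t$ for P2 plus optimism of $\overline{A}^t$. The only cosmetic difference is that you union-bound uniformly over all $T$ rounds while the paper accounts for a fixed $t \geq t'$, and you argue $\Amax \geq \underline{\Amax}^t$ directly from entrywise monotonicity of the max rather than through the explicit indices $k$ and $k^t$.
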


\begin{proof}
Let $\varepsilon_{i,j}^t = \sigma \sqrt{\frac{2 \log(4mn\sqrt{T})}{N_j^t}} $ (See Line 9 of \ouralgo\ algorithm) and define   
 $ \delta'_{i,j} := \exp\left(\frac{-N_j^t \varepsilon^2}{2\sigma^2}\right)  = \frac{1}{4mn\sqrt{T}}$. Further,  let $\delta = 1/\sqrt{T}$.  Note that $\delta'_{i,j}$ is the  probability that $\overline{A}_{i,j}^t = \widehat{A}_{i,j}^t + \varepsilon_{i,j}^t \leq A_{i,j}$ at some time $t \geq t'$. 
 
 By symmetry of tail bounds  around the mean value  given by Hoeffding's inequality (Lemma~\ref{lem:hoeffding}) we have that  $ \delta'_{i,j} $ is also the  probability that $A_{i,j} \geq \underline{A}_{i,j}^t $.    
 
Note that arms are pulled in a round-robin fashion in the exploration phase of the \ouralgo\ algorithm. This implies that   each arm is pulled for the same number of rounds, i.e., $N_j^{t'}$ is the same for all $j \in [m]$.   Hence, we have that $\delta' := \delta'_{i,j} = \frac{1}{4mn\sqrt{T}} $. %Also note that $\delta' = \delta/2$ %For a given $\delta$, let $ \delta' = \frac{1}{2mn}$. 
 
 We prove the stated claim by showing that  with probability at least $1- \frac{1}{\sqrt{T}}$,  every feasible policy $\pi$ of P1 is also  a feasible policy of P2. %That is $  A_i^T \pi \geq c \cdot \Amax_i$ for all $i\in [n]$. 
 
 Fix $i \in [n]$. Let $k  \in  \arg\max_{j \in [m]} A_{i,j}$ and $ k^t \in   \arg\max_{j \in [m]} \underline{A}_{i,j}^t $ be the least indexed arms with maximum value in the $i^{\textup{th}}$ row of matrices $A$ and $\underline{A}$ respectively. 

 We have   
\begin{align}\label{eqn:sw}
\langle \overline{A}_i,  \pi \rangle &  \underbrace{\geq}_{w.p. \geq 1-m \delta'} \langle  A_i, \pi  \rangle      \geq  C_i \cdot A_{i,k} \geq C_i \cdot  A_{i, k^t} \nonumber  \\  & \underbrace{\geq}_{w.p. \geq 1-m\delta'} C_i \cdot (\widehat{A}_{i,k^t} - \varepsilon_{k^t}^t) = C_i \cdot \underline{A}_{i,k^t}. 
\end{align}
The first inequality (from left) in Equation~\ref{eqn:sw} above  follows from Hoeffding's inequality (Lemma~\ref{lem:hoeffding}), the second inequality follows from the feasibility of $\pi$ for P1, the third inequality follows from the definition of $k^t$ and the last inequality again follows from Hoeffding's inequality (Lemma~\ref{lem:hoeffding}). The first and last inequalities each hold with probability at least $(1-m\delta')$. Hence we have with probability at-least $1-2m\delta'$  that $\langle \overline{A}_i,  \pi \rangle \geq  C_i \cdot \underline{A}_{i,k^t}$.

Using union bound, we have  with probability at-least $1 - 2nm\delta'$ that $\langle \overline{A}_i,  \pi \rangle \geq  C_i \cdot \underline{A}_{i,k^t} $ for all $i\in [n]$; i.e.   every  feasible solution $\pi$ of P1 is also a feasible solution of P2. In particular, $\pi^*$ is a feasible solution of P2 with probability at least $1 - \delta/2$. This, along with the definition of $\pi^t$  gives 
\begin{align*}
SW_{\pi^t} (\overline{A}) \underbrace{\geq}_{w.p. \geq 1- \delta/2 }  SW_{\pi^*} (\overline{A}) \underbrace{\geq}_{w.p. \geq 1- \delta/2 } SW_{\pi^*} (A)
\end{align*}

This proved the stated claim.   
\end{proof}
 Lemma~\ref{lem:equivalence} implies that by easing the fairness constraints, it is possible to increase social welfare.  %Our next lemma shows that the after initial exploration phase, the optimal randomized  policy $\pi^t$  
\iffalse 
\begin{proof}
 We prove the stated claim by showing that with probability at least $1-\delta$, every feasible policy $\pi$ of P1 is a feasible policy of P2. That is $  A_i^T \pi \geq c \cdot \Amax_i$ for all $i\in [n]$. 
 
 Fix $i \in [n]$. Let $k := \arg\max_{j \in [m]} A_{i,j}$ and $ k^t :=  \arg\max_{j \in [m]} \underline{A}_{i,j}^t $. 
 
 Let $\delta_0 = \delta/m$. Then using Hoeffding's Inequality (Eq~\toref) with  $\varepsilon_{i,j}^t = \sqrt{\frac{\log(m n /\delta_0)}{2N_{j,t}}}$, we have  with probability at-least $1 - \frac{\delta_0}{mn}$ that 
\begin{align*}
\widehat{A}_{i,j} - A_{i,j} & \geq - \varepsilon_{i,j}^t 
 \end{align*}
 Using union bound, we have with probability at least $1 - \delta/n$ 
\begin{equation}
 \sum_{j =1}^m ( \widehat{A}_{i,j} - A_{i,j}    +  \varepsilon_{i,j}^t ) \pi_j  \geq 0   
 \implies \overline{A}_i^T \pi  \geq A_i^T \pi    
\end{equation}
We have   
\[  \langle \overline{A}_i,  \pi \rangle  \geq \langle  A_i, \pi  \rangle     \geq  c . A_{i,k} \geq c.  A_{i, k^t}     \geq c.(\widehat{A}_{i,k^t} - \varepsilon_{k^t}^t) = c. \underline{A}_{i,k^t}  \] 
The second inequality follows from the feasibility of $\pi$ for P1, the third inequality follows from the definition of $k_i$, and the last inequality follows from Hoeffding's inequality. The first and last inequalities each hold with probability at least $(1-\delta)$. Applying union bound, we conclude the proof of the lemma.   
\end{proof}
 \fi 
 \gan{get this proof from appendix to main...}

\begin{algorithm}[tb]
\caption{\ouralgo}
\label{algUCB}
\begin{algorithmic}[1] %[1] enables line numbers
\STATE \textbf{Require:} $T, n, m , C, N_{j}^t=0 \ \forall j \in [m].$
\STATE $t'=m\lceil \sqrt{T} \rceil, \ t=1$, $\widehat{A}=\mathbf{0}_{m\times n}$.
\FOR{$t\leq t'$}
\STATE Pull arm $j'=t \textup{ mod } m + 1$.
\STATE $\forall i \in [n]$, observe reward $X_{i, j'}^t\sim \mathcal{D}(\mu_{i, j'})$.
\STATE $\forall i \in[n], \forall j \in[m],$ $\widehat{A}_{i,j} = 
    \begin{cases}
     \widehat{A}_{i,j} & \text{ if }  j\neq j'  \\
      \frac{(N_j^{t-1}) \widehat{A}_{i,j} + X^t_{i,j}  }{N_j^t}  & \text{ if } j =j'. \\ 
      \end{cases}$
\STATE $N_{j'}^ t = N_{j'}^ {t-1}+1.$
\ENDFOR
\STATE Compute the confidence matrix $\mathcal{E}$ with entries $$\epsilon_{i,j}^t=\sigma \sqrt{\frac{2\log{(8mnT)}}{N_j^ t}} \ \forall i\in [n], j\in [m].$$ 
\FOR{$t\leq T$}
\STATE Compute $\overline{A} = \widehat{A} + \mathcal{E}$ and $\underline{A} = \widehat{A} - \mathcal{E}$.
\STATE $\forall i \in [n]$, compute $\underline{\Amax}_i = \max_{j\in [m]}\underline{A}_{i,j}$ (break ties arbitrarily).
\STATE Solve \textbf{P2} and 
     let $\pi'$ be the solution of this LP. 
\STATE Sample $j' \sim \pi'$.
\STATE $\forall i \in [n]$, observe reward $X_{i, j'}^t\sim \mathcal{D}(\mu_{i,j'})$.
\STATE $N_{j'}^ t = N_{j'}^ {t-1}+1.$
\STATE  $\forall i \in[n], \forall j \in[m]$, $\widehat{A}_{i,j} = 
    \begin{cases}
     \widehat{A}_{i,j} & \text{ if }  j\neq j'  \\
      \frac{(N_j^{t-1}) \widehat{A}_{i,j} + X_{i,j}^t  }{N_j^t}  & \text{ if } j =j'. \\ 
      \end{cases}$
\STATE Update entries of $\mathcal{E}$.
\ENDFOR
\end{algorithmic}
\end{algorithm}
\himanshu{ Algorithm 2, line 6, its calculating the estimated reward upto time $t' $ but in Algorithm 1, we wrote one line ie line 5. Should we change any one for consistancy}

\begin{restatable}{theorem}{rewardRegretUCB}
\label{thm:rewardRegretUCB}
For any feasible MA-MAB instance $\mathcal{I}$ with   $T\geq 32n^2\sigma^2$,     expected social welfare regret of \ouralgo \ is upper-bounded by $$ 4n \sqrt{2T}\left(\sigma\log{(2m^2T)} +m + \sigma \right).  $$
    
   % For any $\delta \in (0,1)$, the reward regret of \ouralgo is bounded by $O(n\sqrt{mT\log(mn/\delta)})$ with probability atleast $1-\delta$. 
\end{restatable}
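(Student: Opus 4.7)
The plan is to split the total regret into an exploration part (rounds $t \leq t' = m\lceil\sqrt{T}\rceil$) and an exploitation part (rounds $t' < t \leq T$) and to bound each separately. During exploration the algorithm plays round robin, so the per-round welfare gap is trivially at most $n$ (since $\textsc{SW}(\pi^*) \leq n$), giving an exploration contribution of at most $n\cdot t' \leq nm(\sqrt{T}+1)$, which matches the $4nm\sqrt{2T}$ piece of the claimed bound.

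For the exploitation phase, define the good event $\mathcal{G}$ that $|\widehat{A}_{i,j}^t - A_{i,j}| \leq \varepsilon_{i,j}^t$ holds for every $(i,j)$ and every $t > t'$, where $\varepsilon_{i,j}^t = \sigma\sqrt{2\log(8mnT)/N_j^t}$. By Hoeffding's inequality (Lemma~\ref{lem:hoeffding}) and a union bound over the $nmT$ triples, $\Pr(\mathcal{G}^c) = O(1/\sqrt{T})$. As a consequence, the argument used in Lemma~\ref{lem:equivalence} guarantees that on $\mathcal{G}$ the policy $\pi^*$ is a feasible solution of \textbf{P2} with data $(\overline{A}^t,\underline{A}^t)$. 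The contribution of $\mathcal{G}^c$ to the expected regret is bounded by $n \cdot T \cdot \Pr(\mathcal{G}^c) = O(n\sqrt{T})$, which accounts for the $4n\sigma\sqrt{2T}$ piece.

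On $\mathcal{G}$, since $\pi^t$ maximises $\sum_i\langle \overline{A}_i^t,\cdot\rangle$ over the feasible set of \textbf{P2}, we obtain $\textsc{SW}_{\pi^t}(\overline{A}^t) \geq \textsc{SW}_{\pi^*}(\overline{A}^t) \geq \textsc{SW}_{\pi^*}(A)$, where the last step uses $\overline{A}^t \geq A$ on $\mathcal{G}$. Combining with $\overline{A}_{i,j}^t \leq A_{i,j} + 2\varepsilon_{i,j}^t$ yields the per-round exploitation bound
\[
\textsc{SW}(\pi^*) - \textsc{SW}(\pi^t) \;\leq\; \sum_{i=1}^n \langle \overline{A}_i^t - A_i,\, \pi^t\rangle \;\leq\; 2\sum_{i=1}^n\sum_{j=1}^m \pi_j^t\, \varepsilon_{i,j}^t.
\]

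Summing this over $t$ is the main technical step. Since $j_t \sim \pi^t$ conditionally on the history $\mathcal{F}_{t-1}$ and $\varepsilon_{i,j}^t$ is $\mathcal{F}_{t-1}$-measurable, I would identify $\mathbb{E}[\pi_j^t \varepsilon_{i,j}^t\mid\mathcal{F}_{t-1}]$ with $\mathbb{E}[\mathbbm{1}(j_t=j)\,\varepsilon_{i,j}^t\mid\mathcal{F}_{t-1}]$, then convert $\sum_{t>t'}\mathbbm{1}(j_t=j)\,\varepsilon_{i,j}^t$ into the telescoping sum $\sigma\sqrt{2\log(8mnT)}\sum_{s\leq N_j^T} s^{-1/2}$, which is at most $2\sigma\sqrt{2\log(8mnT)\,N_j^T}$. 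Combined with the exploration-phase guarantee $N_j^{t'}\geq\lceil\sqrt{T}\rceil$ and a Cauchy--Schwarz step $\sum_j \sqrt{N_j^T}\leq \sqrt{mT}$, the exploitation contribution is $\tilde O\bigl(n\sigma\sqrt{T}\,\log(mnT)\bigr)$, matching the $4n\sqrt{2T}\,\sigma\log(2m^2T)$ piece. The main obstacle will be carefully handling the measurability between $\pi^t$ and $\varepsilon_{i,j}^t$ so the conditional-expectation identities go through, and then combining all three contributions with the correct numerical constants---using the hypothesis $T\geq 32 n^2\sigma^2$ to absorb lower-order terms---so that the final expression collapses to exactly $4n\sqrt{2T}\,(\sigma\log(2m^2T)+m+\sigma)$.
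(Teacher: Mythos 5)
Your proposal follows the paper's proof skeleton almost exactly: the same split into an exploration term (bounded by roughly $nm\sqrt{T}$) and an exploitation term, the same appeal to the feasibility argument of Lemma~\ref{lem:equivalence} to get $\textsc{SW}_{\pi^t}(\overline{A}^t)\geq \textsc{SW}_{\pi^*}(A)$ on the good event, the same per-round bound $\sum_i\langle\overline{A}_i^t-A_i,\pi^t\rangle\leq 2\sum_{i,j}\pi_j^t\varepsilon_{i,j}^t$, and the same telescoping-plus-Cauchy--Schwarz step (the paper's Lemma~\ref{lem:Supp_Lemma_nine}) giving $\sum_t 1/\sqrt{N_{j_t}^t}\leq 2\sqrt{mT}$. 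The one genuine difference is how you pass from $\mathbb{E}_{j\sim\pi^t}[\varepsilon_{i,j}^t]$ to the realized sequence $\varepsilon_{i,j_t}^t$: the paper invokes an Azuma--Hoeffding martingale concentration bound (Lemma~\ref{lemma:wang21}, from \cite{wang2021fairness}) to control this gap with high probability, which introduces an extra $\sqrt{2T\log(2/\delta)}$ term and forces the final $(1-\delta)(\mathrm{R1}+\mathrm{R2})+\delta T$ optimization over $\delta$; you instead use the tower property ($\varepsilon_{i,j}^t$ is $\mathcal{F}_{t-1}$-measurable and $\mathbb{E}[\mathbbm{1}(j_t=j)\mid\mathcal{F}_{t-1}]=\pi_j^t$) to equate the two sums \emph{in expectation}, which is more elementary, avoids the auxiliary martingale lemma entirely, and is well suited to the theorem's statement about expected regret. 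Both routes are valid and land at the same order; yours is cleaner for this particular step. One small caution: with the algorithm's radius $\sigma\sqrt{2\log(8mnT)/N_j^t}$, a union bound over all $nmT$ (agent, arm, count) triples gives $\Pr(\mathcal{G}^c)\leq 1/4$ rather than the $O(1/\sqrt{T})$ you assert; you would need the logarithm's argument to scale like $mnT^{3/2}$ for your $nT\cdot\Pr(\mathcal{G}^c)=O(n\sqrt{T})$ accounting to go through. The paper has the same looseness (its Lemma~\ref{lem:equivalence} is stated per round with radius $\sigma\sqrt{2\log(4mn\sqrt{T})/N_j^t}$ but is applied uniformly over all exploitation rounds), so this is a shared constant-level issue rather than a flaw unique to your argument, but you should fix the exponent if you write the proof out in full.
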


\gan{give  outline of the proof with technical details }
The detailed proof of~Theorem \ref{thm:rewardRegretUCB} is provided in the Appendix. We provide a high-level overview of the proof here.  First, we break down the regret into two components: R1 and R2, representing the regret from the exploration phase and exploitation phase, respectively. The component R1 encompasses the regret over the initial exploration phase of  $m\lceil \sqrt{T} \rceil$ rounds. Following this, using      Lemma~\ref{lem:equivalence} we  argue that the social welfare obtained by solving P2 is at-least that of the original problem with high probability. By applying Hoeffding's inequality and the union bound to the aggregated expected values, we show that R2 is capped by $\tilde{O}(T^{1/2})$. 
 We emphasize here that the social welfare regret  of \ouralgo\ is asymptotically optimal (refer to Section~\ref{subsec:lowerBound} for the lower bound). 

We now give the fairness regret guarantee of \ouralgo\ algorithm.

\begin{restatable}{theorem}{fairnessRegretUCB}
\label{thm:fairnesRegretUCB}

For any feasible MA-MAB instance $\mathcal{I}$ with   $T\geq 32n^2\sigma^2$,     expected fairness regret of \ouralgo \ is upper-bounded by
$$ 6n(\textstyle \max_{i\in [n]} C_i)\sigma T^{3/4}\log{(2m^2T)} + mnO(\sqrt{T}). $$
    % The fairness regret of  \ouralgo\ is upper bounded by $6n\sigma T^{3/4}\log(2m^2T)+n\sqrt{T}(m+4\sqrt{2}\sigma)$. 
\end{restatable}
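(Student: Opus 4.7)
} The plan is to split the horizon into the exploration phase ($t\leq t'=m\lceil\sqrt{T}\rceil$) and the exploitation phase ($t>t'$), and bound the per-round fairness violation in each case. In the exploration phase, each round can in the worst case contribute at most $\sum_{i} C_i \Amax_i \leq n C_{\max}$ to the fairness regret, so this phase contributes at most $mn\,O(\sqrt{T})$, which accounts for the second term in the stated bound.

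For the exploitation phase, the heart of the argument is a ``reverse'' version of Lemma~\ref{lem:equivalence}: with the confidence widths $\varepsilon_{i,j}^t=\sigma\sqrt{2\log(8mnT)/N_j^t}$ used in \ouralgo, a single application of Hoeffding's inequality (Lemma~\ref{lem:hoeffding}) together with a union bound over all $(i,j)$ pairs and all $t\in[T]$ shows that the clean event $\mathcal{G}=\{\,A_{i,j}\in[\underline{A}_{i,j}^t,\overline{A}_{i,j}^t]\text{ for all }i,j,t\,\}$ holds with probability at least $1-1/(4T)$ (by the choice of $\delta'=1/(8mnT)$ inside the radius). On $\mathcal{G}$, for each agent $i$ fix $j_i^\star\in\arg\max_j A_{i,j}$; then
\begin{align*}
\langle A_i,\pi^t\rangle \;\geq\; \langle \underline{A}_i^t,\pi^t\rangle \;\geq\; C_i\,\underline{\Amax}_i^t \;\geq\; C_i\,\underline{A}_{i,j_i^\star}^t \;\geq\; C_i\bigl(\Amax_i - 2\varepsilon_{i,j_i^\star}^t\bigr),
\end{align*}
where the first inequality uses $A_{i,j}\geq \underline{A}_{i,j}^t$ componentwise, the second is the feasibility of $\pi^t$ for P2, and the last uses $\underline{A}_{i,j_i^\star}^t \geq A_{i,j_i^\star}-2\varepsilon_{i,j_i^\star}^t$. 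Consequently the per-round positive deficit is at most $2C_i\,\varepsilon_{i,j_i^\star}^t$.

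Next I would bound $\varepsilon_{i,j_i^\star}^t$ uniformly. Since every arm is pulled $\lceil\sqrt{T}\rceil$ times in the exploration phase, we have $N_j^t\geq \lceil\sqrt{T}\rceil$ for all $j$ and $t>t'$, so
\begin{align*}
\varepsilon_{i,j_i^\star}^t \;\leq\; \sigma\sqrt{\frac{2\log(8mnT)}{\sqrt{T}}} \;\leq\; \sigma\,T^{-1/4}\sqrt{2\log(8mnT)}.
\end{align*}
Summing the per-round bound $\sum_{i}2C_i\varepsilon_{i,j_i^\star}^t$ over the at most $T$ rounds of the exploitation phase yields, on $\mathcal{G}$, a contribution of at most $2nC_{\max}\sigma T^{3/4}\sqrt{2\log(8mnT)}$; bounding $\sqrt{2\log(8mnT)}\leq \log(2m^2T)$ for the relevant regime $T\geq 32n^2\sigma^2$ absorbs the constants into the prefactor $6n C_{\max}\sigma$ stated in the theorem.

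Finally, the complementary event $\mathcal{G}^c$ occurs with probability at most $1/(4T)$, and even if every round on $\mathcal{G}^c$ incurs the maximum possible instantaneous fairness regret $nC_{\max}$, its contribution to the expected regret is $O(nC_{\max}/T)\cdot T = O(nC_{\max})$, which is dominated by the $mn\,O(\sqrt{T})$ term. Combining the exploration bound, the clean-event exploitation bound, and the failure-event contribution gives the claimed $6n C_{\max}\sigma T^{3/4}\log(2m^2T)+mn\,O(\sqrt{T})$. I expect the main delicate step to be the chain of inequalities above tying $\langle A_i,\pi^t\rangle$ to $C_i\Amax_i$ through $\underline{A}$ and $\underline{\Amax}$ while maintaining a uniform high-probability guarantee; once that is set up, the subsequent summation is routine.
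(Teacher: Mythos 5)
Your overall architecture matches the paper's: the same exploration/exploitation split, the same use of the P2 constraint together with $\underline{\Amax}_i^t=\max_{j}\underline{A}_{i,j}^t$ and the confidence width at agent $i$'s true best arm, the same $N_j^t\geq\sqrt{T}$ bound that produces the $T^{3/4}$ rate, and the same conversion from a high-probability bound to an expected one. However, the central chain of inequalities --- the step you yourself flag as the delicate one --- contains a genuine error. You assert $\langle \underline{A}_i^t,\pi^t\rangle\geq C_i\,\underline{\Amax}_i^t$ and justify it as ``the feasibility of $\pi^t$ for P2''. But the constraint in P2 is $\overline{A}\pi\geq C\,\underline{\Amax}$, i.e.\ the \emph{upper} confidence matrix appears on the left; since $\underline{A}^t\leq\overline{A}^t$ entrywise, feasibility for P2 does not imply your inequality, and when the P2 constraint is active one in fact has $\langle\underline{A}_i^t,\pi^t\rangle=C_i\,\underline{\Amax}_i^t-2\langle\mathcal{E}_i^t,\pi^t\rangle<C_i\,\underline{\Amax}_i^t$. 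The correct route (the paper's) is $\langle A_i,\pi^t\rangle\geq\langle\underline{A}_i^t,\pi^t\rangle=\langle\overline{A}_i^t,\pi^t\rangle-2\langle\mathcal{E}_i^t,\pi^t\rangle\geq C_i\,\underline{\Amax}_i^t-2\,\mathbb{E}_{j\sim\pi^t}[\epsilon_{i,j}^t]$, which leaves an additional per-round, per-agent term $2\,\mathbb{E}_{j\sim\pi^t}[\epsilon_{i,j}^t]$ that your accounting omits. That term is not negligible: summed over $t$ and $i$ it is of order $n\sigma T^{3/4}\sqrt{\log(mnT)}$ if bounded crudely via $N_j^t\geq\sqrt{T}$ (the paper controls it via the martingale concentration of Lemma~\ref{lemma:wang21} together with Lemma~\ref{lem:Supp_Lemma_nine}). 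The final rate survives, but the term must be introduced and bounded.

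A secondary issue is your failure-probability arithmetic. With radius $\epsilon_{i,j}^t=\sigma\sqrt{2\log(8mnT)/N_j^t}$, each two-sided deviation event has probability at most $1/(4mnT)$, so the union bound over all $mn$ pairs and all $T$ rounds gives $\mathbb{P}(\mathcal{G}^c)\leq 1/4$, not $1/(4T)$. Under that bound your failure-event contribution to the expected regret is $\Theta(nC_{\max}T)$, i.e.\ linear, which would swamp everything. You need either a larger logarithm in the radius (e.g.\ $\log(mnT^2)$) or the paper's device of carrying a free parameter $\delta$, bounding the expected regret by $(1-\delta)(\cdot)+\delta T$, and optimizing $\delta$ at the end.
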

%\fairnessRegretUCB*

It is worth noting that while the social welfare regret guarantee of \ouralgo\ is much stronger, this comes at the cost of higher  fairness  regret. We demonstrate this trade-off in  Section~\ref{sec: simulation} with simulations. 
\gan{Need two paragraphs with proof intuition in the main. Detailed proof may go in the appendix. Most of this part is handwavy ...need to make it more concrete}
Next, we show the lower bound on fairness and social welfare regret guarantees of  MA-MAB problems, 

\section{Regret Lower Bounds}
\label{subsec:lowerBound}
%In this section, we show that the MA-MAB instance admits a lower bound of $\Omega(\sqrt{T})$ on both social welfare and fairness regret guarantees. 
\iffalse 
\begin{definition}[Instance-Independent Simultaneous Regret Bound]
    We say that an MA-MAB instance admits an instance-independent regret bound of $(T^\alpha, T^\beta)$ if for any feasible instance $\mathcal{I} 
    % \in \mathbb{R}^{n \times m }
    $
    , we have that 
    \begin{enumerate}
        \item $\mathcal{R}_{SW}(T)  = \Omega(T^\alpha)$, and 
        \item $\mathcal{R}_{FR}(T)  = \Omega(T^\beta)$.
    \end{enumerate}
\end{definition}
\fi 
In this section, we  prove that every algorithm must suffer an \emph{instance-independent regret} \footnote{ A regret guarantee is called instance independent if it  holds for every feasible MA-MAB instance  $\mathcal{I}$.  That is,  for all values of fairness constraints matrix $C$, time horizon $T$, and mean rewards matrix $A$ provided that  $\mathcal{I}$ admits a feasible policy.}  of $\Omega(\sqrt{T})$  in both fairness and social welfare.     
\begin{restatable}{theorem}{lowerRegret}
\label{thm:lowerBound}
An instance-independent social welfare and fairness regrets of MA-MAB problem is lower bounded by $(\Omega(\sqrt{T}), \Omega(\sqrt{T}))$. 
\end{restatable}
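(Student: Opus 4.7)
The plan is to establish each of the two $\Omega(\sqrt{T})$ bounds by a separate reduction to the classical stochastic two-armed bandit problem, whose minimax regret is well-known to be $\Omega(\sqrt{T})$. Since the lower bounds are stated to hold individually for the two regret notions, I am free to choose a different hard MA-MAB instance for each. In both reductions I take $n=1$ agent and $m=2$ arms, so that the MA-MAB primitive becomes literally a single-agent two-armed bandit; the only thing that varies between the two reductions is the fairness constant $C_1$, which I use to tune which of the two regret notions inherits the classical bound.

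For the social welfare bound, I would set $C_1=0$. The constraint $A\pi \geq C\Amax$ is then vacuous (every $\pi\in\Delta_2$ is feasible, consistent with the first sufficient condition in Theorem~\ref{thm:characterization}), so problem P1 reduces to maximizing $\mu_1\pi_1+\mu_2\pi_2$ and the optimal $\pi^*$ puts all mass on the better arm. The social welfare regret of any MA-MAB algorithm run on this instance is, round by round, identical to the classical MAB pseudo-regret, and the standard $\Omega(\sqrt{T})$ minimax lower bound transfers verbatim. For the fairness bound, I would take $C_1=1$ instead (this is allowed because $1/\min(n,m)=1$, so Theorem~\ref{thm:characterization} still guarantees feasibility). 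Writing $\mu_*:=\max(\mu_1,\mu_2)$, the fair-policy constraint $\mu_1\pi_1+\mu_2\pi_2\geq \mu_*$ forces all mass onto the best arm, and the per-round fairness regret simplifies to $\mu_* - \mu_1\pi_1^t - \mu_2\pi_2^t$, which is exactly the classical MAB single-step regret. Summing over $t$ and invoking the same classical lower bound gives the desired $\Omega(\sqrt{T})$ on the fairness regret.

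To make both reductions concrete I would invoke the standard two-point construction: consider Bernoulli reward distributions with means $(\tfrac{1}{2}+\epsilon,\tfrac{1}{2}-\epsilon)$ and the swapped instance $(\tfrac{1}{2}-\epsilon,\tfrac{1}{2}+\epsilon)$, choose $\epsilon = c/\sqrt{T}$ for a small constant $c$, and use Pinsker's inequality together with the KL chain rule over the observed histories to show that any algorithm must pull the suboptimal arm $\Omega(T)$ times in expectation in at least one of the two instances. Multiplying by the gap $2\epsilon$ yields $\Omega(\sqrt{T})$ cumulative regret in whichever instance the algorithm errs on, which translates directly into social welfare regret in the first reduction and into fairness regret in the second.

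The main conceptual step is checking that the two reductions faithfully preserve the respective regret definitions --- i.e., that with $C_1=0$ the fairness constraint truly drops out of P1, and that with $C_1=1$ the per-round fairness regret as defined in Section~\ref{sec:regret} coincides with the classical single-step MAB regret. Once these one-line verifications are done, the hard part is purely the classical lower bound, which is off-the-shelf. I do not expect any technical difficulty beyond this bookkeeping, since the MA-MAB-specific structure (multiple agents, vector rewards, shared fairness matrix) is entirely trivialized by the choice $n=1$.
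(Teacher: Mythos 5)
Your proposal is correct, and the social-welfare half is essentially the paper's own argument: the paper also reduces to a classical stochastic MAB by making the fairness constraint vacuous (it uses a rank-one family $A_i=\beta_i A_1$ with $C=0$ in the main text, and explicitly notes the $n=1$, $C=0$ reduction in the appendix), then invokes the off-the-shelf $\Omega(\sqrt{T})$ minimax bound. Where you genuinely diverge is the fairness half. The paper builds a two-agent instance with $A$ the identity, $C_1>0$ and $C_2=1-C_1$, so that the unique fair mixing probability is $x^*=C_1$; it then passes through the two-arm closed form (Eq.~\ref{eq:eqnClosedFormSW} and Lemma~\ref{fairness-reform}) to lower-bound $\mathcal{R}_{\textsc{FR}}$ by $|\mathcal{R}_{\textsc{SW}}|_+$ on a carefully chosen instance where that inequality is tight, and only then invokes the classical bound. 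Your route with $n=1$ and $C_1=1$ short-circuits all of this: the constraint $\langle \mu,\pi\rangle\geq \Amax_1$ pins the feasible set to the argmax, and the per-round fairness regret $|\Amax_1-\langle\mu,\pi^t\rangle|_+$ is \emph{identically} the classical single-step pseudo-regret, so no intermediate lemma is needed. This is cleaner and avoids the somewhat delicate tightness argument in the paper's appendix. The trade-off is that your hard instance sits at the boundary $C_1=1$, where the feasible set degenerates to a point --- formally admissible under Theorem~\ref{thm:characterization} (condition 1 holds with equality), but the paper elsewhere deliberately excludes $c=1$ as a degenerate regime, and its own construction shows hardness persists for interior $C$ where the optimal policy genuinely mixes. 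If you want robustness to that objection, note that your argument perturbs: with $C_1=1-\gamma$ for small fixed $\gamma$ the same two-point construction still forces $\Omega(\sqrt{T})$ fairness regret, at the cost of a short calculation replacing the one-line identification with the pseudo-regret.
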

\begin{proof}[Proof Outline]
The proof of Theorem \ref{thm:lowerBound} is given in the Appendix. 
 We provide an intuition here. To show the lower bound on social welfare, consider a class of instances where each row is a non-negative  multiple of the first row i.e. $A_i = \beta_i A_1$ for some $\beta_i \geq 0$ and $C$ as a zero matrix. As every agent has the same preferences over arms, the problem of maximizing social welfare now is reduced to the problem of identifying an arm $j$ with the highest $\sum_{i \in [n]} A_{i,j}$. This is equivalent to finding an arm  $j$ with largest $\Big( 1 +  \sum_{i\neq 1} \beta_i \Big) \cdot A_{1,j}$. This problem is the same as the classical stochastic MAB problem with $m$ arms and reward distributions with the mean reward of arm $j$ as $\Big( 1 +  \sum_{i\neq 1} \beta_i \Big) \cdot A_{1,j}$.   We use the $\Omega(\sqrt{T})$ instance-independent regret  lower bound \cite[Theorem 5.1]{Auer02}  for  classical stochastic  bandits to lower bound the social welfare regret.    

To lower-bound the fairness regret, we construct a MA-MAB instance with $m=2$  arms and $n=2$ agents with $A$ being the Identity matrix. For any values of $C_1 > 0 $  and $ C_2 =1- C_1$ satisfying the conditions in Theorem~\ref{thm:characterization}, the fairness criteria is satisfied if and only if $ x^* = C_1$. Since $ A_{1,1} - A_{1,2} = - (A_{2,1} - A_{2,2}) = 1$ we have that the fairness regret can be written (using Eq. \ref{eq:eqnClosedFormSW}) as 
$$\mathcal{R}_{\textsc{FR}}(T) = \sum_{t=1}^T |C_1 - x^t|_+ \geq | \sum_{t=1}^T  C_1 -  \sum_{t=1}^T x^t|_+ \geq \sum_{t=1}^T  C_1 -  \sum_{t=1}^T x^t. $$  Now consider a stochastic MAB setup with two arms and rewards $1 $ \piyushi{I think this should be 1} and $0$ respectively. We again use a lower bound of $\Omega(\sqrt{T})$ for the stochastic MAB setting with this instance to obtain a lower bound on the fairness regret for the MA-MAB setting.     
\end{proof}

It is worth noting that the lower bounds presented in Theorem~\ref{thm:lowerBound} invoke different stochastic MAB instances and, therefore, may not hold simultaneously. The \ouralgo\ algorithm proposed in this paper is asymptotically optimal up to a logarithmic factor; however, the same is not true for the fairness regret.   

In the next section, we present a heuristic algorithm that provides better empirical performance for fairness. However, the empirical performance for the social welfare of this heuristic algorithm is worse than that of \ouralgo.   

\section{Dual-Based Algorithm}\label{sec:dual}
We now present a heuristic dual-based algorithm inspired by the dual formulation of our optimization problem P1 (Eq. \ref{eq:POne}). This algorithm pulls each arm in  round-robin fashion for the first $O(\sqrt{T})$ rounds  and solves the problem using a Lagrangian dual formulation with the appropriate  estimates of $A$ using pulls from the  first phase. 
% \piyushi{Sir, its not completely the UCB estimate of A. Can we write, UCB estimate of a function of A obtained from the Lagrangian?} 

We start  with formulating the dual of our Problem P1. %First note that the solution of P1 is the same as the following primal problem.
\begin{align}
    \textup{\textbf{Primal:}}& -\min_{\pi\in\Delta_m} \sum_{i=1}^n -\langle A_i, \pi\rangle \nonumber
    \\
    &\textup{s.t. }-(\langle A_i, \pi\rangle - C_i\Amax_i)\leq 0 \ \forall i\in[n].
\end{align}
We derive the Lagrangian dual with Lagrange parameters $\lambda_i|_{i=1}^n$ corresponding to the fairness constraints.
\begin{align}
    \textup{\textbf{Dual:}}\quad &-\max_{\lambda \in \mathbb{R}^n \geq 0}\ \min_{\pi\in \Delta_m} \sum_{i=1}^n-\langle A_i, \pi\rangle - \sum_{i=1}^n \lambda_i(\langle A_i, \pi\rangle - C_i\Amax_i) \nonumber\\
    =& -\max_{\lambda \in \mathbb{R}^n \geq 0}\ \min_{\pi\in \Delta_m} -\left\langle\sum_{i=1}^n(1 +\lambda_i)A_i, \pi\right\rangle + \sum_{i=1}^n \lambda_i C_i\Amax_i \nonumber\\
    =& -\max_{\lambda \in \mathbb{R}^n \geq 0} \ -\max_{j\in[m]}\left(\sum_{i=1}^n(1+\lambda_i)A_i\right)_j + C_i\langle\lambda, \Amax \rangle\nonumber\\
    =&-\max_{\lambda \in \mathbb{R}^n \geq 0} \ - \|\left(\textup{Diag}(1+\lambda)A\right)^\top \mathbf{1}_n\|_\infty + C_i\langle\lambda, \Amax \rangle \label{dualcp}
\end{align}
where $\textup{Diag}(\cdot)$ denotes the diagonal matrix formed by the entries $(1+\lambda_i)$.
Motivated by the simplification we obtain in the last step, our dual algorithm is designed to pick the arms based on the UCB estimate of $\|\left(\textup{Diag}(1+\lambda)A\right)^\top \mathbf{1}_n\|_\infty$ with $\lambda$ as the solution of Eq.~(\ref{dualcp}).

\begin{algorithm}
\caption{\textsc{Dual-Inspired Algorithm.}}
\label{alg-dual}
\begin{algorithmic}[1]
\STATE \textbf{Require:} $T, n, m , C, N_{j}^{0}=0 \ \forall j \in [m]$.
\STATE $t'=m\lceil \sqrt{T} \rceil, \ t=1$,  $\widehat{A}=\mathbf{0}_{m\times n}$.
\FOR{$t\leq t'$}
\STATE Pull arm $j'=t \textup{ mod } m + 1$.
\STATE $\forall i \in [n]$, observe reward $X_{i, j'}^{t}\sim \mathcal{D}(\mu_{i,j'})$.
\STATE $\forall i \in[n], \forall j \in[m]$, $\widehat{A}_{i,j} = 
    \begin{cases}
     \widehat{A}_{i,j} & \text{ if }  j\neq j'  \\
      \frac{(N_{j}^{t-1}) \widehat{A}_{i,j} + X_{i,j}^t}{N_{j}^{t}}  & \text{ if } j =j'. \\ 
      \end{cases}$
\STATE $N_{j'}^ t = N_{j'}^{t-1}+1.$
\ENDFOR
\STATE Compute $\mathcal{E}$ with entries $\epsilon_{i,j}^t=\sigma\sqrt{\frac{2\log{(8mnT)}}{N_{j}^{t}}}.$
\STATE Compute $\hat{\lambda}$ by solving the convex program in Eq. (\ref{dualcp}) with $\hat{A}$.
\FOR{$t\leq T$}
% \STATE Compute $\hat{\lambda}$ by solving the convex program in Eq. (\ref{dualcp}) with $\hat{A}$.
\STATE $j'\in \argmax \left( \left(\textup{Diag}(1+\hat{\lambda})\hat{A}\right)^\top \mathbf{1}_n + \mathcal{E}\right)$.
\STATE $\forall i \in [n]$, observe reward $X_{i, j'}^t\sim \mathcal{D}(\mu_{i,j'})$.
\STATE $N_{j'}^{t} = N_{j'}^{t-1}+1.$
\STATE  $\forall i \in[n], \forall j \in[m],$
$\widehat{A}_{i,j} = 
    \begin{cases}
     \widehat{A}_{i,j} & \text{ if }  j\neq j'  \\
      \frac{(N_{j}^{t-1}) \widehat{A}_{i,j} + X_{i,j}^t}{N_{j}^{t}}  & \text{ if } j =j'. \\ 
      \end{cases}$
\STATE \STATE Update entries of $\mathcal{E}$.
\ENDFOR

 \end{algorithmic}
\end{algorithm} 
\section{Experimental Evaluation}
\label{sec: simulation}

\begin{figure*}[ht!]
\centering
\begin{subfigure}{.33\textwidth}
    \centering
    \includegraphics[width=\linewidth]{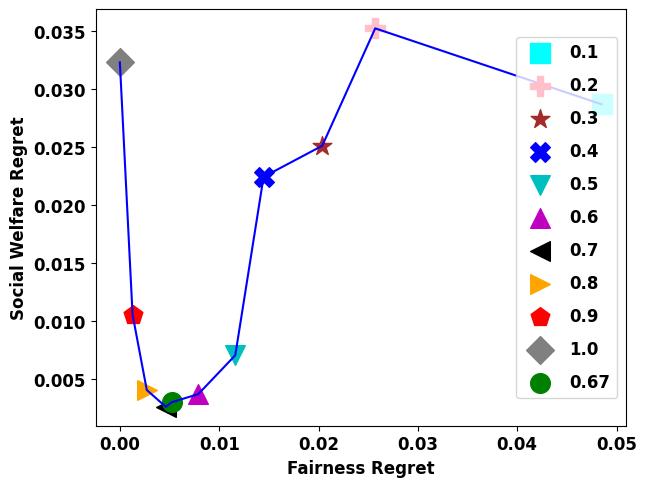}  
    \caption{Explore-Exploit tradeoff with {\sc Explore-First}.
    % : The tradeoff between Social Welfare regret and Fairness regret is shown as a function of exploration parameter $\alpha$ (displayed as legends). We can see that $\alpha=0.67$ obtains the optimal trade-off supporting our theoretical result.
    }
    \label{sim1}
\end{subfigure}
\begin{subfigure}{.33\textwidth}
    \centering
    \includegraphics[width=\linewidth]{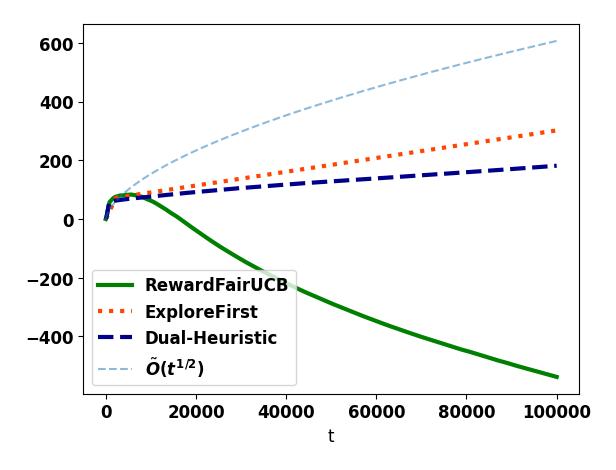}  
    \caption{Social welfare regrets vs timesteps.}
    \label{sim2}
\end{subfigure}
\begin{subfigure}{.33\textwidth}
    \centering
    \includegraphics[width=\linewidth]{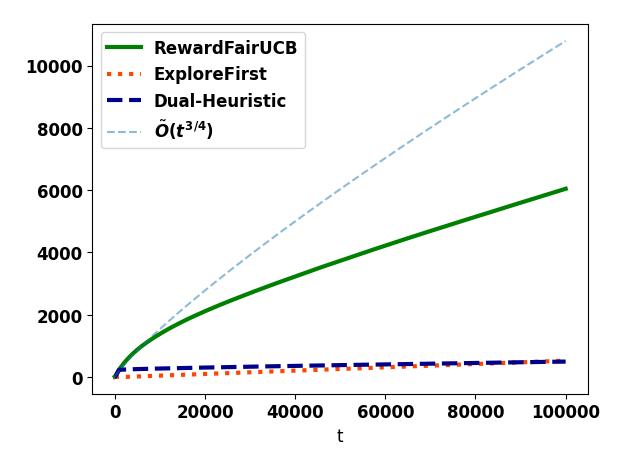}  
    \caption{Fairness regrets vs timesteps.}
    \label{sim3}
\end{subfigure} 
    \caption{Experimental results on simulated data ($n=4, m=3$). $C_i$ is $0.3\ \forall i \in [n]$.}
    \label{fig:rewardfair_ucb-sim}
    \Description{Experimental results on simulated data.}
\end{figure*} 

\begin{figure*}[ht!]
\centering
\begin{subfigure}{.33\textwidth}
    \centering
    \includegraphics[width=\linewidth]{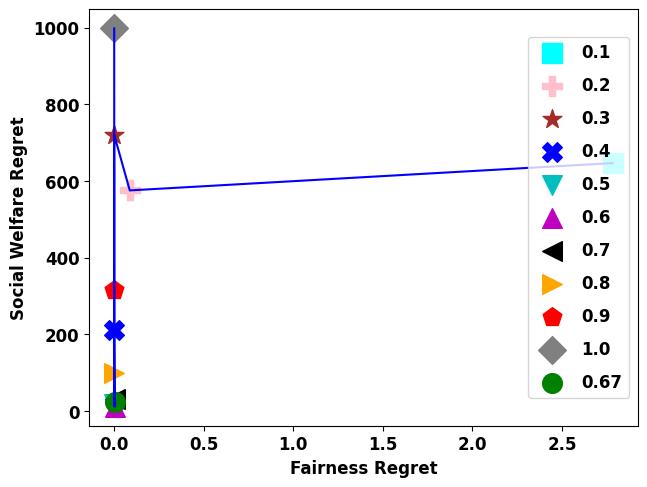}  
    \caption{Explore-Exploit tradeoff with {\sc Explore-First}.
    % : The tradeoff between Social Welfare regret and Fairness regret is shown as a function of exploration parameter $\alpha$ (displayed as legends). We can see that $\alpha=0.67$ obtains the optimal trade-off supporting our theoretical result.
    }
    \label{rw1}
\end{subfigure}
\begin{subfigure}{.33\textwidth}
    \centering
    \includegraphics[width=\linewidth]{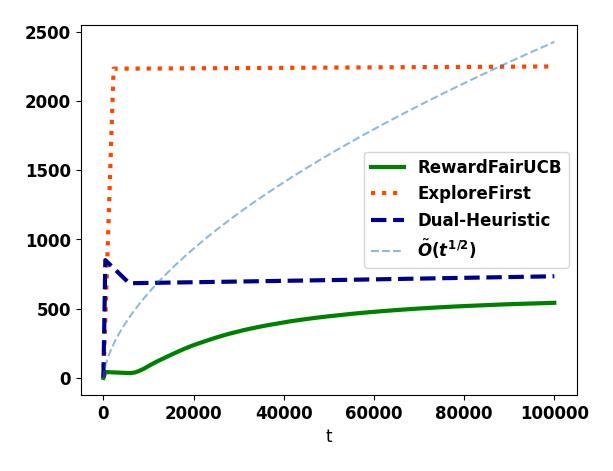}  
    \caption{Social welfare regrets vs timesteps.}
    \label{rw2}
\end{subfigure}
\begin{subfigure}{.33\textwidth}
    \centering
    \includegraphics[width=\linewidth]{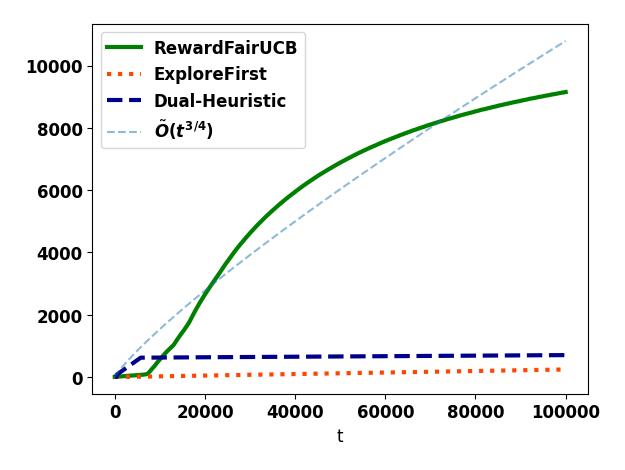}  
    \caption{Fairness regrets vs timesteps.}
    \label{rw3}
\end{subfigure}
    \caption{Experimental results on MovieLens real-world data ($n=6039, m=18$). $C_i$ is $1/m \ \forall i \in [n]$.}
    \label{fig:rewardfair_ucb}
    \Description{Experimental results on MovieLens data.}
\end{figure*} 

% \begin{figure*}[ht!]
%     \centering
%     \begin{subfigure}{0.49\textwidth}
%         \centering
%         \includegraphics[width=\linewidth]{AAMAS25/Empirical_Piyushi/New/oct8-ef-mlens.jpg}
%         \caption{Explore-Exploit Tradeoff}  
%         \label{fig:explore_exploit_tradeoff}
%     \end{subfigure}
%     \begin{subfigure}{0.49\textwidth} 
%         \centering
%         \includegraphics[width=\linewidth]{AAMAS25/plots_movie_lens/Real_world_data_UCB.png}
%         \caption{RewardFairUCB Performance}
%         \label{fig:rewardfair_ucb}
%     \end{subfigure}
%     \caption{Performance of algorithms on the MovieLens 1M dataset. (a) Tradeoff between exploration and exploitation. (b) Sublinear trend of social welfare and fairness regret using RewardFairUCB.} 
% \end{figure*}

% \gan{@Piyushi can you add the empirical results we have so far here?}
We now empirically validate the sublinear regret guarantees of the proposed algorithms and the efficacy of \ouralgo. 
\subsection{Common Experimental Setup}
The distribution $\mathcal{D}(\mu_{i,j})$ is taken $\textup{Ber}(\mu_{i, j})$ i.e. when an arm $j'\in [m]$ is sampled, agent $i\in [n]$ obtains a reward drawn from Bernoulli with parameter $\mu_{i, j}$. We plot the average regrets after simulating with 100 runs for different realizations of randomly sampled rewards. Complementing our analysis for \textsc{Explore-First} in the 2-arm case, we empirically find that the same regret guarantees for $m>2$ hold with same optimal exploration parameter valued, i.e.,  $\alpha$ = 0.67. For this, we replace Step (6) of \textsc{Explore-First} algorithm with the solution of the linear program P1 obtained with empirical reward estimates $\hat{A}$. The CVXPY \citep{diamond2016cvxpy} library is used to solve the linear programs wherever required in our algorithms. More results with different $A$ matrices are presented in the Appendix.
\subsection{Experiments on Simulated Data} 
Figure~\ref{fig:rewardfair_ucb-sim} shows results with a mean reward matrix $A$ of size (4, 3), i.e. $n=4, \ m=3$. The stopping time $T=10^5$ and $C_i=c=0.3\ \forall i\in [n]$. We first empirically show the trade-off between exploration and exploitation by plotting the social welfare regret and fairness regret of \textsc{Explore-First} algorithm on varying the exploration parameter $\alpha$. Figure~\ref{sim1} shows the plots comparing the social welfare regret and the fairness regret on varying the exploration parameter $\alpha$ from $\{0.1, 0.2, \cdots, 1.0, 0.67\}$, marked in the legend. The plotted regret values are after normalizing by $T$. The empirically observed best choice for obtaining low regrets for both social welfare and fairness is $\alpha=0.67$ which closely matches the theoretically optimal value of $\alpha=2/3$ derived for the 2-arm case in Sec~\ref{sec:warmup}. 

Figures~\ref{sim2} and~\ref{sim3} respectively compare the social welfare regrets and the fairness regrets of \ouralgo \ with the \textsc{Explore-First} baseline (with $\alpha=0.67$) and the dual heuristic (Sec~\ref{sec:dual}). Figure~\ref{sim2} shows that \ouralgo\ not only obtains a sub-linear regret but also outperforms the baselines and heuristics. We can also see sublinear regrets obtained by \textsc{Explore-First}, supporting our theoretical claim derived for the 2-arm case. Figure~\ref{sim3} demonstrates sublinear fairness regret of \ouralgo. While the \textsc{Explore-First} baseline and the dual-based heuristic obtain a lower fairness regret, they incur an excess social welfare regret. \ouralgo\ achieves optimal social welfare performance while maintaining a sublinear fairness regret.

\subsection{Experiments on Real-World Data}
Figure~\ref{fig:rewardfair_ucb} shows the performance of our algorithm on real-world data, MovieLens 1M \cite{10.1145/2827872}. MovieLens comprises ratings given by users to different movies. We obtain a user-genre matrix with the average rating that users assign to each movie genre. This matrix is normalized to have each entry in $[0, 1]$ and serves as the mean reward matrix $A$. For the movies associated with multiple genres, their contribution to each genre was divided equally. 

The $A$ matrix for this experiment is of size (6039, 18), i.e. $n=6039, \ m=18$. The stopping time $T=10^5$ and $C_i=c=1/m\ \forall i\in [n]$. We first empirically show the trade-off between social welfare regret and fairness regret of \textsc{Explore-First} algorithm on varying the exploration parameter $\alpha$.

We begin by empirically illustrating the effect of exploration and exploitation trade-off, controlled by the exploration parameter $\alpha$, on the social welfare regret and fairness regret of the \textsc{Explore-First}. Figure~\ref{rw1} shows the two regrets (normalized by $T$) with $\alpha$ values ranging from $\{0.1, 0.2, \cdots , 1.0, 0.67\}$ as marked in the legend. The empirically determined optimal that minimizes both social welfare regret and the fairness regret is $\alpha=0.67$ which closely matches the theoretically optimal $\alpha=2/3$ derived for the 2-arm case in Sec~\ref{sec:warmup}. Figures~\ref{rw2} and~\ref{rw3} compare the social welfare and fairness regrets of \ouralgo\ with the baseline algorithm \textsc{Explore-First} and the dual-heuristic. Figures~\ref{rw2} and~\ref{rw3} empirically demonstrate that \ouralgo\ obtains a sublinear regret for both social welfare and fairness. Although the \textsc{Explore-First} baseline and the dual-based heuristic achieve lower fairness regret, they lead to a higher social welfare regret. \ouralgo\ performs optimally in terms of social welfare and still obtains a sublinear fairness regret. The empirical results with \textsc{Explore-First} also support our theoretically sublinear regret claim that was derived for the 2-arm case. 
\section{Discussion and Future Work}
Our paper formulates a fair MA-MAB problem where the search is over reward-based social welfare maximizing policy that also ensures fairness to each agent.
Our notion of fairness guarantees a pre-specified fraction of the corresponding maximum possible rewards to each agent. 
We derive the lower bound of $\tilde{O}(\sqrt{T})$ that holds individually for both social welfare and fairness regret. 
Our proposed algorithm \ouralgo \ obtains an optimal (up to logarithmic constants) social welfare regret and a sub-linear fairness regret. We also propose baseline algorithms/heuristics for the problem, present the exploration-exploitation trade-off and empirically validate the efficacy of the proposed \ouralgo \ algorithm on both simulated and real-world data. Our algorithms can be easily made time-horizon unaware with a doubling trick \cite[Theorem 4]{besson2018doubling}.
Improving the fairness regret upper bound of $\tilde{O}(T^{3/4})$ to match the lower bound of $\Omega(\sqrt{T})$ would be an interesting future work which would also include theoretically analysing the proposed dual-based heuristics.
Another future work could be to extend the lower bounds for the regrets derived individually to hold simultaneously. It will also be interesting to extend our theoretical analysis for \textsc{Explore-First} algorithm (Algo~\ref{algOne-EF}) for $m>2$.

\begin{acks}
PM thanks Google for the PhD Fellowship. GG thanks support from SERB through grant CRG/2022/007927.  The authors also thank the anonymous reviewers for their constructive feedback.
\end{acks}

%%%%%%%%%%%%%%%%%%%%%%%%%%%%%%%%%%%%%%%%%%%%%%%%%%%%%%%%%%%%%%%%%%%%%%%%

%%% The next two lines define, first, the bibliography style to be 
%%% applied, and, second, the bibliography file to be used.

\bibliographystyle{ACM-Reference-Format} 
\bibliography{main}
\newpage 
\appendix
\onecolumn
\appendix 
\section{Missing Proofs From Sections~\ref{sec:settings_prelim} and~\ref{sec:warmup}}
\characterization* 
\begin{proof}
\noindent 
We prove the existence of fair policy in two cases separately. 
\newline
\noindent \textbf{Case 1 ($\sum_{i \in [n]} C_i \leq 1$):}   Consider without loss of generality that $C_i > 0$ for at-least one agent $i$.\footnote{For otherwise  if $\sum_{i}C_i =0$,  then for every policy $\pi \in \Delta_m$ we have $A_i\pi \geq 0$ for all $i \in [n]$ and hence every policy is feasible. 
} In particular, $\sum_{i} C_i > 0$. Let $j_i$ denote the smallest index $j \in [m]$ such that $j  \in \arg\max_{j' \in [m]} A_{i,j'}$. Consider the following policy. 
\begin{equation}
    \pi_j = \frac{\sum_{i=1}^n \mathds{1}(j = j_i) C_i}{\sum_{i=1}^n C_i}
    \label{eq:feasiblePolicy}
\end{equation}
First, observe that the policy given in Equation~\ref{eq:feasiblePolicy} is feasible i.e. is a valid probability distribution. It is easy to see that $\pi_j \geq  0$ for all $j$ and further 
\begin{align*}
    \sum_{j=1}^m \pi_j = \frac{\sum_{j=1}^m \sum_{i=1}^n \mathds{1}(j = j_i) C_i}{\sum_{i = 1}^n C_i} = \frac{ \sum_{i=1}^n C_i \sum_{j=1}^m \mathds{1}(j = j_i)}{\sum_{i = 1}^n C_i} = 1.   
\end{align*}
We now show that the policy obtained in Equation~\ref{eq:feasiblePolicy} satisfies fairness constraints. 
\begin{align*}
    A_i \pi  &= \sum_{j=1}^m A_{i,j}\pi_j =  A_{i,j_i}\pi_{j_i} + \sum_{j \neq j_i}^m A_{i,j}\pi_j   \geq  A_{i,j_i}\frac{C_{i}}{\sum_{i=1}^n C_i} \stackrel{(1)}{\geq} A_{i j_i} C_i =  C_i \Amax_i,     
\end{align*}
where inequality (1) above uses that $\sum_{i\in [n]}C_i\in (0, 1]$.
\newline
\textbf{Case 2 ($C_{\max} \leq \frac{1}{\min(n,m)}$): }    If $n \leq m $ then we have $ C_{\max} \leq \frac{1}{n}$ which implies that $\sum_{i=1}^n C_i \leq 1$. This is exactly the Case 1 discussed above. 

For $m < n$,  consider a policy $\pi = [1/m , 1/m, \cdots , 1/m]$ and note that 
\begin{align*}
    A_i \pi = 1/m \sum_{j =1}^m A_{i,j} \geq \frac{A_{i, j_i }}{m}  \stackrel{(2)}{\geq} C_{\max} A_{i, j_i } \geq C_i A_{i, j_i } = C_i \Amax_i,    
\end{align*}
where inequality (2) above uses that $C_{\max}\leq \frac{1}{m}$.
This completes the proof of the lemma. 
 \end{proof} 

\propOne*

\begin{proof}
Let us write the fairness constraints explicitly. Recall from the definition of $n_1$ that 
 for all $i \in [n_1]$  we have 
\begin{align*}
A_{i,1}x + (1-x) A_{i,2} \geq C_i    A_{i,1}   
\iff   x \geq \frac{C_i - \frac{A_{i,2}}{A_{i,1}}}{ 1 - \frac{A_{i,2}}{A_{i,1}}}. 
\end{align*}
Similarly for $i \in [n] \setminus [n_1]$ we have 
\begin{align*}
A_{i,1}x + (1-x) A_{i,2} & \geq C_i    A_{i,2}  
\iff   x  \leq \frac{1 - C_i}{ 1 - \frac{A_{i,1}}{A_{i,2}}}
\end{align*}
A policy $[x, 1-x]$ is feasible iff $\exists x \in [0,1]$ such that $ $ 

$$ \max \Bigg( 0,  \max_{i \in [n_1]}\frac{C_i - \frac{A_{i,2}}{A_{i,1}}}{ 1 - \frac{A_{i,2}}{A_{i,1}}} \Bigg) \leq x \leq \min \Bigg( 1,  \min_{i \in [n] \setminus [n_1]}\frac{1 - C_i}{ 1 - \frac{A_{i,1}}{A_{i,2}}} \Bigg). $$  The optimality of the policy $x^* = \min \Bigg( 1,  \min_{i \in [n] \setminus [n_1]}\frac{1 - C_i}{ 1 - \frac{A_{i,1}}{A_{i,2}}} \Bigg)$ follows from the fact that arm 1 is the optimal arm.  
\end{proof}

\section{Fairness and Social Welfare regret guarantees of \textsc{Explore-First} Algorithm}

Let $\mathcal{I}$ be a MA-MAB instance with $n \geq 1 $ agents and 2 arms. We begin with social welfare regret guarantee for \textsc{Explore-First}. We will first prove useful supporting results. We begin by stating  Chernoff's tail bounds, a well known result in probability theory. 

\begin{lemma}\label{lemma:chernoff-ltb}
[Chernoff's Lower Tail Bound] Let $X = \frac{1}{k}\sum_{i=1}^k X_i$ be the average of $k$ I.I.D. non-negative $\sigma$-sub-Gaussian random variables and let $ \mu=\mathbb{E}[X_i]$. Then for any $\delta\in [0, 1]$,
$$\mathbb{P}(X\leq (1-\delta) \mu)\leq \exp{\left(-\left(\frac{ \mu}{\sigma^2}\right)k\delta^2 \mu/2\right)}.$$
\end{lemma}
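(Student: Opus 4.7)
The plan is to recognize this as a direct specialization of the one-sided sub-Gaussian Hoeffding inequality (Lemma~\ref{lem:hoeffding}) already stated in the paper. First I would rewrite the event of interest as
\[
\{X \leq (1-\delta)\mu\} \;=\; \{X - \mu \leq -\delta\mu\},
\]
and observe that since $\mu \geq 0$ (the $X_i$ are non-negative) and $\delta \in [0,1]$, the quantity $\varepsilon := \delta\mu$ is a valid non-negative deviation. Then the lower-tail form of Lemma~\ref{lem:hoeffding} applied to $S_k = X = \tfrac{1}{k}\sum_{i=1}^k X_i$ with this choice of $\varepsilon$ gives
\[
\mathbb{P}(X - \mu \leq -\delta\mu) \;\leq\; \exp\!\Bigl(-\frac{k(\delta\mu)^2}{2\sigma^2}\Bigr) \;=\; \exp\!\Bigl(-\Bigl(\frac{\mu}{\sigma^2}\Bigr) k \delta^2 \mu/2\Bigr),
\]
which is exactly the stated inequality after grouping the factors as in the statement.

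For completeness, one can alternatively derive the bound from scratch via the Chernoff method, in case the citation is not desired. For any $\lambda > 0$, by Markov's inequality applied to $e^{\lambda(\mu - X)}$,
\[
\mathbb{P}(\mu - X \geq \delta\mu) \;\leq\; e^{-\lambda\delta\mu}\,\mathbb{E}\!\left[e^{\lambda(\mu - X)}\right].
\]
Since $\mu - X = \tfrac{1}{k}\sum_{i=1}^k (\mu - X_i)$ with the $X_i$ i.i.d.\ and $\sigma$-sub-Gaussian, the MGF factorizes and each factor is bounded by $\exp\!\bigl((\lambda/k)^2 \sigma^2/2\bigr)$, yielding $\mathbb{E}[e^{\lambda(\mu-X)}] \leq \exp(\lambda^2 \sigma^2/(2k))$. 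Optimizing the exponent $-\lambda\delta\mu + \lambda^2\sigma^2/(2k)$ over $\lambda>0$ at $\lambda^\star = k\delta\mu/\sigma^2$ produces the bound $\exp(-k\delta^2\mu^2/(2\sigma^2))$, matching the claim.

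There is no substantive obstacle here: this lemma is essentially a repackaging of Hoeffding's sub-Gaussian bound in multiplicative ($\delta$-relative) form, suited to the way it will be invoked downstream (where deviations are naturally expressed as a fraction of the mean). The only minor point to verify is that $\delta\mu \geq 0$ so that the one-sided tail bound is applied in the correct direction, which follows from $\delta \in [0,1]$ and the non-negativity of the $X_i$.
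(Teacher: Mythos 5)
Your proposal is correct, and your from-scratch Chernoff argument is essentially the paper's own proof: the paper takes $\lambda<0$ and bounds $\mathbb{P}(e^{\lambda X}\geq e^{(1-\delta)\lambda\mu})$ via Markov's inequality, the i.i.d.\ factorization, and the sub-Gaussian MGF bound, whereas you take $\lambda>0$ applied to $e^{\lambda(\mu-X)}$ — the identical computation with the same optimizer $\lambda^\star=k\delta\mu/\sigma^2$ and the same exponent $-k\delta^2\mu^2/(2\sigma^2)$. The only caveat is that your shortcut via Lemma~\ref{lem:hoeffding} requires the \emph{one-sided} version of that bound (as stated in the paper it is two-sided and would literally yield an extra factor of $2$), but since your second derivation establishes exactly that one-sided tail bound, the proof stands as written.
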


\begin{proof}
For any $\lambda\in \mathbb{R}; ~\lambda<0$,
\begin{align*}
\mathbb{P}(X\leq (1-\delta) \mu) = \mathbb{P}(e^{\lambda X}\geq e^{(1-\delta)\lambda \mu}) \stackrel{(M.I.)}{=} \frac{\mathbb{E}[e^{\lambda (X- \mu)}]}{e^{-\delta\lambda \mu}}\stackrel{(I.I.D.)}{=}\Pi_{i=1}^k \frac{\mathbb{E}[e^{\lambda/k(X_i- \mu)}]}{e^{-\delta\lambda \mu}}\leq  \frac{\Pi_{i=1}^s e^{\frac{\sigma^2 \lambda^2}{2k^2}}}{e^{-\delta\lambda \mu}}=e^{\frac{\sigma^2\lambda^2}{2k}+\delta\lambda \mu},
\end{align*}
where the second equality uses the Markov's inequality and the last inequality follows from the definition of $\sigma$-sub-Gaussian random variables. On choosing $\lambda = -\frac{k\delta  \mu}{\sigma^2}$ that gives the tightest upper-bound on the RHS, we obtain the final result.
\end{proof}

\begin{lemma}\label{lemma:chernoff-utb}
[Chernoff's Upper Tail Bound] Let $X = \frac{1}{k}\sum_{i=1}^k X_i$ be the average of $s$ I.I.D. non-negative $\sigma$-sub-Gaussian random variables and let $ \mu=\mathbb{E}[X_i]$. Then for any $\delta\in [0, 1]$,
$$\mathbb{P}(X\geq (1+\delta) \mu)\leq \exp{\left(-\left(\frac{ \mu}{\sigma^2}\right)k\delta^2 \mu/2\right)}.$$
\end{lemma}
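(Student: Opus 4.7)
The plan is to mirror the proof of Lemma~\ref{lemma:chernoff-ltb} (the lower tail bound) almost verbatim, but with the sign of the free parameter $\lambda$ reversed. The key observation is that the structural steps (exponential Markov, I.I.D. factorization, sub-Gaussian moment bound, optimization over $\lambda$) are symmetric in $\lambda$; the only place the sign matters is in ensuring the Markov step is applied correctly.

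First, I would fix an arbitrary $\lambda > 0$ and write
\[
\mathbb{P}(X \geq (1+\delta)\mu) \;=\; \mathbb{P}\!\left(e^{\lambda X} \geq e^{(1+\delta)\lambda \mu}\right) \;\leq\; \frac{\mathbb{E}[e^{\lambda X}]}{e^{(1+\delta)\lambda \mu}} \;=\; e^{-\delta \lambda \mu}\cdot \mathbb{E}[e^{\lambda(X-\mu)}],
\]
where the inequality is Markov's (valid because $e^{\lambda X} \geq 0$ and $\lambda > 0$). Next, I would use the I.I.D. decomposition $X - \mu = \frac{1}{k}\sum_{i=1}^k (X_i - \mu)$ together with independence:
\[
\mathbb{E}[e^{\lambda(X-\mu)}] \;=\; \prod_{i=1}^k \mathbb{E}\!\left[e^{(\lambda/k)(X_i-\mu)}\right] \;\leq\; \prod_{i=1}^k e^{\sigma^2 \lambda^2/(2k^2)} \;=\; e^{\sigma^2 \lambda^2/(2k)},
\]
where the inequality applies the $\sigma$-sub-Gaussian moment bound to each factor. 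Combining yields, for every $\lambda > 0$,
\[
\mathbb{P}(X \geq (1+\delta)\mu) \;\leq\; \exp\!\left(\frac{\sigma^2 \lambda^2}{2k} - \delta \lambda \mu\right).
\]

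Finally, I would optimize the RHS over $\lambda > 0$ by setting the derivative of the exponent to zero, giving $\lambda^\star = k\delta \mu / \sigma^2$, which is strictly positive (so the choice is admissible for the Markov step above). Plugging back in gives an exponent of $-k\delta^2 \mu^2/(2\sigma^2)$, i.e.\ the claimed bound. I do not foresee a real obstacle here: the argument is an exact mirror of Lemma~\ref{lemma:chernoff-ltb}, and the only subtle point is to check that the optimizer $\lambda^\star$ lies in the region $(0,\infty)$ over which Markov's inequality was applied, which it does since $\delta, \mu, k, \sigma^2$ are all positive.
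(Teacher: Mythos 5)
Your proposal is correct and follows essentially the same route as the paper's proof: exponential Markov with $\lambda>0$, I.I.D. factorization, the $\sigma$-sub-Gaussian moment bound on each factor, and optimization at $\lambda^\star = k\delta\mu/\sigma^2$. The only (welcome) difference is that you correctly state the Markov step as an inequality, whereas the paper's displayed chain mislabels it as an equality.
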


\begin{proof}
For any $\lambda\in \mathbb{R}; ~\lambda>0$,
\begin{align*}
    \mathbb{P}(X\geq (1+\delta) \mu) = \mathbb{P}(e^{\lambda X}\geq e^{(1+\delta)\lambda \mu}) \stackrel{(M.I.)}{=} \frac{\mathbb{E}[e^{\lambda (X- \mu)}]}{e^{\delta\lambda \mu}}\stackrel{(I.I.D.)}{=}\Pi_{i=1}^k \frac{\mathbb{E}[e^{\lambda/k(X_i- \mu)}]}{e^{\delta\lambda \mu}}\leq  \frac{\Pi_{i=1}^s e^{\frac{\sigma^2 \lambda^2}{2k^2}}}{e^{\delta\lambda \mu}}=e^{\frac{\sigma^2\lambda^2}{2k}-\delta\lambda \mu},
\end{align*}
where the second equality uses the Markov's inequality and the last inequality follows from the definition of $\sigma$-sub-Gaussian random variables. On choosing $\lambda = \frac{k\delta  \mu}{\sigma^2}$ that gives the tightest upper-bound on the RHS, we obtain the final result.
\end{proof}
We also prove another useful lemma which will be used in bounding the regrets of \textsc{Explore-First}. We present the proofs with $C_i=c\ \forall i\in [n]$. We will use that both arms are equally pulled up to $2t'$. We consider $[n]\setminus [n_1]$ to be non-empty as otherwise, we have a trivial optimal policy. 
\begin{restatable}{lemma}{lemOne}
\label{lem:lemOne}
%Let $x^t$ be the probability that the \textsc{Explore-First} algorithm pulls arm $1$ for $t >  2 t'$ and let $\delta \in (0,1)$. 
Let  $a_{\min} : = \min_{i \in[n],j \in [m]} A_{i,j} > 0$ and $x^t$ be the \textsc{Explore-First} policy.  Then,  with  probability atleast $ 1 - 2 \exp(-t'\delta^2 a_{\min}^2/(2\sigma^2))$,  we have, 
$$ x^* - x^t \leq \frac{2c\delta}{(1+\delta)(1-c)}.$$  
\end{restatable}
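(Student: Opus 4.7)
The plan is to show that on a high-probability ``good event'' where Chernoff concentration holds for the exploration-phase averages, the estimated ratios $\hat r_i := \widehat A_{i,1}/\widehat A_{i,2}$ are multiplicatively close to the true ratios $r_i := A_{i,1}/A_{i,2}$, and then to propagate this closeness through the explicit formula for $x^*$ and $x^t$ given in Lemma~\ref{prop:One}. Let $i^{\star} \in \argmin_{i \in [n]\setminus[n_1]} r_i$ denote the ``binding'' agent for the true problem and write $\alpha := (1-\delta)/(1+\delta)$. Applying Chernoff's lower- and upper-tail bounds (Lemmas~\ref{lemma:chernoff-ltb} and~\ref{lemma:chernoff-utb}) to the $t'$-sample estimates, with $A_{i,j}\geq a_{\min}$ in the exponent, the first step is to conclude that the event
\[
E := \bigcap_{i} \bigl\{\widehat A_{i,1} \geq (1-\delta)A_{i,1},\ \widehat A_{i,2} \leq (1+\delta)A_{i,2}\bigr\}
\]
holds with probability at least $1 - 2\exp(-t'\delta^2 a_{\min}^2/(2\sigma^2))$, after absorbing a union bound over the $n$ agents into the constants. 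On $E$, one has $\hat r_i \geq \alpha r_i \geq \alpha r_{i^\star}$ for every $i \in [n]$.

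The next step translates this uniform lower bound on $\hat r_i$ into a lower bound on $x^t$. Since $r \mapsto (1-c)/(1-r)$ is strictly increasing on $[0,1)$, every agent $i$ in the estimated constraint set $\{i : \hat r_i < 1\}$ satisfies $\hat x_i := (1-c)/(1-\hat r_i) \geq (1-c)/(1-\alpha r_{i^\star})$. Taking the minimum and truncating at $1$, as in the formula for $x^t$, yields
\[
x^t \geq \min\!\left\{1,\ \frac{1-c}{1-\alpha r_{i^\star}}\right\}.
\]

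The remainder is algebraic. Comparing with $x^* = \min\{1, (1-c)/(1-r_{i^\star})\}$, a short case analysis on $r_{i^\star}$ (the regimes $r_{i^\star} \leq c$, $c < r_{i^\star} < c/\alpha$, and $r_{i^\star} \geq c/\alpha$) shows that
\[
x^* - x^t \leq \frac{c(1-\alpha)}{1-\alpha c},
\]
since the third regime gives $x^t = 1 = x^*$ trivially, while the first two are bounded by monotonicity of the relevant rational function in $r_{i^\star}$, with both maxima attained at $r_{i^\star} = c$. Substituting $1-\alpha = 2\delta/(1+\delta)$ and using $1 - \alpha c \geq 1 - c$ (equivalent to $\alpha \leq 1$) produces exactly $2c\delta/((1+\delta)(1-c))$ on the right-hand side.

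The main obstacle I anticipate is the bookkeeping around the index sets $\{i : r_i < 1\}$ and $\{i : \hat r_i < 1\}$: the one-sided control $\hat r_i \geq \alpha r_i$ does not, on its own, guarantee that these sets coincide. One has to argue carefully that agents whose empirical ordering of the two arms flips relative to the true ordering only \emph{loosen} the estimated constraints and hence cannot push $x^t$ below the bound derived from the ``stable'' indices $[n]\setminus[n_1]$.
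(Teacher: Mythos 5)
Your derivation is essentially sound but takes a genuinely different route from the paper, and it has one quantitative discrepancy that you should not wave away. The paper does not introduce a uniform good event over all agents; instead it defines a single bad event $E_\delta=\{\widehat{A}_{i',1}/\widehat{A}_{i',2}\leq A_{i',1}/A_{i',2}-2c\delta/(1+\delta)\}$ attached to the \emph{empirical} argmin index $i'$, bounds $\mathbb{P}(E_\delta)$ by splitting on whether $\widehat{A}_{i',2}\leq(1+3\delta)A_{i',2}$ (so that only two Chernoff tails are ever invoked, giving exactly the factor $2$ in the lemma), and then runs a three-case analysis on whether $x^*$ and $x^t$ are clipped at $1$. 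Your version replaces the additive ratio gap by the multiplicative relation $\hat r_i\geq \alpha r_i$ with $\alpha=(1-\delta)/(1+\delta)$, propagates it through the monotone map $r\mapsto(1-c)/(1-r)$, and optimizes the resulting rational function over $r_{i^\star}$; the algebra ($1-\alpha=2\delta/(1+\delta)$, $1-\alpha c\geq 1-c$, maxima at $r_{i^\star}=c$) is correct and recovers the same bound $2c\delta/((1+\delta)(1-c))$. Your closing worry about the index sets $\{i:r_i<1\}$ versus $\{i:\hat r_i<1\}$ is also resolved correctly by your own construction: the uniform bound $\hat r_i\geq\alpha r_i\geq\alpha r_{i^\star}$ applies to flipped agents as well, so they can only loosen the empirical minimum.

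The genuine issue is the probability constant. Your event $E$ is an intersection of $2n$ one-sided tail events, so the union bound yields failure probability at most $2n\exp(-t'\delta^2 a_{\min}^2/(2\sigma^2))$, not $2\exp(-t'\delta^2 a_{\min}^2/(2\sigma^2))$ as the lemma asserts; $n$ is a problem parameter, not a constant that can be absorbed. The paper sidesteps this by conditioning only on the two empirical means at the single index $i'$ (at the cost of some delicacy about $i'$ being data-dependent). As written, your argument proves a weaker statement, and when the lemma is plugged into the regret bound of Theorem~2 the extra factor of $n$ in the failure probability degrades the dependence on $n$ (the $\tilde{O}(T^{2/3})$ rate in $T$ survives, but the constant in front becomes roughly $n^2$ rather than $n$ unless $\delta$ is re-tuned). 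Either restrict your concentration event to the two indices that actually enter the comparison, as the paper does, or state the lemma with the factor $2n$ and track it downstream.
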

\begin{proof}[Proof of lemma~\ref{lem:lemOne}]
Let $i^*\in \argmin_{i \in [n] \setminus [n_1]}\left(\frac{1 - c}{ 1 - \frac{{A}_{i,1}}{{A}_{i,2}}}\right)$ and $i'\in \argmin_{i :\widehat{A}_{i, 1}<\widehat{A}_{i, 2}}\left(\frac{1 - c}{ 1 - \frac{\widehat{A}_{i,1}}{\widehat{A}_{i,2}}}\right)$.
\begin{align*}
    x^*-x^t&= \min \left( 1,  \min_{i \in [n] \setminus [n_1]}\frac{1 - c}{ 1 - \frac{A_{i,1}}{A_{i,2}}} \right)-\min \left( 1, \min_{i:\widehat{A}_{i, 1}<\widehat{A}_{i, 2}} \frac{1 - c}{ 1 - \frac{\widehat{A}_{i,1}}{\widehat{A}_{i,2}}} \right)
\end{align*}
When $\{i: \widehat{A}_{i, 1}<\widehat{A}_{i, 2}\}\neq \phi$, $x^*-x^t = \min\left(1, \frac{1-c}{1-\frac{A_{i^*,1}}{{A_{i^*,2}}}}\right)
    - \min\left(1, \frac{1-c}{1-\frac{\widehat{A}_{i',1}}{{\widehat{A}_{i',2}}}}\right),$ else, $x^*-x^t=\min\left(1, \frac{1-c}{1-\frac{A_{i^*,1}}{{A_{i^*,2}}}}\right)
    - 1.$
\newline
We bound $x^*-x^t$ under the event $E_\delta=\left\{\frac{\widehat{A}_{i', 1}}{\widehat{A}_{i', 2}}\leq \frac{A_{i', 1}}{A_{i', 2}}-\frac{2c\delta}{1+\delta}\right\}$ for $\delta\in \left(0, 1\right)$ and $i'\in [n]$ when $\frac{\widehat{A}_{i', 1}}{\widehat{A}_{i', 2}}<c$. As $\frac{\widehat{A}_{i', 1}}{\widehat{A}_{i', 2}}<c$, the event $E_\delta$ implies $\frac{\widehat{A}_{i', 1}}{\widehat{A}_{i', 2}}\leq \frac{1+\delta}{(1+3\delta)}\frac{A_{i', 1}}{A_{i', 2}}$.
Further, $E_\delta \cap \left\{ \widehat{A}_{i', 2} \leq  \left(1+3\delta \right)A_{i', 2} \right\}\subseteq \left\{ \widehat{A}_{i', 2} \leq  \left(1+3\delta \right)A_{i', 2} \right\}\cap \left\{ \widehat{A}_{i', 1} \leq  \left(1+\delta \right)A_{i', 1} \right\}.$
\begin{align*}
    \mathbb{P}\left(E_\delta \cap \left\{ \widehat{A}_{i', 2} \leq  \left(1+3\delta \right)A_{i', 2} \right\}\right) 
    &\leq \mathbb{P}\left(\left\{ \widehat{A}_{i', 2} \leq  \left(1+3\delta \right)A_{i', 2} \right\}\right) \mathbb{P}\left(\left\{ \widehat{A}_{i', 1} \leq  \left(1+\delta \right)A_{i', 1} \right\}\right) \ \tag{From independence of $\widehat{A}_{i', 1} \ \& \ \widehat{A}_{i', 2}$} \\
    &\leq \mathbb{P}\left(\left\{ \widehat{A}_{i', 1} \leq  \left(1+\delta \right)A_{i', 1} \right\}\right)  \\
    &\leq \exp\left( -\frac{t'}{2}\left(A_{i', 1}/\sigma\right)^2\delta^2 \right) \ \tag{From Lemma~\ref{lemma:chernoff-utb}}
\end{align*}
Also, $\mathbb{P}\left(E_\delta \cap \left\{ \widehat{A}_{i', 2} >  \left(1+3\delta \right)A_{i', 2} \right\}\right) \leq  \mathbb{P}(\widehat{A}_{i', 1}\leq (1+\delta) A_{i', 1}) \leq \exp\left( -\frac{t'}{2}\left(A_{i', 1}/\sigma\right)^2\delta^2 \right)$ (Lemma~\ref{lemma:chernoff-utb}). Thus, we have the following.
\begin{align*}
    \mathbb{P}(E_\delta)&=  \mathbb{P}\left(E_\delta \cap \left\{ \widehat{A}_{i', 2} \leq  \left(1+3\delta \right)A_{i', 2} \right\}\right) + \mathbb{P}\left(E_\delta \cap \left\{ \widehat{A}_{i', 2} >  \left(1+3\delta \right)A_{i', 2} \right\}\right)\\
    &\leq  2\exp\left( -\frac{t'}{2}\left(A_{i', 1}/\sigma\right)^2\delta^2 \right)\\
    &\leq 2\exp\left( -\frac{t'}{2}\left(a_{\min}/\sigma\right)^2\delta^2 \right).
\end{align*}
Now, we begin the proof of upper-bounding $x^*-x^t$.
\newline
\textbf{Case (I) $x^*<1$,\ $x^t<1$:}
We first note that $x^*<1\implies \frac{A_{i^*, 1}}{A_{i^*, 2}}<c$ and $x^t<1\implies \frac{\widehat{A}_{i', 1}}{\widehat{A}_{i', 2}}<c$.
\begin{align*}
    x^*-x^t & = \frac{1-c}{1-\frac{A_{i^*,1}}{A_{i^*,2}}}-\frac{1-c}{1-\frac{\widehat{A}_{i',1}}{\widehat{A}_{i',2}}} \\
    &\leq \frac{1-c}{1-\frac{A_{i',1}}{A_{i',2}}}-\frac{1-c}{1-\frac{\widehat{A}_{i',1}}{\widehat{A}_{i',2}}} \ \tag{From the definition of $i^*$ and $\widebar{E_\delta}$}
\end{align*}
The last inequality follows by noting that under $\widebar{E_\delta},\ i'\in ([n]\setminus [n_1])$ and then we use the fact that $i^*$ is the argmin obtained after searching over the set $([n]\setminus [n_1])$. Thus, with probability at least $1-\mathbb{P}(E_\delta)$, we have the following
\begin{align*}
    x^*-x^t&\leq (1-c)\frac{\frac{2c\delta}{1+\delta}}{(1-c)\left(1-\frac{A_{i', 1}}{A_{i', 2}} \right)}\\
    &\leq (1-c)\frac{\frac{2c\delta}{1+\delta}}{(1-c)\left(1-\frac{\widehat{A}_{i', 1}}{\widehat{A}_{i', 2}} +\frac{2c\delta}{1+\delta} \right)} \ \tag{under $\widebar{E_\delta}$}\\
    &\leq \frac{2c\delta}{1+\delta}.
\end{align*}
\textbf{Case (II) $x^*<1$,\ $x^t=1$:} Here, $x^*-x^t<0$.
% We first note that $x^*<1\implies \frac{A_{i^*, 1}}{A_{i^*, 2}}<c$. 
% \begin{align*}
%     x^*-x^t = \frac{\frac{A_{i^*, 1}}{A_{i^*, 2}}-c}{1-\frac{A_{i^*, 1}}{A_{i^*, 2}}}\leq 0 \ \left(\because 0<\frac{A_{i^*, 1}}{A_{i^*, 2}}<c\right).
% \end{align*}
\newline
\textbf{Case (III) $x^*=1$,\ $x^t<1$}: 
We first note that $x^*=1\implies \frac{A_{i^*, 1}}{A_{i^*, 2}}\geq c$ as $[n]\setminus [n_1]$ is always taken to be non-empty. 

\begin{align*}
x^* - x^t = \frac{c - \frac{\widehat{A}_{i',1}}{\widehat{A}_{i',2}}}{1 - \frac{\widehat{A}_{i',1}}{\widehat{A}_{i',2}}}
\end{align*}
As RHS is a decreasing function in $\frac{\widehat{A}_{i', 1}}{\widehat{A}_{i', 2}}$, with probability $1-\mathbb{P}(E_\delta)$, we have the following.
\begin{align*}
    x^*-x^t\leq & \frac{c-\frac{A_{i', 1}}{A_{i', 2}}+\frac{2c\delta}{1+\delta}}{1-\frac{A_{i', 1}}{A_{i', 2}}+\frac{2c\delta}{1+\delta}}
\end{align*}
This RHS is a decreasing function of $\frac{A_{i', 1}}{A_{i', 2}}$. We know that if $i'\in [n]\setminus [n_1]$, $\frac{A_{i', 1}}{A_{i', 2}}\geq \frac{A_{i^*, 1}}{A_{i^*, 2}}\geq c$. Also, if $i'\in [n_1]$, $\frac{A_{i', 1}}{A_{i', 2}}\geq 1\geq c$. Thus, we have the following,
\begin{align*}
    x^*-x^t\leq & \frac{\frac{2c\delta}{1+\delta}}{1-c+\frac{2c\delta}{1+\delta}}=\frac{2c\delta}{(1-c)+\delta(1+c)} \leq \frac{2c\delta}{(1-c)(1+\delta)}.
\end{align*}
This proves our claim that with probability $\mathbb{P}(\bar{E_\delta})$ i.e. with probability at least $\left(1-2\exp\left( -\frac{t'}{2}\left(a_{\min}/\sigma\right)^2\delta^2 \right)\right)$, we have $x^*-x^t\leq \frac{2c\delta}{(1-c)(1+\delta)}.$

% \piyushi{$\mathbb{P}(E_\delta)$ had to be changed as the previous proof uses $\frac{A_{i', 1}}{A_{i', 2}}<c$ but here, we have $\frac{\widehat{A}_{i', 1}}{\widehat{A}_{i', 2}}<c$. In this new proof, we are only using the upper-tail bound, not the lower tail bound.}
% \piyushi{We could've also defined $E_\delta=\{\frac{\widehat{A}_{i', 1}}{\widehat{A}_{i', 2}}\leq \frac{(1-\delta)}{(1+\delta)}\frac{A_{i^*, 1} }{A_{i^*, 2}}\}$ but the original $E_\delta$ is used in the fairness regret's proof also.}
\end{proof}

\begin{manualtheorem}{2}[Part (1): Social Welfare Regret of \textsc{Explore-First}] For any feasible MA-MAB instance $\mathcal{I}$ with $T$ sufficiently large and $C_i=c \forall i\in [n]$,     expected social welfare regret of \textsc{Explore-First} algorithm is upper-bounded by $O\left(\frac{n}{a_{\min} } T^{2/3} \sqrt{\log(T)}\right)$. Here $a_{min}:= \min_{i \in[n],j \in [m]} A_{i,j}$.  
\end{manualtheorem}
% \ExploreFirstFairnessRegret*
\begin{proof}
The social welfare regret for 2-arm MA-MAB can be written as 
$$\mathcal{R}_\textsc{SW}(T) =  \sum_{t=1}^T (x^* -x^t) \Delta. 
$$

Here, $\Delta: = \sum_{i=1}^n A_{i,1} - \sum_{i=1}^n A_{i,2} $ ($0<\Delta \leq n$) denote the difference between the social welfare generated by optimal  arm 1 and suboptimal arm 2. The expectation is taken over the instance, and also randomness in the policy.

Let $\widehat A$ denote the sample estimate of $A$ after the exploration phase i.e. first $ 2t' :=\lfloor T^\alpha \rfloor$ rounds. \footnote{We assume that $\lfloor T^\alpha \rfloor$ is even number. However, otherwise we can consider $\lfloor T^\alpha \rfloor$ +1 with additional regret of 1. This does not affect our regret guarantee in order terms.     }  %We will also write $E_{\delta} $ to denote an event $ \Big \{ \frac{\widehat{A}_{i,1} }{ \widehat{A}_{i,1}}  \leq  \frac{A_{i,1}}{A_{i,2}} - \frac{2\delta c}{1+ \delta } \Big \}$ for some $\delta > 0$.  
Furthermore, let the arm pull distribution of \textsc{Explore-First} algorithm  be denoted by $[x^t, 1-x^t]$ and $[x^*, 1-x^*]$ be  optimal feasible policy. 

% From Chernoff's upper tail bound (Lemma~\ref{lemma:chernoff-utb}) we have with probability atmost $\exp\left(- \Big(\frac{A_{i,2}}{\sigma^2} \Big)n \delta^2 A_{i,2}/2\right)$ that   
% \begin{equation}
% \widehat{A}_{i,2} \geq (1+ \delta) A_{i,2}.  
% \end{equation}

%  Similarly,  from Chernoff's lower tail bound (Lemma~\ref{lemma:chernoff-ltb}) we have with probability  atmost $\exp\left(-\Big(\frac{A_{i,1}}{\sigma^2} \Big) n\delta^2 A_{i,1}/2\right)$ that  
% \begin{equation}
% \widehat{A}_{i,1} \leq (1-\delta) A_{i,1}.  
% \end{equation}

Recall that the \textsc{Explore-First} algorithm pulls arm 1 with probability $1/2$ till first $2t'$ round and with probability $x^t$ for all $t\geq 2t'$. We now bound the expected social welfare regret.
\begin{align*}
\mathbb{E} \left[\mathcal{R}_\textsc{SW}(T)\right] & =  \sum_{t=1}^T \mathbb{E}[(x^* -x^t)  \Delta]\\ 
& = \sum_{t=1}^{2t'} (x^* - 1/2) \Delta + \sum_{t=2t' + 1}^{T} (\mathbb{E}[x^* -x^t]) \Delta \\
& \leq 2nt' + \left( \sum_{t=2t' + 1}^{T} \frac{2c\delta}{(1+\delta)(1-c)}  \mathbb{P}\Big(x^* - x^t \le \frac{2c\delta}{(1+\delta)(1-c)} \Big)  +  \sum_{t=2t' + 1}^{T}  \mathbb{P}\Big(x^* - x^t >  \frac{2c\delta}{(1+\delta)(1-c)} \Big)      \right) \Delta \tag{From Lemma~\ref{lem:lemOne}}\\  
& \leq 2nt' + n (T - 2t') \Big[ \left(1 - 2e^{-t'\delta^2a_{\min}^2/2\sigma^2}\right) \frac{2\delta c}{(1+\delta)(1-c)} +  2e^{-t' \delta^2  a_{\min}^2/2\sigma^2} \Big] \tag{From Lemma~\ref{lem:lemOne}}
\intertext{Choose $2t' = T^{2/3}$ and $\delta = \left(\frac{2\sigma}{ a_{\textup{min}}}\right)\sqrt{\frac{\log{T}}{3T^{2/3}}}$ with $T$ sufficiently large for $\delta\in (0, 1)$ to get,}  
\mathbb{E}\left[\mathcal{R}_\textsc{SW}(T)\right] & \leq n \left( T^{2/3} +  \frac{4cT^{2/3}\sqrt{\log{T}}}{(1-c)(a_{\min}/\sigma)} + T^{2/3} \right) \\ 
&\leq n \left( T^{2/3} +  \frac{4T^{2/3}\sqrt{\log{T}}}{(a_{\min}/\sigma)} + T^{2/3} \right) \tag{with the sufficient condition for feasibility, $c\in (0, 1/2]$}\\
&\leq 6\frac{n\sigma}{a_{\min}}T^{2/3}\sqrt{\log{(T)}}
\end{align*}
This proves the claim. Our proof needs $c<1$, which is justified because $c=1$ would simplify the corresponding fairness constraint to be $ \langle A_i,\pi\rangle = \Amax_i$, which in turn fixes certain coordinates of $\pi$ making the problem simpler.
\end{proof}

Next, we bound the fairness regret of \textsc{Explore-First} algorithm. We begin with a crucial lemma.
\begin{lemma}\label{fairness-reform}
The Fairness regret for the case with two arms simplifies as \begin{equation}
\mathcal{R}_\textsc{FR}(T) \leq \sum_{t=1}^T\sum_{i=1}^n \left|(A_{i,1}-A_{i,2})\cdot (x^*-x^t)\right|_+.
\end{equation}
\end{lemma}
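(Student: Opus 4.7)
The plan is to reduce the fairness gap at each $(t,i)$ to the policy-difference expression $|(A_{i,1}-A_{i,2})(x^*-x^t)|_+$ by inserting the optimal fair policy $\pi^*=[x^*,1-x^*]$ as an intermediate bound and then exploiting the monotonicity of the clipped map $x\mapsto |x|_+ = \max(x,0)$.

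Concretely, I would first rewrite the expected reward to agent $i$ under the two-arm policy $\pi^t=[x^t,1-x^t]$ as $\mathbb{E}_{\pi^t}[X_i^t] = A_{i,1}x^t + A_{i,2}(1-x^t)$, and similarly for $\pi^*$. Since $\pi^*$ is an optimal \emph{feasible} policy of P1 (which exists by Theorem~\ref{thm:characterization} together with Lemma~\ref{prop:One}), it satisfies the fairness constraint
\begin{equation*}
C_i \Amax_i \;\leq\; A_{i,1}x^* + A_{i,2}(1-x^*) \qquad \forall\, i\in[n].
\end{equation*}
Subtracting $\mathbb{E}_{\pi^t}[X_i^t]$ from both sides preserves the inequality, and because $|\cdot|_+$ is nondecreasing, this gives
\begin{equation*}
\bigl|\,C_i \Amax_i - \mathbb{E}_{\pi^t}[X_i^t]\,\bigr|_+
\;\leq\; \bigl|\,A_{i,1}x^* + A_{i,2}(1-x^*) - A_{i,1}x^t - A_{i,2}(1-x^t)\,\bigr|_+.
\end{equation*}

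Finally, I would simplify the right-hand side algebraically:
\begin{equation*}
A_{i,1}(x^*-x^t) + A_{i,2}\bigl((1-x^*)-(1-x^t)\bigr) \;=\; (A_{i,1}-A_{i,2})(x^*-x^t),
\end{equation*}
so the bound becomes $|(A_{i,1}-A_{i,2})(x^*-x^t)|_+$. Summing over $i\in[n]$ and $t\in[T]$ yields the claim. The only nontrivial ingredient is the feasibility of $\pi^*$ for P1, which is guaranteed under the working assumptions of Section~\ref{sec:warmup}; everything else is elementary monotonicity of $|\cdot|_+$ and linearity of expectation, so I do not anticipate any real obstacle.
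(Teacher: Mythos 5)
Your proof is correct, and it is genuinely more direct than the paper's. The paper proves the same pointwise inequality $C_i\mathbf{A}^{\star}_i-\langle A_i,\pi^t\rangle\leq (A_{i,1}-A_{i,2})(x^*-x^t)$, but does so by first deriving an exact algebraic identity for the fairness gap in terms of $e^i_1$, then splitting into the cases $i\in[n_1]$ and $i\in[n]\setminus[n_1]$ and invoking the explicit threshold characterization of the feasible interval for $x$ from Lemma~\ref{prop:One} (which also forces it to assume $A_{i,1}\neq A_{i,2}$ and track the sign of $A_{i,1}-A_{i,2}$ in each case). You bypass all of this: feasibility of $\pi^*$ gives $C_i\mathbf{A}^{\star}_i\leq\langle A_i,\pi^*\rangle$ directly, monotonicity of $|\cdot|_+$ then bounds the gap by $|\langle A_i,\pi^*-\pi^t\rangle|_+$, and the two-arm structure collapses this to $|(A_{i,1}-A_{i,2})(x^*-x^t)|_+$. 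Your argument needs no case analysis, handles $A_{i,1}=A_{i,2}$ without comment, and generalizes verbatim to $m>2$ arms in the form $|\langle A_i,\pi^*-\pi^t\rangle|_+$. What the paper's longer route buys is the set of intermediate exact identities (its Equations for the two cases), which are reused later in the fairness lower-bound proof of Theorem~\ref{thm:lowerBound} to construct instances where the inequality is tight; for the lemma as stated, your argument is sufficient and preferable.
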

\begin{proof}
This proof uses the characterization of the feasible policy presented in Proposition (\ref{prop:One}), restated below.
\begin{equation}
    \max \Bigg( 0,  \max_{i \in [n_1]}\frac{C_i - \frac{A_{i,2}}{A_{i,1}}}{ 1 - \frac{A_{i,2}}{A_{i,1}}} \Bigg) \leq x \leq \min \Bigg( 1, \min_{i \in [n] \setminus [n_1]} \frac{1 - C_i}{ 1 - \frac{A_{i,1}}{A_{i,2}}} \Bigg).
    \label{feasible-x}
\end{equation}

We first simplify $C_i\Amax_i-A_i^\top \pi^t$. Let $e^i\in \{0, 1\}^m$  be such that $e^i_j=1$, for $j\in\argmax_{j\in[m]}\{A_{i,j}\}$ having the least index.

We have, 
\begin{align}
    C_i\Amax_i-\langle A_i, \pi^t\rangle &=A_{i,1}e^i_1C_i+A_{i,2}(1-e^i_1)C_i - A_{i,1}x^t - A_{i,2}(1-x^t) \nonumber \\
    &=A_{i,1}\left( C_ie^i_1-x^t \right)-A_{i,2}\left(C_ie^i_1-x^t+1-C_i\right) \nonumber \\
    &= \left(A_{i,1}-A_{i,2}\right).\left( C_ie^i_1-x^t \right)-A_{i,2}(1-C_i)\nonumber \\
    & = (A_{i,1}-A_{i,2})\left( (C_ie^i_1-x^t)-\frac{A_{i,2}}{A_{i,1}-A_{i,2}}(1-C_i)\right)\label{simplify}
\end{align}
An implicit assumption is that $A_{i, 1}\neq A_{i,2}\forall i\in [n]$. This is without loss of generality because when $A_{i, 1}= A_{i,2}$, agent $i$ does not contribute to the fairness regret.
We now separately analyze the case for $i\in[n_1]$ and $i\in[n]\setminus [n_1]$.
\newline
\textbf{Case}: For $i\in[n_1]$ i.e. for $\{i\in [n]\ | A_{i,1}>A_{i,2}\}$. For such $i$'s, $e^i_1=1$, with which (\ref{simplify}) simplifies to the following.
\begin{align}
C_i\Amax_i-\langle A_i, \pi^t\rangle & = (A_{i,1}-A_{i,2})\left( (C_i-x^t)-\frac{A_{i,2}}{A_{i,1}-A_{i,2}}(1-C_i)\right)\nonumber \\
&=(A_{i,1}-A_{i,2})\left( \frac{C_iA_{i,1}-C_iA_{i,2}-A_{i,2}+C_iA_{i,2}}{A_{i,1}-A_{i,2}}-x^t \right)\nonumber \\
&=(A_{i,1}-A_{i,2})\left( \frac{C_iA_{i,1}-A_{i,2}}{A_{i,1}-A_{i,2}}-x^t \right)\nonumber \\
&=(A_{i,1}-A_{i,2})\left( \frac{C_i-\frac{A_{i,2}}{A_{i,1}}}{1-\frac{A_{i,2}}{A_{i,1}}}-x^t \right)\nonumber \\
& \leq (A_{i,1}-A_{i,2})\cdot (x^*-x^t) \quad \tag{From (\ref{feasible-x})}. \label{FR-i1}
\end{align}
\newline
\textbf{Case}: $i\in[n]\setminus [n_1]$ i.e. for $\{i\in [n]\ | A_{i,1}<A_{i,2}\}$. For such $i$'s, $e^i_1=0$, with which (\ref{simplify}) simplifies to the following.
\begin{align}\label{FR-i2}
C_i\Amax_i-\langle A_i, \pi^t\rangle & = (A_{i,1}-A_{i,2})\left( -x^t-\frac{A_{i,2}}{A_{i,1}-A_{i,2}}(1-C_i)\right)\nonumber \\
& = (A_{i,1}-A_{i,2})\left( -x^t + \frac{1-C_i}{1-\frac{A_{i,1}}{A_{i,2}}} \right)\nonumber \\
&\leq (A_{i,1}-A_{i,2})\cdot (-x^t+x^*) \quad \tag{From (\ref{feasible-x}), and using that $(A_{i,1}-A_{i,2})<0$} 
\end{align}
Combining the above two cases, we have that $C_i\Amax_i-\langle A_i, \pi^t\rangle\leq (A_{i,1}-A_{i,2})(x^*-x^t)$ which proves $\mathcal{R}_{\textsc{FR}}(T)=\sum_{t=1}^T\sum_{i=1}^n|C_i\Amax_i-\langle A_i, \pi^t\rangle|_+\leq \sum_{t=1}^T\sum_{i=1}^n(A_{i,1}-A_{i,2})(x^*-x^t)$.    
\end{proof}

\begin{manualtheorem}{2}[Part (2): Fairness Regret of \textsc{Explore-First}]
\label{thm:ExploreFirstFairnessRegret}
For any feasible MA-MAB instance $\mathcal{I}$ with $T$ sufficiently large and $C_i=c \forall i\in [n]$,     expected fairness regret of \textsc{Explore-First} algorithm is upper-bounded by $O\left(\frac{n}{ a_{\min}} T^{2/3} \sqrt{\log(T)} \right)  $.  Here $a_{\min}:=\min_{i\in [n], j\in [m]}A_{i, j}$.
\end{manualtheorem}

\begin{proof}
Using Lemma (\ref{fairness-reform}) and the fact that $\pi^t = [1/2, 1/2]$ for $t<2t'$, the fairness regret of \textsc{Explore-First} is given as follows.
\begin{align} \label{proof:FR-EF}
\mathcal{R}_{FR}(T) & \leq \sum_{t=1}^T\sum_{i=1}^n \left|(A_{i,1}-A_{i,2}).(x^*-x^t)\right|_+ \nonumber \\
    &  = \sum_{i=1}^n\sum_{t=1}^{2t'}|c\Amax_i - A_{i, 1}/2-A_{i, 2}/2|_++\sum_{t=2t'+1}^T\sum_{i=1}^n\max\Big\{ \left|A_{i,1}-A_{i,2}\right|_+\left|x^*-x^t\right|_+,\ \left|A_{i,2}-A_{i,1}\right|_+  \left|x^t - x^*\right|_+\Big\} \nonumber \\
    &  = \sum_{t=2t'+1}^T\sum_{i=1}^n\max\Big\{ \left|A_{i,1}-A_{i,2}\right|_+\left|x^*-x^t\right|_+,\ \left|A_{i,2}-A_{i,1}\right|_+  \left|x^t - x^*\right|_+\Big\} \nonumber \tag{with the sufficient condition for feasibility, $c\in (0, 1/2]$}\\
    & \leq \sum_{t=2t'+1}^T (n-n_1) |x^t - x^*|_+ +  n_1  |x^* - x^t|_+ .
\end{align}
Similar to the bound on $x^*-x^t$ derived in Lemma~\ref{lem:lemOne}, we bound $x^t-x^*$ taking different cases.
\newline
We first upper-bound the probability of the event $F_\delta=\left\{ \frac{\widehat{A}_{i^*, 1}}{\widehat{A}_{i^*, 2}} - \frac{2c\delta}{1+\delta} \geq \frac{A_{i^*, 1}}{A_{i^*, 2}} \right\}$ for $\delta\in (0, 1)$ and $i^*\in [n]$ when $\frac{A_{i^*, 1}}{A_{i^*, 1}}<c$, which will be used in our proof. As $\frac{A_{i^*, 1}}{A_{i^*, 2}}<c$, $F_\delta$ implies $\frac{\widehat{A}_{i^*, 1}}{\widehat{A}_{i^*, 2}}\geq \frac{A_{i^*, 1}}{A_{i^*, 2}}\left(\frac{1+3\delta}{1+\delta} \right)$. 
\newline
Further, $F_\delta \cap \{\widehat{A}_{i^*, 1}\leq A_{i^*, 1}(1+3\delta) \} \subseteq \{\widehat{A}_{i^*, 1}\leq A_{i^*, 1}(1+3\delta) \}\cap \{ \widehat{A}_{i^*, 2}\leq (1+\delta)A_{i^*, 2} \}$.
\begin{align*}
    \mathbb{P}\left(F_\delta \cap \{\widehat{A}_{i^*, 1}\leq A_{i^*, 1}(1+3\delta) \}\right) &\leq \mathbb{P}\left(\{\widehat{A}_{i^*, 1}\leq A_{i^*, 1}(1+3\delta) \}\right) \mathbb{P}\left(\{ \widehat{A}_{i^*, 2}\leq (1+\delta)A_{i^*, 2} \}\right) \tag{From independence of $\widehat{A}_{i^*, 1}$ \& $\widehat{A}_{i^*, 2}$}\\
    &\leq \mathbb{P}\left(\{ \widehat{A}_{i^*, 2}\leq (1+\delta)A_{i^*, 2} \}\right)\\
    &\leq \exp{\left(-\frac{t'}{2}(A_{i^*, 2}/\sigma)^2\delta^2 \right)} \tag{From Lemma~\ref{lemma:chernoff-utb}}
\end{align*}
Also, $\mathbb{P}(F_\delta \cap \{\widehat{A}_{i^*, 1}> A_{i^*, 1}(1+3\delta) \})\leq \mathbb{P}(\{\widehat{A}_{i^*, 2}\leq (1+\delta)A_{i^*, 2}\})\leq \exp{\left(-\frac{t'}{2}(A_{i^*, 2}/\sigma)^2\delta^2 \right)}$ (Lemma~\ref{lemma:chernoff-utb}). Thus, we have the following.
\begin{align*}
    \mathbb{P}(F_\delta) &= \mathbb{P}(F_\delta\cap \{\widehat{A}_{i^*, 1}\geq A_{i^*, 1}(1+3\delta) \}) + \mathbb{P}(F_\delta\cap \{\widehat{A}_{i^*, 1}< A_{i^*, 1}(1+3\delta) \})\\
    & \leq 2\exp\left(-\frac{t'}{2}(A_{i^*, 2}/\sigma)^2\delta^2 \right)
    \\
    &\leq  2\exp\left(-\frac{t'}{2}(a_{\min}/\sigma)^2\delta^2 \right).
\end{align*}
From Lemma~\ref{lem:lemOne}, we also recall that $E_\delta=\{\frac{\widehat{A}_{i', 1}}{\widehat{A}_{i', 2}}\leq \frac{A_{i', 1}}{A_{i', 2}}-\frac{2c\delta}{1+\delta}\}$ for $\delta \in (0, 1)$ and $i'\in [n]$.
We now consider different cases for bounding $x^t-x^*$ and subsequently the fairness regret.
\newline
\textbf{Case (I) $x^*<1, x^t<1$:} 
From Lemma~\ref{lem:lemOne}, we have that $|x^*-x^t|_+\leq \frac{2c\delta}{(1-c)(1+\delta)}$ under $\widebar{E_\delta}$.
For bounding, $x^t-x^*$, we first note that, $x^*<1\implies \frac{A_{i^*, 1}}{A_{i^*, 2}}<c$ and $x^t<1\implies \frac{\widehat{A}_{i', 1}}{\widehat{A}_{i', 2}}<c$.
\begin{align*}
    x^t-x^* = \frac{1-c}{1-\frac{\widehat{A}_{i',1}}{\widehat{A}_{i',2}}}-\frac{1-c}{1-\frac{A_{i^*,1}}{A_{i^*,2}}} &\leq \frac{1-c}{1-\frac{\widehat{A}_{i^*,1}}{\widehat{A}_{i^*,2}}}-\frac{1-c}{1-\frac{A_{i^*,1}}{A_{i^*,2}}} \ \tag{From the definition of $i'$ and $\widebar{F_\delta}$}\\
    &\leq (1-c) \left(\frac{1}{1-\frac{\widehat{A}_{i^*,1}}{\widehat{A}_{i^*,2}}}-\frac{1}{1-\frac{A_{i^*,1}}{A_{i^*,2}}}\right)
\end{align*}
The first inequality follows by noting that under $\widebar{F_\delta},\ i^*\in \{i: \widehat{A}_{i, 2}>\widehat{A}_{i, 1}\}$ and then we use the fact that $i'$ is the argmin obtained after searching over the set $\{i: \widehat{A}_{i, 2}>\widehat{A}_{i, 1}\}$. Now, using $\frac{A_{i^*, 1}}{A_{i^*, 2}}<c$, we have the following with probability $1-\mathbb{P}(F_\delta)$,
\begin{align*}
    x^t-x^* &\leq (1-c) \left(\frac{1}{1-\frac{\widehat{A}_{i^*,1}}{\widehat{A}_{i^*,2}}}-\frac{1}{1-\frac{\widehat{A}_{i^*, 1}}{\widehat{A}_{i^*, 2}} + \frac{2c\delta}{1+\delta}}\right)\\
    &\leq \frac{2c\delta}{(1+\delta)(1-c)}.
\end{align*}
\newline
\textbf{Case (II) $x^*<1, x^t=1$:} From Lemma~\ref{lem:lemOne}, we have that $|x^*-x^t|_+=0$.
For bounding, $x^t-x^*$, we first note that $x^*<1\implies \frac{A_{i^*, 1}}{A_{i^*, 2}}<c$. Also, $x^t=1 $ implies either $ \frac{\widehat{A}_{i', 1}}{\widehat{A}_{i', 2}}\geq c$ or $\{i:\widehat{A}_{i', 1}<\widehat{A}_{i', 2}\}=\phi$.

\begin{align*}
    x^t-x^* = \frac{c-\frac{A_{i^*, 1}}{A_{i^*, 2}}}{1-\frac{A_{i^*, 1}}{A_{i^*, 2}}} 
\end{align*}

Consider events $G = \{i:\widehat{A}_{i, 2}> \widehat{A}_{i, 1}\}\neq \phi\}$ and $F_\delta$. As RHS is a decreasing function in $\frac{\widehat{A}_{i^*, 1}}{\widehat{A}_{i^*, 2}}$, with probability $\left(1-\mathbb{P}(F_\delta)\right)\mathbb{P}(G)$, we have the following.
\begin{align*}
    x^t-x^* \leq \frac{c-\frac{\widehat{A}_{i^*, 1}}{\widehat{A}_{i^*, 2}} + \frac{2c\delta}{1+\delta}}{1-\frac{\widehat{A}_{i^*, 1}}{\widehat{A}_{i^*, 2}} + \frac{2c\delta}{1+\delta}}
    \leq \frac{c-\frac{\widehat{A}_{i', 1}}{\widehat{A}_{i', 2}} + \frac{2c\delta}{1+\delta}}{1-\frac{\widehat{A}_{i', 1}}{\widehat{A}_{i', 2}} + \frac{2c\delta}{1+\delta}} \leq \frac{2c\delta}{(1-c)(1+\delta)}
\end{align*}
where the second inequality follows from the definition of $i'$ which implies $\frac{\widehat{A}_{i^*, 1}}{\widehat{A}_{i^*, 2}}\geq \frac{\widehat{A}_{i', 1}}{\widehat{A}_{i', 2}}$ and the last inequality uses that $\frac{\widehat{A}_{i', 1}}{\widehat{A}_{i', 2}}\geq c$.

Now, for the event $G$, we will lower bound $\mathbb{P}(\widehat{A}_{i,1}-\widehat{A}_{i,2}> 0)$ for an $i\in [n]$ for which $A_{i,2}> A_{i,1}$. Notice that such an index always exists as $[n]\setminus [n_1]$ is considered non-empty (to avoid a trivial policy).
\begin{align*}
\mathbb{P}(\widehat{A}_{i,1}-\widehat{A}_{i,2}> 0) &= \mathbb{P}\left((\widehat{A}_{i,1}- A_{i,1})-(\widehat{A}_{i,2}-A_{i,2})> A_{i,2}-A_{i,1}\right)\\
&= \mathbb{P}\left(e^{\lambda(\widehat{A}_{i,1}- A_{i,1})-\lambda(\widehat{A}_{i,2}-A_{i,2})}> e^{\lambda(A_{i,2}-A_{i,1})}\right) \ \tag{for $\lambda\in \mathbb{R}, \
\lambda>0$}\\
&\leq \mathbb{E}\left[ e^{\lambda(\widehat{A}_{i,1}- A_{i,1})-\lambda(\widehat{A}_{i,2}-A_{i,2})} \right]e^{\lambda(A_{i,1}-A_{i,2})} \ \tag{Markov's inequality} \\
&\leq \exp\left({\frac{\lambda^2\sigma^2}{t'}}\right) \exp\left({\lambda(A_{i,1}-A_{i,2})}\right). \ \tag{Subgaussian's property}
\end{align*}
In the last inequality, we have used that as the rewards $X_{i, j}$ are assumed to be $\sigma$-subgaussian, each of $\widehat{A}_{i,1}- A_{i,1}$ and $\widehat{A}_{i,2}-A_{i,2}$ will be 0-mean $\frac{\sigma}{\sqrt{t}}$-subgaussians. Also, as the difference of two subgaussian random variables is a subgaussian random variable, $\widehat{A}_{i,1}- A_{i,1}-(\widehat{A}_{i,2}-A_{i,2})$ is a 0-mean subgaussian with parameter $\frac{\sqrt{2}\sigma}{\sqrt{t}}$.
On choosing $\lambda=\frac{(A_{i,2}-A_{i,1})t'}{2\sigma^2}$, we obtain the tightest upper-bound on the RHS, giving $ \mathbb{P}(\widehat{A}_{i,1}-\widehat{A}_{i,2}> 0) \leq \exp\left(-\frac{(A_{i,2}-A_{i,1})^2t'}{4\sigma^2}\right)\leq \exp\left(-\frac{(a_{\min}-1)^2t'}{4\sigma^2}\right).$ Thus, $\mathbb{P}(G)\geq 1-\exp\left(-\frac{(a_{\min}-1)^2t'}{4\sigma^2}\right)$.
\newline
\textbf{Case (III) $x^*=1, x^t<1$:} Here, $|x^t-x^*|_+=0$ as $x^t-x^*<0$ and from Lemma~\ref{lem:lemOne}, we have that with probability atleast $\mathbb{P}(\widebar{E_\delta})$, $|x^*-x^t|_+\leq \frac{2c\delta}{1+\delta}$.
\newline
\newline
Thus, combining all the cases, the expected fairness regret of \textsc{Explore-First} algorithm is given as follows.
\begin{align*}
    \mathbb{E}\left[ \mathcal{R}_{\textsc{FR}}(T) \right]&\leq (T-2t')\Bigg[n\left( \mathbb{P}(\widebar{E_\delta})\frac{2c\delta}{(1-c)(1+\delta)} + \mathbb{P}(E_\delta) \right) +\\
    &\quad (n-n_1) \left(\max\left\{ \mathbb{P}(\widebar{F_\delta})\mathbb{P}(G)\frac{2c\delta}{(1-c)(1+\delta)} + 1-\mathbb{P}(\widebar{F_\delta})\mathbb{P}(G),\   
    \mathbb{P}(\widebar{F_\delta})\frac{2c\delta}{(1-c)(1+\delta)} + \mathbb{P}(F_\delta)
    \right\} \right)\Bigg]\\
    &\leq (T-2t')\Bigg[n\left( \mathbb{P}(\widebar{E_\delta})\frac{2c\delta}{(1-c)(1+\delta)} + \mathbb{P}(E_\delta) \right) + \\
    &\quad (n-n_1)\left( \mathbb{P}(\widebar{F_\delta})\mathbb{P}(G)\frac{2c\delta}{(1-c)(1+\delta)} + 1-\mathbb{P}(\widebar{F_\delta})\mathbb{P}(G)+\   
    \mathbb{P}(\widebar{F_\delta})\frac{2c\delta}{(1-c)(1+\delta)} + \mathbb{P}(F_\delta)
     \right)\Bigg]\\
    &\leq (T-2t')\Bigg[n\left( \mathbb{P}(\widebar{E_\delta})\frac{2c\delta}{(1-c)(1+\delta)} + \mathbb{P}(E_\delta) \right) + \\
    &\quad (n-n_1)\left( \mathbb{P}(\widebar{F_\delta})\frac{2c\delta}{(1-c)(1+\delta)} + 1-\mathbb{P}(\widebar{F_\delta})\mathbb{P}(G)+\   
    \mathbb{P}(\widebar{F_\delta})\frac{2c\delta}{(1-c)(1+\delta)} + \mathbb{P}(F_\delta)
     \right)\Bigg]\\
     &\leq (T-2t')\Bigg[n\left( \mathbb{P}(\widebar{E_\delta})\frac{2c\delta}{(1-c)(1+\delta)} + \mathbb{P}(E_\delta) \right)+\\&\quad (n-n_1)\left\{\mathbb{P}(\widebar{F_\delta})\left(\frac{4c\delta}{(1-c)(1+\delta)}+\exp{\left(-\frac{(a_{\min}-1)^2t'}{4\sigma^2}\right)}\right) + 2\mathbb{P}(F_\delta)\right\}\Bigg]\\
     &\leq (T-2t')\Bigg[6n\left( 1-2\exp{\left(-\frac{t'}{2\sigma^2}\delta^2a^2_{\min}\right)}\right)\left(\frac{c\delta }{(1-c)(1+\delta)}+\exp{\left(-\frac{(a_{\min}-1)^2t'}{4\sigma^2}\right)}\right)+ \exp{\left(-\frac{t'}{2\sigma^2}\delta^2a^2_{\min}\right)} \Bigg]\\
     &\textup{Choose }2t'=T^{2/3} \textup{ and }\delta = \left(\frac{2\sigma}{ a_{\min}}\right)\sqrt{\frac{\log{T}}{3T^{2/3}}} \textup{ with }T \textup{ large enough such that }\delta\in(0, 1) \textup{, to get}\\
     & \leq 6nT^{2/3}\left( \frac{2c\sigma \sqrt{\log{T}}}{(1-c)a_{\min}} +\exp\left(-\frac{(a_{\min}-1)^2T^{2/3}}{8\sigma^2}\right)+1\right)\\
     & \leq 6nT^{2/3}\left( \frac{2\sigma \sqrt{\log{T}}}{a_{\min}} +\exp\left(-\frac{(a_{\min}-1)^2T^{2/3}}{8\sigma^2}\right)+1\right) \tag{with the sufficient condition for feasibility, $c\in (0, 1/2]$}\\
     &\leq 24nT^{2/3}\frac{\sigma \sqrt{\log{T}}}{a_{\min}}.
\end{align*}
This proves the claim. Our proof needs $c<1$, which is justified because $c=1$ would simplify the corresponding fairness constraint to be $ \langle A_i,\pi\rangle = \Amax_i$, which in turn specifies fixes coordinates of $\pi$ making the problem simpler.
\end{proof}

\section{Fairness and Social Welfare Regret Guarantees of \textsc{RewardFairUCB}}
% \begin{lemma}[Hoeffding's Inequality]
% Let $X_1, X_2, \cdots, X_s$ be independent $\sigma^2$-sub-gaussian random variables. Then,
% \begin{equation}
%     \mathbb{P}\left( \bigg| \frac{1}{s} \sum_{i=1}^s ( X_i - \mathbb{E}(X_i) )\bigg| \geq \varepsilon \right) \leq   2\exp\left({- \frac{ s \varepsilon^2}{2\sigma^2} }\right) \ \ \ \text{ for all }  \varepsilon > 0.
%     \end{equation}
% \end{lemma}

We first bound the social welfare regret of \ouralgo . We begin with an important lemma from \cite{wang2021fairness}.   
  \begin{restatable}{lemma}{MartingaleLemma}[Lemma A.4.2 in \cite{wang2021fairness}]\label{lemma:wang21}
For any $\delta \in (0,1)$, with probability $1 - \delta/2$ we have 
\begin{equation}
    \Big |\sum_{t=\tau+1}^T \mathbb{E}_{j \sim \pi^t} \sqrt{1/N_j^t} - \sum_{t=\tau+1}^T \sqrt{1/N_{j_t}^{t}} \Big | \leq \sqrt{2T\log(4/\delta)}
\end{equation}
where $N_j^t$ represents the number of times arm $j$ has been selected up to time $t$, and $j_t$ denotes the arm actually chosen at time $t$.
\end{restatable}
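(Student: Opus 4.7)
The plan is to identify the inner sum as a martingale and apply the Azuma--Hoeffding inequality. First I would set up the natural filtration $\mathcal{F}_t = \sigma(j_1,\ldots,j_t,X^1,\ldots,X^t)$, with respect to which the pull counts $\{N_j^t\}_{j\in [m]}$ and the LP-derived policy $\pi^t$ are both $\mathcal{F}_{t-1}$-measurable (since $\pi^t$ is computed from $\widehat{A}$ and the confidence radii, which depend only on pulls and observations in rounds $1,\ldots,t-1$). Then I would define
\[
Z_t\ :=\ \mathbb{E}_{j\sim \pi^t}\!\bigl[\sqrt{1/N_j^t}\bigr]\ -\ \sqrt{1/N_{j_t}^t}.
\]
Because $j_t\sim \pi^t$ conditional on $\mathcal{F}_{t-1}$, we obtain
$\mathbb{E}[\sqrt{1/N_{j_t}^t}\mid \mathcal{F}_{t-1}] = \sum_{j\in [m]} \pi^t_j\sqrt{1/N_j^t} = \mathbb{E}_{j\sim \pi^t}[\sqrt{1/N_j^t}]$,
so $\mathbb{E}[Z_t\mid \mathcal{F}_{t-1}]=0$ and $(Z_t)_{t>\tau}$ is a martingale difference sequence adapted to $(\mathcal{F}_t)_{t\geq \tau}$.

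Next I would bound the increments. Since the round-robin exploration phase of \ouralgo\ pulls every arm at least once before round $\tau+1$, we have $N_j^t\geq 1$ for every $j\in [m]$ and every $t>\tau$; hence $\sqrt{1/N_j^t}\in (0,1]$ and both $\mathbb{E}_{j\sim\pi^t}\sqrt{1/N_j^t}$ and $\sqrt{1/N_{j_t}^t}$ lie in $[0,1]$. Consequently $|Z_t|\leq 1$ almost surely.

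Finally, I would apply Azuma--Hoeffding to the martingale $S_T := \sum_{t=\tau+1}^T Z_t$ with bounded increments $|Z_t|\leq 1$. This yields, for any $x>0$,
\[
\mathbb{P}\!\left(\bigl|S_T\bigr|\ \geq\ x\right)\ \leq\ 2\exp\!\left(-\frac{x^2}{2(T-\tau)}\right)\ \leq\ 2\exp\!\left(-\frac{x^2}{2T}\right).
\]
Setting the right-hand side equal to $\delta/2$ and solving gives $x=\sqrt{2T\log(4/\delta)}$, which is precisely the claimed bound.

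The only real obstacle is the measurability bookkeeping: carefully verifying that $\pi^t$ and each $N_j^t$ depend only on history up to (and including) round $t-1$, which in turn makes the sequence $(Z_t)$ a genuine martingale difference sequence. This is immediate from the algorithmic description but is the step one could slip on if the convention for $N_j^t$ (count before vs.\ after the round-$t$ pull) were mis-aligned with the expectation identity above. Once the filtration is pinned down, everything else is a one-line Azuma--Hoeffding computation with $c_t=1$.
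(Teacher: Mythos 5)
Your proof is correct and follows essentially the same route as the paper's: the same martingale difference sequence $Z_t$, the same bound $|Z_t|\leq 1$, and the same application of Azuma--Hoeffding with $\epsilon=\sqrt{2T\log(4/\delta)}$. If anything, your treatment is slightly more careful than the paper's, since you make the filtration and the measurability of $\pi^t$ and $N_j^t$ explicit and justify $|Z_t|\leq 1$ directly from $N_j^t\geq 1$, whereas the paper invokes a somewhat tangential Lipschitz remark.
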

\begin{proof} We re-write the proof of [Lemma A.4.2 in \cite{wang2021fairness}].
Consider the sequence $\{Z^t\}_{t=\tau+1}^T$ with 
$
    Z^t := \sqrt{\frac{1}{N_{j_t}^t}} - \mathbb{E}_{j \sim \pi^t} \sqrt{\frac{1}{N_j^t}}
$. This sequence is a martingale difference sequence. Additionally, for any $t > \tau$, we have
\[
    |Z^t| = \left|\sqrt{\frac{1}{N_{j_t}^t}} - \mathbb{E}_{j \sim \pi^t} \sqrt{\frac{1}{N_j^t}}\right| \leq 1,
\]
since the square root function is Lipschitz with a constant of 1, and the counts $N_j^t$ are non-negative.

Thus, we can apply Azuma-Hoeffding's inequality to bound the sum of the martingale difference sequence:
\[
    \mathbb{P} \left( \Bigg| \sum_{t=\tau+1}^T Z^t \Bigg| \geq \epsilon \right) \leq 2 \exp\left( -\frac{\epsilon^2}{2(T-\tau)} \right).
\]
Setting $\epsilon = \sqrt{2(T-\tau) \log(4/\delta)}$ gives that with probability at least $1 - \delta/2$,
\[
    \Bigg| \sum_{t=\tau+1}^T Z^t \Bigg| \leq \sqrt{2T \log\left(\frac{4}{\delta}\right)}.
\]
This concludes the proof.
\gan{@Himanshu: Write the proof of the lemma}
\end{proof}
\rewardRegretUCB*

 We begin the proof of the Theorem with an important lemma. 
\begin{lemma}
    Let $T$ be an arbitrary stopping time and  $(j^t)_{t\leq T}$ denote the sequence of arms pulled by the algorithm till time $T$. Furthermore, let $N_{j}^t$ denotes the number of time instances a fixed arm $j \in [m]$ was  pulled till time instance $t$.   Then 
    \begin{align}
        \sum_{t=1}^T \frac{1}{\sqrt{N_{j^t}^t}} \leq 2 \sqrt{mT}. 
    \end{align}
    \label{lem:Supp_Lemma_nine}
\end{lemma}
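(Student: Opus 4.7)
The plan is to reindex the sum by arm rather than by time step, then apply the standard integral bound on $\sum 1/\sqrt{s}$ followed by Cauchy--Schwarz. For each arm $j \in [m]$, let $T_j = N_j^T$ denote the total number of pulls of arm $j$ over the horizon, so that $\sum_{j=1}^m T_j = T$. The key observation is that the $s$-th time arm $j$ is pulled (for $s = 1,\dots,T_j$), the count $N_j^t$ appearing in the denominator equals exactly $s$, so the contribution of arm $j$ to the sum on the left-hand side is $\sum_{s=1}^{T_j} 1/\sqrt{s}$.

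The next step is to bound each per-arm contribution by $\sum_{s=1}^{T_j} 1/\sqrt{s} \leq 1 + \int_{1}^{T_j} x^{-1/2}\, dx = 1 + 2(\sqrt{T_j} - 1) \leq 2\sqrt{T_j}$ (the inequality being trivial when $T_j = 0$). Summing over arms gives $\sum_{t=1}^T 1/\sqrt{N_{j^t}^t} \leq 2 \sum_{j=1}^m \sqrt{T_j}$. Finally, Cauchy--Schwarz (or equivalently concavity of the square root combined with Jensen) yields $\sum_{j=1}^m \sqrt{T_j} \leq \sqrt{m \cdot \sum_{j=1}^m T_j} = \sqrt{mT}$, which completes the proof.

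There is no real obstacle here; the only point that deserves a line of care is the identification $N_{j^t}^t = s$ when $t$ is the time of the $s$-th pull of arm $j^t$, which depends on the convention used in the algorithm for whether $N_j^t$ counts the current pull or not. Either convention gives the same bound up to the small additive constant absorbed into the $2\sqrt{T_j}$ estimate, so the stated inequality holds in both cases.
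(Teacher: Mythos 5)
Your proof is correct and follows essentially the same route as the paper's: reindex the sum by arm so that arm $j$ contributes $\sum_{s=1}^{T_j} 1/\sqrt{s}$, bound this by $2\sqrt{T_j}$ via an integral comparison, and finish with Cauchy--Schwarz using $\sum_j T_j = T$. The only cosmetic difference is the exact form of the integral bound (the paper compares $1/\sqrt{\ell}$ to $\int_{\ell-1}^{\ell} x^{-1/2}\,\mathrm{d}x$ termwise, while you split off the first term), and both yield the same $2\sqrt{T_j}$.
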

\begin{proof}[Proof of the Lemma]
\begin{align*}
   \sum_{t=1}^T \frac{1}{\sqrt{N_{j^t}^t}} & = \sum_{t=1}^T \sum_{j=1}^m \frac{1}{\sqrt{N_{j}^t}} \mathds{1}(j^t = j) = \sum_{j=1}^m \sum_{t=1}^T   \frac{1}{\sqrt{N_{j}^t}} \mathds{1}(j^t = j) \\ \intertext{Let $T_j$ be the total number of times arm $j$ was pulled till  time horizon  $T$, then }
   \sum_{j=1}^m \sum_{t=1}^T   \frac{1}{\sqrt{N_{j}^t}} \mathds{1}(j^t = j) & = \sum_{j=1}^m \sum_{\ell=1}^{T_j}   \frac{1}{\sqrt{\ell}}  =  \sum_{j=1}^m \sum_{\ell=1}^{T_j} \int_{x = \ell-1}^\ell \frac{1 }{\sqrt{\ell}} \textup{d}x \leq \sum_{j=1}^m \sum_{\ell=1}^{T_j} \int_{x = \ell-1}^\ell \frac{1 }{\sqrt{x} } \textup{d}x  \leq \sum_{j=1}^m  2 \sqrt{T_j} \leq 2 \sqrt{mT}
\end{align*}
The last inequality follows from Cauchy-Schwarz inequality and the fact that $T = \sum_{j=1}^m T_j$. 
\end{proof}
 \begin{proof}[Proof of the Theorem]
\begin{align}
    \mathcal{R}_\textsc{SW}(T) &= \sum_{t=1}^T \big[ SW_{\pi^*}(A)  - SW_{\pi^t}(A)  \big] \nonumber \\ 
&\leq  \underbrace{\sum_{t=1}^{ m\lceil \sqrt{T} \rceil} \sum_{i=1}^n \langle A_i, \pi^* - \pi^t \rangle}_{\textup{R1}}   + \underbrace{ \sum_{t= m\lceil \sqrt{T } \rceil + 1 }^T \big[ SW_{\pi^*}(A)  - SW_{\pi^t}(A)  \big] }_{\textup{R2}} 
\end{align}
We bound terms R1 and R2 separately. We begin with an upper bound for R1. 
\begin{align*}
 \textup{R1} &=  \sum_{t=1}^{ m\lceil \sqrt{T} \rceil} \sum_{i=1}^n \langle A_i, \pi^* - \pi^t \rangle 
 \\ &=  \sum_{j =1}^m\sum_{t \leq (m\lceil \sqrt{T} \rceil \land j^t =j) } \  \sum_{i=1}^n   A_{i,j} ( \pi_j^* - \pi^t_j )  \tag{where $j^t$ denotes the arm chosen at timestep $t$}\\
&\leq \sum_{j =1}^m\sum_{t \leq (m\lceil \sqrt{T} \rceil \land j^t =j) }\  \sum_{i=1}^n   A_{i,j}  \pi_j^*    \\ 
&\leq  \sqrt{T}  \sum_{j =1}^m \Big (\sum_{i=1}^n   A_{i,j} \Big )  \pi_j^* 
\\ & = \sqrt{T} \left\langle \sum_{i=1}^n   A_i, \pi^*  \right\rangle   \\ 
&\leq  \sqrt{T}  \left\| \sum_{i=1}^n   A_{i}\right\|_2  \|\pi^*\|_2    \\ 
&\leq \sqrt{T} n\sqrt{m}\sqrt{m}  = mn\sqrt{T}.
\end{align*}
Next, we give an upper bound on R2. The proof uses the notation $\epsilon^t_{j}=\frac{\epsilon}{\sqrt{N_j^t}}=\sigma \sqrt{\frac{2\log{(8mn/\delta)}}{N_j^t}}$. 
\begin{align*}
   \textup{R2} &= \sum_{t= m\lceil \sqrt{T } \rceil + 1 }^T \big[ SW_{\pi^*}(A)  - SW_{\pi^t}(A)  \big] \\ 
   & \leq \sum_{t= m\lceil \sqrt{T } \rceil + 1 }^T \big[ SW_{\pi^t}(\overline{A}^t)  - SW_{\pi^t}(A)  \big] \ \tag{w.p. at least $1-\delta/2$; From Lemma \ref{lem:equivalence}}\\  
%\intertext{ The above inequality holds  with probability atleast $1-\delta$ +Lemma \ref{lem:equivalence}.}
  & \leq  \sum_{t= m\lceil \sqrt{T } \rceil +1}^T \sum_{i=1}^n \langle \overline{A}_i^t - A_i,  \pi^t \rangle \\ 
%\end{align*} % From lemma %Using Hoeffding's inequality with $\delta' = \delta/2mn $ for each $A_{i,j}$ then applying  union bound gives  the following upper bound with probability atleast $1-\delta$.
%\begin{align*}
    & \leq 2n \sum_{t=m+1}^T \mathbb{E}_{j \sim \pi^t} \left[\varepsilon^{t}_{j}\right]   \\
    &=   2 n \varepsilon \sum_{t=m+ 1}^T \mathbb{E}_{j \sim \pi^t}\left[\sqrt{1/N_j^t}\right] \\ 
& \leq   2n \varepsilon \left( \sum_{t=1}^T \sqrt{1/N_{j^t}^{t}} + \sqrt{2T\log(2/\delta)} \right) \ \tag{w.p. at least $1-\delta/2$; From Lemma \ref{lemma:wang21}, where $j^t$ denotes the arm chosen at $t$}\\
& \leq  2n\sigma \sqrt{2\log(8mn/\delta)}  ( 2 \sqrt{mT} + \sqrt{2T\log(2/\delta)}) \tag{from Lemma \ref{lem:Supp_Lemma_nine} above} \\
&\leq 2n\sigma \left(2 \sqrt{2mT\log{(8mn/\delta)}} + \sqrt{2T}\log{(8mn/\delta)} \right)\\
&\leq 6n\sigma  \sqrt{2mT}\log{(8mn/\delta)}
\end{align*}
We have the following bound on the expected regret
\begin{align*}
(1-\delta)(R_1+R_2)+\delta T \leq (1-\delta)(6n\sigma \sqrt{2mT}\log{(8mn/\delta)} + mn\sqrt{T})  + \delta T\leq 6n\sigma \sqrt{2mT}\log{(8mn/\delta)} + mn\sqrt{T} + \delta T.
\end{align*}
We choose $\delta=\frac{6n\sigma \sqrt{2m} }{\sqrt{T}}$ (with $T$ large enough to have $\delta<1$) to get the tightest upper bound on the RHS. This proves the stated upper-bound on the social welfare regret of \ouralgo.

\end{proof}

\gan{choose $\delta  =   1/ T$}

\gan{@Himanshu: Go over the proof once and write your comments. We will meet on Friday to discuss all the proofs in the paper.}
% \begin{theorem}
% For any feasible MA-MAB instance $\mathcal{I}$ and any $\delta > 0$, 
%     the fairness regret of  \ouralgo\ is upper bounded by $$cn \sqrt{mT }  + 2 c n \varepsilon  \frac{T^{3/4}}{\sqrt{ m }} +  2n \sqrt{2 \log(2mn/\delta)}  ( \sqrt{T m} + \sqrt{2T\log(1/\delta)})$$ with probability at-least $1-\delta$. 
% \end{theorem}
\fairnessRegretUCB*
\gan{Need to be careful. What is $\varepsilon$ above. Also, write the entire expression in terms of expected regret and not the whp regret. We can do this by $regret \leq  (1 - \delta) \tilde{O}(T^{3/4}) + \delta T $ and then optimize over delta.  }
\begin{proof}
The proof uses the notation $\epsilon_{j}^t=\frac{\epsilon}{\sqrt{N_j^t}}=\sigma \sqrt{\frac{2\log{(8mn/\delta)}}{N_j^t}} $. Also, $\epsilon^t\in \mathbb{R}^m$ with the $j^{th}$ entry as $\epsilon^t_j$. Let $j_i\in \argmax_{j\in[m]} A_{i, j}$ and $j_i^t\in \argmax_{j\in[m]}\underline{A}_{i, j}^t$. 
%Without loss of generality, we consider $C_i=c\ \forall i\in[n]$. 
We use $\bar{C}:=\sum_{i\in [n]}C_i$ and $C_{\max}:=\max_{i\in [n]}C_i$.
\begin{align*}
\mathcal{R}_{FR}(T) &= \sum_{t=1}^T \sum_{i=1}^n \big | C_i \Amax_i  -   \langle A_i, \pi^t \rangle  \big|_{+} \\ 
& = \sum_{t=1}^{m\lceil \sqrt{T } \rceil } \sum_{i=1}^n \big | C_i \Amax_i  -   \langle A_i, \pi^t \rangle  \big|_{+} +  \sum_{t= m\lceil \sqrt{T } \rceil + 1}^T \sum_{i=1}^n \big | C_i \Amax_i  -   \langle A_i, \pi^t \rangle  \big|_{+} \\
& \leq \sum_{t=1}^{m\lceil \sqrt{T } \rceil } \sum_{i=1}^n \big | C_i \Amax_i  -   \langle A_i, \pi^t \rangle \big|_{+} + \sum_{t= m\lceil \sqrt{T } \rceil +1 }^T \sum_{i=1}^n \big | C_i \Amax_i  -    \langle \overline{A}_i^t   -   2\varepsilon^{t},  \pi^t \rangle |_+ \tag{with probability at least $1-\delta/2$}\\ 
&\leq \bar{C} m\sqrt{T}  + \sum_{t=m\lceil \sqrt{T } \rceil +1}^T \sum_{i=1}^n \big | C_i \Amax_i  -   \langle \overline{A}_i^t, \pi^t \rangle |_+  +   2n \varepsilon \sum_{t=m\lceil \sqrt{T } \rceil +1}^T \mathbb{E}_{j \sim \pi^t} \big [ \sqrt{1/N_{j}^{t}} \big ] \\ 
&\leq  \bar{C}m\sqrt{T} + \sum_{t=m\lceil \sqrt{T } \rceil +1}^T \sum_{i=1}^n C_i \big  |  \Amax_i  -   \underline{A}_{i, j_i^t}^t|_+  +   2n \varepsilon \sum_{t=m\lceil \sqrt{T } \rceil +1}^T \mathbb{E}_{j \sim \pi^t} \Big [\sqrt{1/N_{j}^{t}} \Big ] \tag{follows from the fact that $\pi^t$ is feasible for P2; i.e. $ \langle \overline{A}_i^t, \pi^t \rangle \geq C_i \underline{A}_{i, j_i^t}^t $}  \\ 
& \leq \bar{C}m\sqrt{T} +   C_{\max}\sum_{t= m\lceil \sqrt{T } \rceil +1 }^T \sum_{i=1}^n |A_{i, j_i} - \underline{A}_{i, j_i}^t|_{+} +  2n \varepsilon \sum_{t=m\lceil \sqrt{T } \rceil +1}^T \mathbb{E}_{j \sim \pi^t} \Big [\sqrt{1/N_{j}^{t}} \Big ]
\tag{follows from the fact that $j_i^t \in \arg\max_{j \in [m]} \underline{A}_{i,j}^t$} \\ 
& \leq \bar{C}m \sqrt{T} + 2C_{\max}  \sum_{t=m\lceil \sqrt{T } \rceil + 1}^T \sum_{i=1}^n \varepsilon_{j_i}^t +  2 n \varepsilon \sum_{t=m\lceil \sqrt{T } \rceil +1}^T \mathbb{E}_{j \sim \pi^t} \Big [\sqrt{1/N_{j}^{t}} \Big ] \\
& \leq \bar{C}m \sqrt{T} + 2C_{\max}  \sum_{t=m\lceil \sqrt{T } \rceil + 1}^T \sum_{i=1}^n \varepsilon_{j_i}^t +  2 n \varepsilon \sum_{t=m +1}^T \mathbb{E}_{j \sim \pi^t} \Big [\sqrt{1/N_{j}^{t}} \Big ]\\
% \intertext{\gan{the steps are not clear. Need more explanation here. Why we replaced $\varepsilon^t$ by $\varepsilon$ is not immediate.}}
& \leq   \bar{C}m \sqrt{T }  + 2 C_{\max} n \varepsilon  \sum_{t= 1 }^T \frac{1}{\sqrt{N_{j_i}^{t}}} +  2n\varepsilon  ( \sqrt{T} + \sqrt{2T\log(2/\delta)}) \tag{Using Lemma  \ref{lemma:wang21}; with probability $1-\delta/2$}\\
& \leq   \bar{C}m \sqrt{T }  + 2 C_{\max} n \varepsilon  \sum_{t=  1}^T \frac{1}{\sqrt{ \sqrt{T} }} +   2n\varepsilon ( \sqrt{T} + \sqrt{2T\log(2/\delta)}) \ \tag{$\because N_j^t\geq \sqrt{T}\ \forall j\in [m]$} \\
& \leq   \bar{C}m \sqrt{T }  + 4nC_{\max}\sigma \sqrt{\log(8mn/\delta)} \left(   T^{3/4} +   ( \sqrt{T} + \sqrt{2T\log(2/\delta)})\right).
\end{align*}
Hence, the expected fairness regret is upper-bounded as follows.
\begin{align*}
\mathcal{R}_{\textsc{FR}}(T) & = (1-\delta)\left( \bar{C}m \sqrt{T }  + 4n\sigma C_{\max} \sqrt{\log(8mn/\delta)} \left(  T^{3/4} +   ( \sqrt{T} + \sqrt{2T\log(2/\delta)})\right) \right) + \delta T    \\
&\leq \bar{C}m\sqrt{T}+4n\sigma C_{\max}\left( \sqrt{\log(8mn)/\delta}\ T^{3/4} +  \sqrt{T\log(8mn/\delta)} +  \log(8mn/\delta)T^{3/4}\right) +\delta T\\
&\leq \bar{C}m\sqrt{T} + 12n\sigma C_{\max} T^{3/4}\log{(8mn/\delta)} + \delta T\\
&\leq 6 C_{\max}n\sigma T^{3/4}\log(2m^2T)+mnO(\sqrt{T}) \tag{with $\delta = 4\sqrt{2}n\sigma/\sqrt{T}$}
\end{align*}

\end{proof}
\section{Lower Bounds}
\lowerRegret*
\begin{proof}
In addition to the instance described in Sec~\ref{subsec:lowerBound} for proving the lower bound on the social welfare, we observe that our MA-MAB problem for a single agent ($n=1$) and $C$ as the zero matrix reduces to the Nash Social Welfare based MA-MAB problem \cite{Hossain2020FairAF} with $n=1$. The lower bound of $\Omega(\sqrt{mT})$ for social welfare regret then follows from Proposition (2) in Appendix A of \citep{Hossain2020FairAF}.

    We now discuss the lower bound for the fairness regret. We observe that it is enough to show the lower bound for $m=2$ case as we can easily construct instances of the problem with more than 2 arms that reduce to the case with $m=2$, e.g. by choosing an $A$ matrix with $m-2$ columns as zeros. The proof of lower bound for the fairness regret in the 2-arm case proceeds as follows. From Lemma~\ref{fairness-reform}, we have that $\mathcal{R}_{FR}(T) \leq \sum_{t=1}^T\sum_{i=1}^n \left|(A_{i,1}-A_{i,2}).(x^*-x^t)\right|_+$. To prove the lower bound, we construct an instance where this inequality is tight. We first recall an inequality from the proof of characterization of the optimal feasible policy ($[x, 1-x]$) in Lemma~\ref{prop:One}.
    \begin{align}\label{charac-for-lb}
    \max \Bigg( 0,  \max_{i \in [n_1]}\frac{C_i - \frac{A_{i,2}}{A_{i,1}}}{ 1 - \frac{A_{i,2}}{A_{i,1}}} \Bigg) \leq x \leq \min \Bigg( 1,  \min_{i \in [n] \setminus [n_1]}\frac{1 - C_i}{ 1 - \frac{A_{i,1}}{A_{i,2}}} \Bigg). 
    \end{align}
    Also, following the steps in the proof of Lemma~\ref{fairness-reform}, we have the following.
    \begin{align}\label{FR-lb}
        \mathcal{R}_{FR}(T) &= \sum_{t=1}^T\left( \sum_{i\in [n_1]}|C_i\Amax_i - \langle A_i, \pi^t \rangle|_+ + \sum_{i\in [n]\setminus [n_1]}|C_i\Amax_i - \langle A_i, \pi^t \rangle|_+\right) \nonumber \\
        &= \sum_{t=1}^T\left(\sum_{i\in [n_1]}
        \left| (A_{i, 1}-A_{i, 2})\left(\frac{C_i-\frac{A_{i, 2}}{A_{i, 1}}}{1-\frac{A_{i, 2}}{A_{i, 1}}}-x^t \right)\right|_+
        +
        \sum_{i\in [n]\setminus [n_1]} \left|(A_{i, 1}-A_{i, 2})\left( \frac{1-C_i}{1-\frac{A_{i, 1}}{A_{i, 2}}} - x^t \right)\right|_+ \right)
    \end{align}
    Consider an instance where $\frac{C_i-\frac{A_{i, 2}}{A_{i, 1}}}{1-\frac{A_{i, 2}}{A_{i, 1}}} = l(>0) \ \forall i\in [n_1]$ and $\frac{1-C_i}{1-\frac{A_{i, 1}}{A_{i, 2}}}=u(<1) \ \forall i\in [n]\setminus [n_1]$ and $l=u$. In this case, $x^*=l=u$ and the fairness regret (Eq.~\ref{FR-lb}) reduces to the following.
    \begin{align*}
        \mathcal{R}_\textsc{FR}(T) &= \sum_{t=1}^T\sum_{i=1}^n \left|(A_{i,1}-A_{i,2})\cdot (x^*-x^t)\right|_+ \\
        &\geq   \left|\sum_{t=1}^T\sum_{i=1}^n (A_{i,1}-A_{i,2})\cdot (x^*-x^t) \right|_+ \ \tag{Triangular inequality}\\
        &= \left| \mathcal{R}_\textsc{SW}(T) \right|_+ \tag{From Eq.~\ref{eq:eqnClosedFormSW}}\\
        &\geq   \Omega(\sqrt{T})  \ \tag{From the lower bound for social welfare}.
    \end{align*}
    Finally, we present an instance where the fairness regret inequality is tight. Let $\underline{i}\in \argmax_{i\in [n_1]}\left(\frac{C_i-\frac{A_{i, 2}}{A_{i, 1}}}{1-\frac{A_{i, 2}}{{A_{i, 1}}}} \right)$ and $\overline{i}\in \left(\argmin_{i\in [n]\setminus [n_1]} \frac{1 - C_i}{ 1 - \frac{A_{i,1}}{A_{i,2}}}\right)$ be the least indices. Consider an instance with $C_{\underline{i}} = 1-C_{\overline{i}}$ and an $A$ matrix such that $A_{\underline{i}, 2}=0$ and $A_{\overline{i}, 1}=0$, e.g. $A=\begin{bmatrix}
    1 & 0\\
    0 & 1\\
    1 & 0
    \end{bmatrix}$. For this instance, any optimal feasible policy $[x^*, 1-x^*]$ must satisfy $C_{\underline{i}}\leq x^* \leq 1-C_{\overline{i}} \implies x^* = C_{\underline{i}}$.

\end{proof}
\section{Additional Experimental Results}
In addition to the results in Sec~\ref{sec: simulation}, we show additional simulations done with different $A$ matrices, shared by~\cite{Hossain2020FairAF} in their code open-sourced repository, and different values of $C_i=c\ \forall i\in [n]$. We plot the average regrets after simulating with 100 runs for different realizations of randomly sampled rewards. The codes for reproducing our experiments are \href{https://github.com/Piyushi-0/Fair-MAMAB}{open-sourced}.

\subsection{Exploration-Exploitation Tradeoff in \textsc{Explore-First}}\label{app:ef}
This section shows how the exploration parameter $\alpha$ trades off between social welfare regret and fairness regret across different instances with simulated data. Supporting our theoretical result from the analysis of the 2-arm case, $\alpha=0.67$ can be seen to obtain both optimal social welfare regret and fairness regret across different instances shown in Figures~\ref{fig:ef_n3m2} and \ref{fig:ef_n4m3}.
\begin{figure*}[ht!]
\centering
\begin{subfigure}{.24\textwidth}
    \centering
    \includegraphics[width=\linewidth]{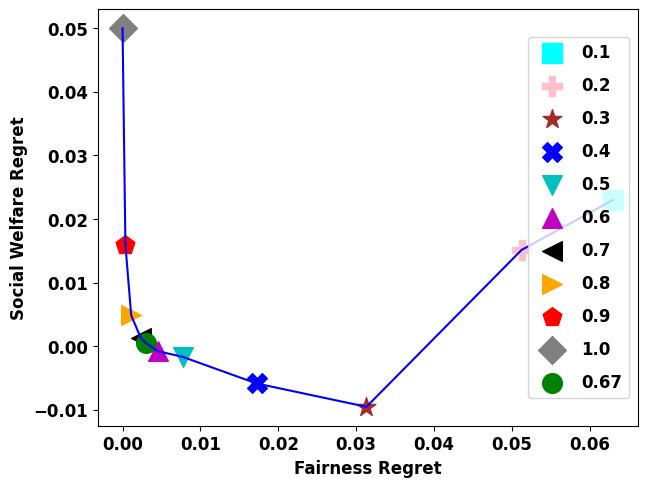}  
    % \caption{Explore-Exploit tradeoff with {\sc ExploreFirst}.
    % }
    \label{ef3-1}
\end{subfigure}
\begin{subfigure}{.24\textwidth}
    \centering
    \includegraphics[width=\linewidth]{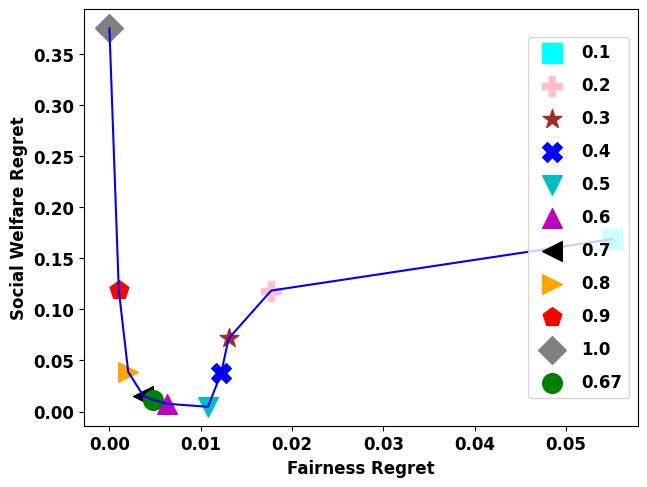}  
    % \caption{Social welfare regrets vs timesteps.}
    \label{ef3-2}
\end{subfigure}
\begin{subfigure}{.24\textwidth}
    \centering
    \includegraphics[width=\linewidth]{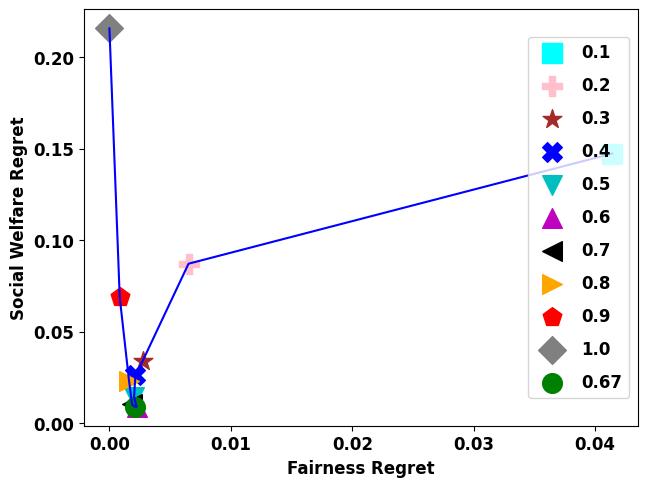} 
    % \caption{Fairness regrets vs timesteps.}
    \label{ef3-3}
\end{subfigure}
\begin{subfigure}{.24\textwidth}
    \centering
    \includegraphics[width=\linewidth]{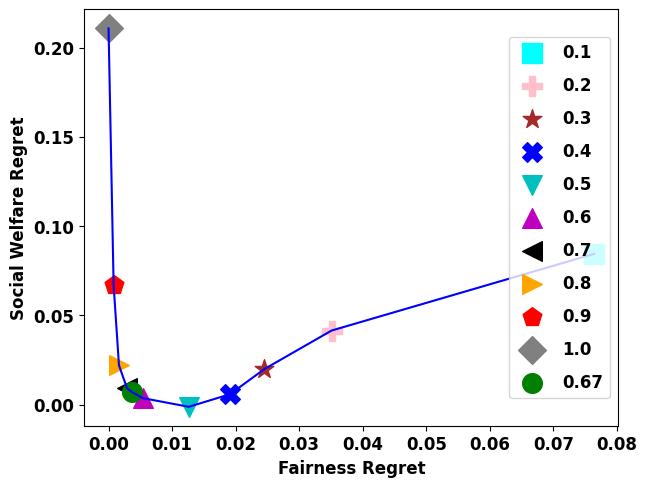} 
    % \caption{Fairness regrets vs timesteps.}
    \label{ef3-4}
\end{subfigure} 
\vspace{-0.15in}
    \caption{Explore-Exploit tradeoff on varying the exploration hyperparameter $\alpha$ (marked in legend). Different plots show different $A$ matrices with $n=3, m=2$. $C_i$ is $1/m\ \forall i \in [n]$. The values reported are averaged across 100 random runs with random seeds.}
    \label{fig:ef_n3m2}
    \Description{Exploration-exploitation trade-off with Explore-First algorithm in the two-arm case.}
\end{figure*} 

\begin{figure*}[ht!]
\centering
\begin{subfigure}{.24\textwidth}
    \centering
    \includegraphics[width=\linewidth]{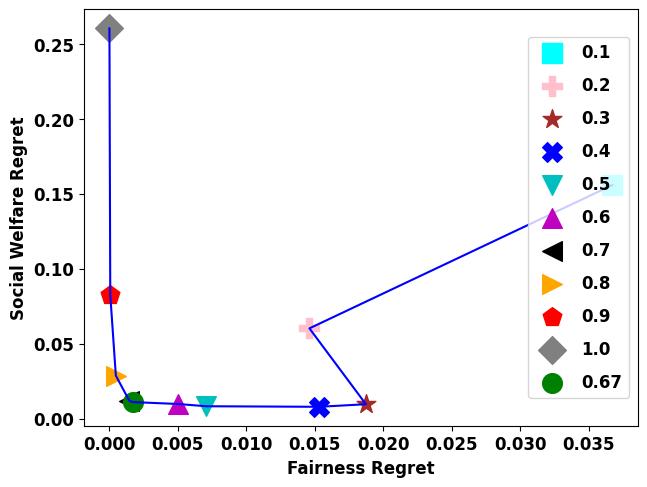}  
    % \caption{Explore-Exploit tradeoff with {\sc ExploreFirst}.
    % }
    \label{ef4-1}
\end{subfigure}
\begin{subfigure}{.24\textwidth}
    \centering
    \includegraphics[width=\linewidth]{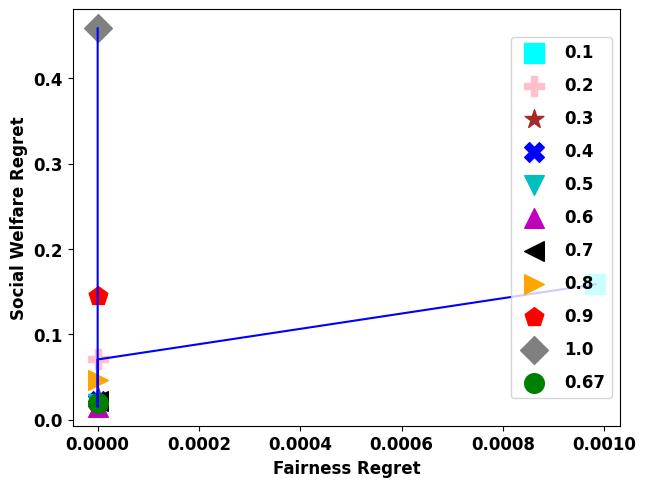}  
    % \caption{Social welfare regrets vs timesteps.}
    \label{ef4-2}
\end{subfigure}
\begin{subfigure}{.24\textwidth}
    \centering
    \includegraphics[width=\linewidth]{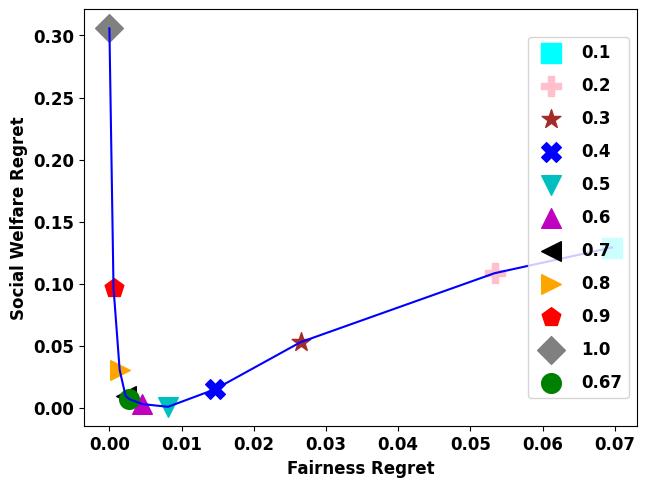} 
    % \caption{Fairness regrets vs timesteps.}
    \label{ef4-3}
\end{subfigure}
\begin{subfigure}{.24\textwidth}
    \centering
    \includegraphics[width=\linewidth]{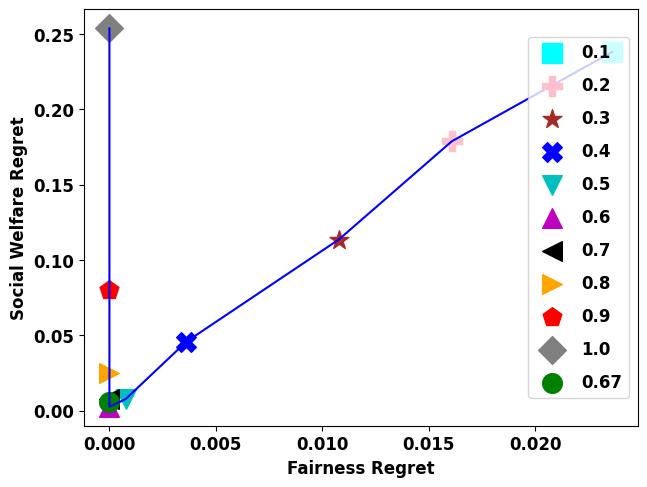} 
    % \caption{Fairness regrets vs timesteps.}
    \label{ef4-4}
    \Description{Exploration-exploitation trade-off with Explore-First algorithm in the three-arm case.}
\end{subfigure}
\vspace{-0.15in}
    \caption{Explore-Exploit tradeoff on varying the exploration hyperparameter $\alpha$ (marked in legend). Different plots show different $A$ matrices with $n=4, m=3$. $C_i$ is $1/m\ \forall i \in [n]$.}
    \label{fig:ef_n4m3}
\end{figure*} 

\subsection{Evaluation of \ouralgo\ }
Figures \ref{app:sw_32_0.5} and \ref{app:fr_32_0.5} compares the regrets incurred by the proposed \ouralgo\, \textsc{Explore-First} (Algo~\ref{algOne-EF}) and the dual heuristics (Algo~\ref{alg-dual}). The results are shown with different $A$ matrices with $n=3$, $m=2$. Figures \ref{app:sw_43_0.3} and \ref{app:fr_43_0.3} compares the regrets incurred by \ouralgo\ with the baselines \textsc{Explore-First} (Algo~\ref{algOne-EF}) and the dual heuristics (Algo~\ref{alg-dual}). The results are shown with different $A$ matrices with $n=4$, $m=3$. We can see that \ouralgo\ obtains optimal social welfare regret and sublinear fairness regret across the instances.

\begin{figure*}[ht!]
\centering
\begin{subfigure}{.33\textwidth}
    \centering
    \includegraphics[width=\linewidth]{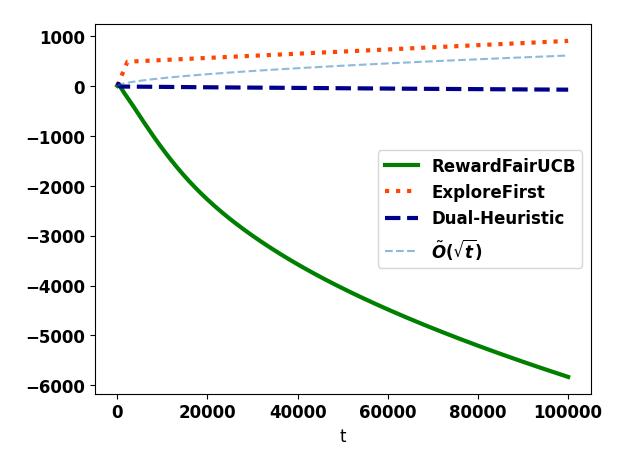}  
\end{subfigure}  
\begin{subfigure}{.33\textwidth}
    \centering
    \includegraphics[width=\linewidth]{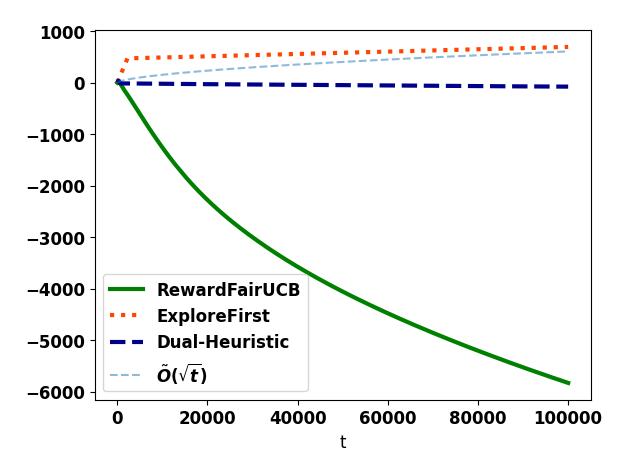}  
\end{subfigure}  
\begin{subfigure}{.33\textwidth}
    \centering
    \includegraphics[width=\linewidth]{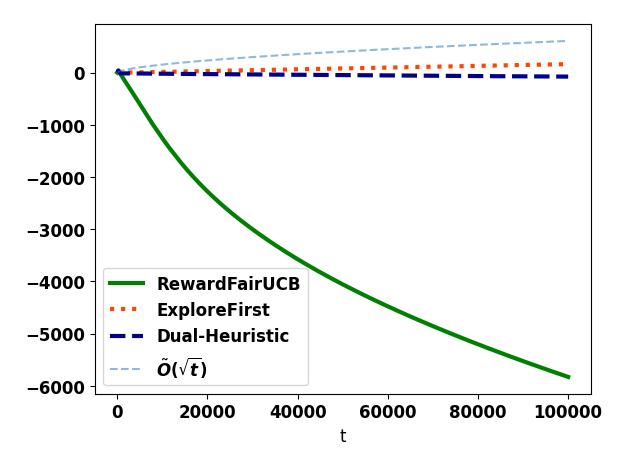}  
\end{subfigure}  
\caption{Social welfare regrets vs timesteps with different $A$ matrices with $n=3$, $m=2$ and $C_i=0.5\ \forall i \in [n]$.}\label{app:sw_32_0.5}
\Description{Social welfare regret two-arm case.}
\end{figure*} 

\begin{figure*}[ht!]
\centering
\begin{subfigure}{.33\textwidth}
    \centering
    \includegraphics[width=\linewidth]{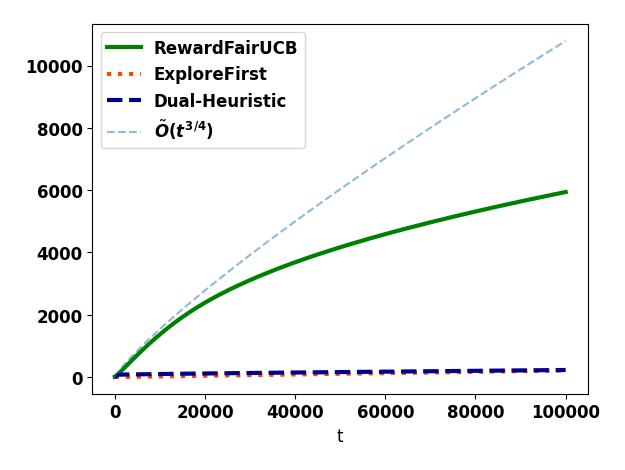}  
\end{subfigure}  
\begin{subfigure}{.33\textwidth}
    \centering
    \includegraphics[width=\linewidth]{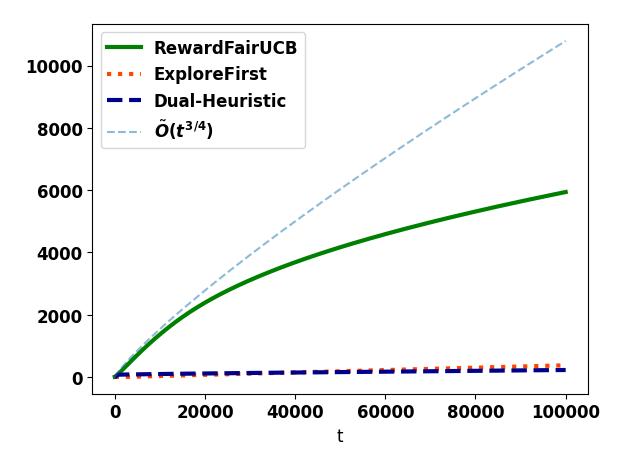}  
\end{subfigure}  
\begin{subfigure}{.33\textwidth}
    \centering
    \includegraphics[width=\linewidth]{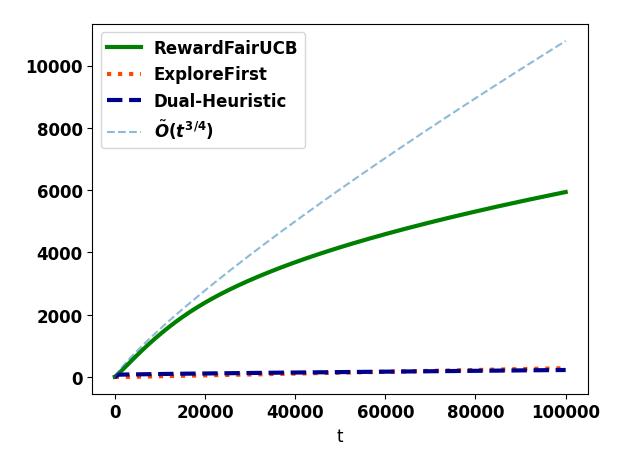}  
\end{subfigure}  
\caption{Fairness regrets vs timesteps with with different $A$ matrices with $n=3$, $m=2$ and $C_i=0.5\ \forall i \in [n]$.}\label{app:fr_32_0.5}
\Description{Fairness regret two-arm case.}
\end{figure*} 

\begin{figure*}[ht!]
\centering
\begin{subfigure}{.33\textwidth}
    \centering
    \includegraphics[width=\linewidth]{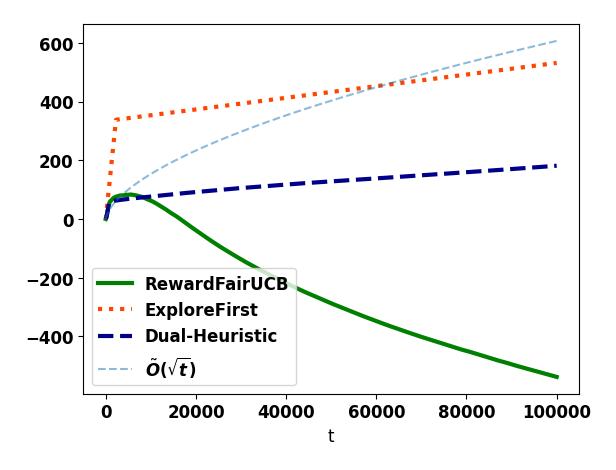}  
\end{subfigure}  
\begin{subfigure}{.33\textwidth}
    \centering
    \includegraphics[width=\linewidth]{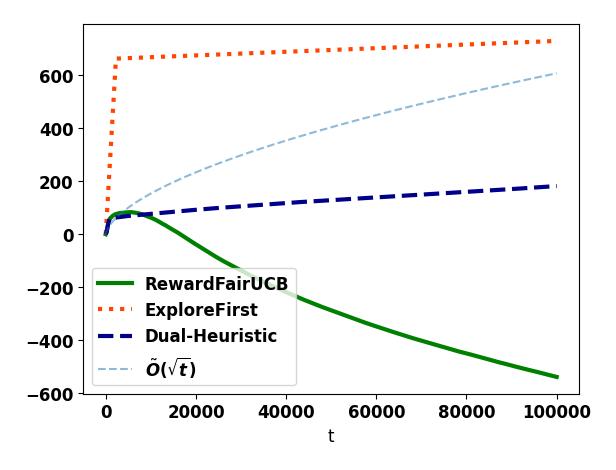}  
\end{subfigure}  
\begin{subfigure}{.33\textwidth}
    \centering
    \includegraphics[width=\linewidth]{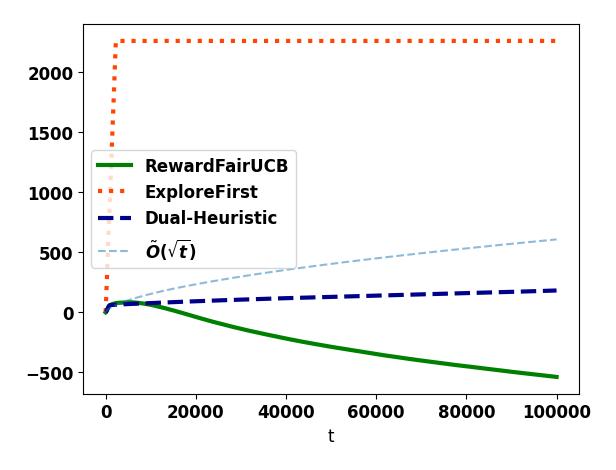}  
\end{subfigure}  
\caption{Social welfare regrets vs timesteps with with different $A$ matrices with $n=4$, $m=3$ and $C_i=0.3\ \forall i \in [n]$.}\label{app:sw_43_0.3}
\Description{Social welfare regret three-arm case.}
\end{figure*}

\begin{figure*}[ht!]
\centering
\begin{subfigure}{.33\textwidth}
    \centering
    \includegraphics[width=\linewidth]{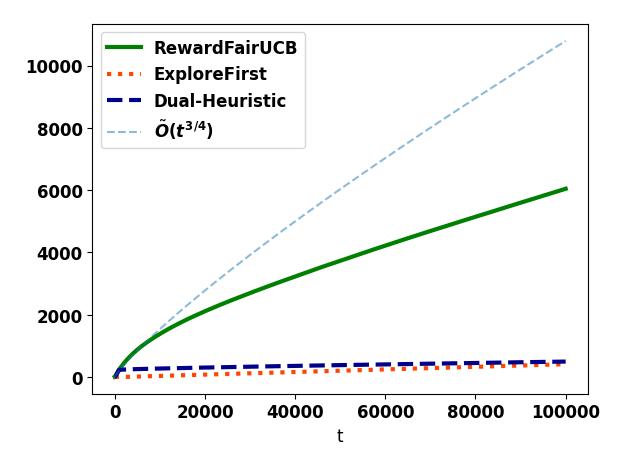}  
\end{subfigure}  
\begin{subfigure}{.33\textwidth}
    \centering
    \includegraphics[width=\linewidth]{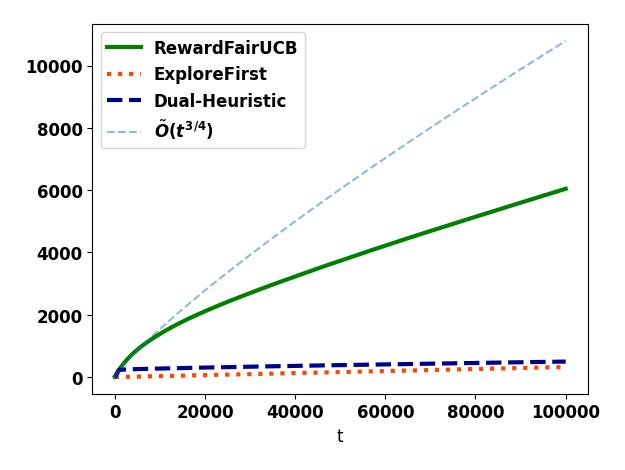}  
\end{subfigure}  
\begin{subfigure}{.33\textwidth}
    \centering
    \includegraphics[width=\linewidth]{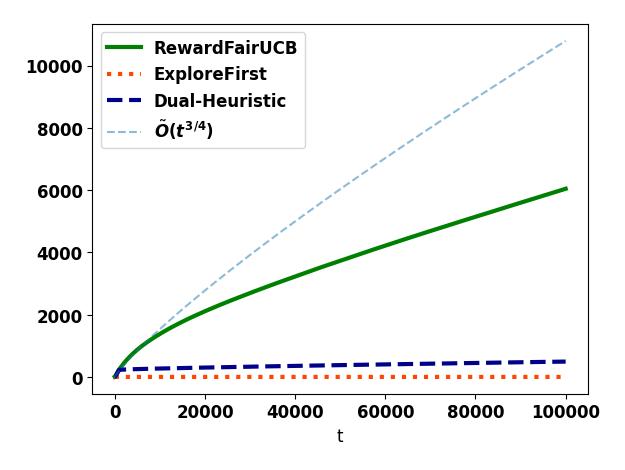}  
\end{subfigure}  
\caption{Fairness regrets vs timesteps with with different $A$ matrices with $n=4$, $m=3$ and $C_i=0.3\ \forall i \in [n]$.}\label{app:fr_43_0.3}
\Description{Fairness regret three-arm case.}
\end{figure*}

\end{document}